%% file: MWRL_improved.tex
\documentclass[letterpaper]{article} 
\usepackage{aaai2027}  
\usepackage[hyphens]{url}
\usepackage{graphicx}
\usepackage{natbib}
\usepackage{caption}
\usepackage{algorithm}
\usepackage{algorithmic}
\usepackage{amsmath}
\usepackage{amssymb}
\usepackage{amsfonts}
\usepackage{amsthm}
\usepackage{bm}
\usepackage{booktabs}
\usepackage{tabularx}
\usepackage{float}
\usepackage{placeins}

\newtheorem{theorem}{Theorem}
\newtheorem{proposition}{Proposition}
\newtheorem{lemma}{Lemma}

\theoremstyle{definition}

\newtheorem*{collapserestate}{Theorem~\ref{thm:collapse} (Relational necessity)}
\newtheorem*{represtate}{Theorem~\ref{thm:rep} (Representation)}
\newtheorem*{supportrestate}{Proposition~\ref{prop:support} (Credit support)}
\newtheorem*{rankrestate}{Proposition~\ref{prop:rankuniform} (Rank-uniform refinement)}
\newtheorem*{groupminrestate}{Proposition~\ref{prop:groupmin} (Group minimality becomes minimality)}

\nocopyright

\newcommand{\upset}[1]{\uparrow\! #1}
\newcommand{\Msf}{\mathcal{M}}

\DeclareMathOperator*{\minop}{min}

\title{Minimal Witness Reinforcement Learning}
\author{
    T.~Y.~Tsui\textsuperscript{\rm 1},
    Zihao Ye\textsuperscript{\rm 2},
    Pengxiang Cai\textsuperscript{\rm 2},
    Yanchao Li\textsuperscript{\rm 2},
    Yuqiang Li\textsuperscript{\rm 2,\textdagger},
    Zhehong Ai\textsuperscript{\rm 2,\textdagger}
}
\affiliations{
    \textsuperscript{\rm 1}University of Pennsylvania\\
    \textsuperscript{\rm 2}Shanghai AI Laboratory\\
    \textsuperscript{\textdagger}Corresponding authors.\\
    \texttt{tuenyue.tsui@gmail.com}
}

\begin{document}
\maketitle

\begin{abstract}
``What are the irreducible conditions that are sufficient to produce an outcome?'' is one of the most common questions that recur across computation and science. Its answers, the minimal sufficient witnesses, are what we mean by explanations, mechanisms and reasons. These problems usually ask for multiple minimal witnesses, yet standard RL methods may reveal only one solution or redundant ones. We formalize this problem as minimal-witness identification and introduce Minimal-Witness
Reinforcement Learning (MWRL). MWRL takes the union of the sets certified by successful proposals sampled from the policy and credits each proposal for the coverage the group union would lose without that proposal. This credit assignment, derived directly from the problem definition, unifies the demands for minimality and recovery of alternatives from a single black-box verifier bit. Under this principle, we derive a value iteration planner that recovers the entire family of witnesses and a policy gradient method that can scale to large language models. Across different experimental settings, MWRL recovers most minimal witnesses, while other methods
return redundant supersets or a single witness.
By making witness families learnable from verifier feedback, MWRL expands the scope of
reinforcement learning beyond single-solution optimization.
Our code is available at https://github.com/TSUITUENYUE/MWRL.
\end{abstract}

\section{Introduction}

\begin{figure*}[t]
\centering
\includegraphics[width=\linewidth]{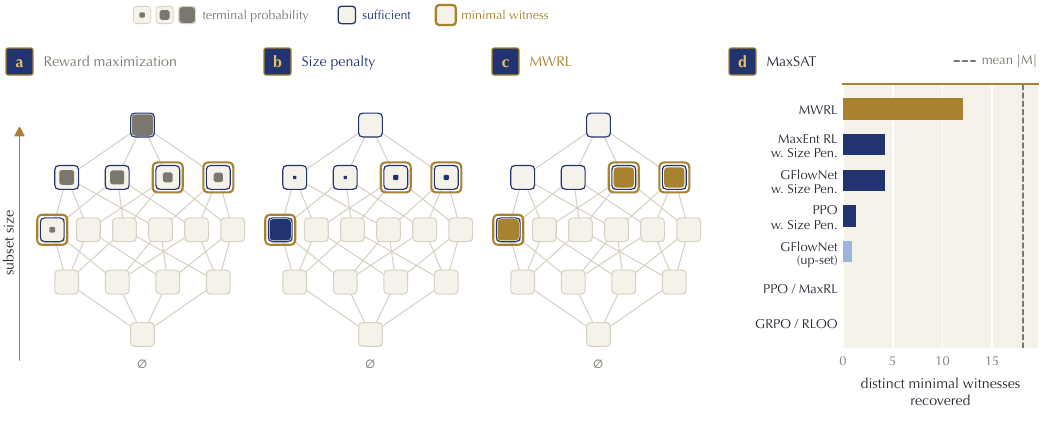}
\caption{From proposal-local rewards to the group-certified region.
\textbf{(a)--(c)} Terminal proposal probability on the subset lattice $2^{\mathcal E}$ for
$\mathcal E=\{a,b,c,d\}$, where the target antichain of minimal witnesses is ringed in gold. A success-only reward spreads
probability across all sufficient supersets (a), whereas adding a size penalty collapses the policy onto the single smallest witness
(b). MWRL credits each successful proposal for the portion of the group's certified region
that it uniquely certifies, concentrating probability directly on the full antichain (c).
\textbf{(d)} Finite-budget recovery on enumerable MaxSAT instances: MWRL recovers most of each instance's antichain, whereas baselines that evaluate each proposal in isolation recover at most a
quarter. The size-penalized MaxEnt RL and GFlowNet bars reflect the best penalty from a hyperparameter
sweep; the dashed vertical line marks the mean antichain size $|\Msf|$.}
\label{fig:failure}
\end{figure*}

Questions of sufficiency recur throughout computation and the natural sciences. In Boolean logic, we ask which conjunctions of literals force a formula to evaluate to true~\citep{quine1952}. In chemical engineering, we ask which combinations of reaction parameters push yields past a target threshold. The same structure appears in interpretability when isolating the internal components that implement a neural network behavior~\citep{conmy2023,wang2023} or the input features sufficient to sustain a model's prediction~\citep{ribeiro2018,carter2019}. In every case, the goal is to identify not merely any sufficient set of factors, but an \textbf{irreducible} one: a subset from which no element can be removed without losing sufficiency. Such an irreducible set is a \emph{minimal sufficient witness}, and most nontrivial tasks admit multiple distinct minimal witnesses.

Minimality here is easy to misunderstand as a preference for minimum size. A sufficient set is minimal when none of its proper subsets is sufficient. Different minimal witnesses are incomparable and together form an antichain of alternatives~\citep{meloux2025}. Minimality therefore defines a valid answer and cannot be replaced by a size penalty in the reward. Another misunderstanding is treating the alternatives in the antichain as a preference for diversity. Diversity methods rely on a metric, behavior descriptor or kernel~\citep{pathak2017,haarnoja2018,lehman2011,mouret2015}. In minimal-witness identification, the subset relation determines which sufficient sets belong in the answer. If the ground-truth answer contains only one witness, the best answer is to only recover that witness. If it contains three, the answer should return all three. Diversity methods, however, keep rewarding more varied solutions without deciding when the answer is complete.

Take a server diagnosis as an example. Assume it goes down when its power supply fails, when both of
its mirrored disks fail, and when one mirrored disk plus a CPU burns out. There are several concepts we can explain here. Sufficiency: the faults $\{\text{disk}_1,\text{power}\}$ together will make the server fail, but the disk failure is redundant in this set because $\{\text{power}\}$ is already sufficient. Once $\{\text{power}\}$ alone is verified, every set of faults containing it is also known to be sufficient. Minimality: the set $\{\text{power},\text{disk}_1\}$ has the same size as the pair
$\{\text{disk}_1,\text{disk}_2\}$, but only the pair is a minimal witness, while $\{\text{power}\}$ alone also counts as a minimal witness. Both $\{\text{disk}_1,\text{CPU}\}$ and $\{\text{disk}_1,\text{disk}_2\}$ are minimal witnesses because each pair causes an outage and neither component alone does. The pairs share a disk, but neither contains the other. Antichain: all irreducible
sets matter. Imagine you are the maintainer of the server. To prevent the outage from happening again, you need to know all the minimal causes, and the antichain is the set of all of them.

When the predicate is available as a formula or constraint system, we have algorithms that enumerate its minimal witnesses. These witnesses appear as prime implicants of Boolean functions~\citep{mccluskey1956}, as minimal diagnoses and unsatisfiable subsets~\citep{reiter1987,marquessilva2013msmp,liffiton2016}, and as minimal hypergraph transversals~\citep{eiter1995,fredman1996,murakami2014}. Other algorithms find them by querying a monotone predicate~\citep{bioch1995,junker2004,janota2016}. These algorithms need a deterministic predicate and run a new search for every context. However, many important settings do not expose such structure. They provide only a verifier that accepts or rejects a proposal, and we have to identify minimal witnesses from this verifier feedback alone.

Learning from binary pass/fail feedback naturally suggests Reinforcement Learning with Verifiable Rewards (RLVR), where a policy proposes candidate subsets and queries the verifier for a reward bit. Yet standard RLVR objectives fail because they \textbf{evaluate each proposal in isolation}. A pure success reward assigns identical value to a minimal witness and to all of its bloated supersets (Figure~\ref{fig:failure}a). Adding a size penalty breaks this tie in favor of smaller sets, causing the policy to collapse onto the single smallest witness while ignoring larger minimal witnesses (Figure~\ref{fig:failure}b). Even entropy-regularized RL and GFlowNets~\citep{haarnoja2018,bengio2021}, which sample proportionally to a local reward, cannot separate a true minimal witness from a non-minimal superset without starving larger minimal witnesses of probability mass. This failure is fundamental: a single verifier call on a single proposal $S$ reveals only whether $S$ is sufficient, not whether $S$ is minimal or whether it duplicates another solution.

Determining whether an accepted proposal contributes new information requires comparing it against other proposals in a rollout group. Under monotone sufficiency, verifying a subset $S$ automatically certifies all of its supersets, its \emph{up-set} in the subset lattice $2^{\mathcal E}$. If a rollout group contains two accepted proposals $S_1 \subset S_2$, the smaller proposal $S_1$ certifies a strictly larger up-set that completely contains the up-set of $S_2$, rendering $S_2$ redundant. Conversely, if two accepted proposals are incomparable ($S_1 \not\subseteq S_2$ and $S_2 \not\subseteq S_1$), each certifies a branch of supersets that the other misses.

We formalize this task as \emph{minimal-witness identification} and introduce Minimal Witness Reinforcement Learning (MWRL). Given a batch of $K$ rollouts, MWRL forms the union of the up-sets certified by all accepted proposals, the \emph{group-certified region}, and measures its volume under a full-support probability measure on $2^{\mathcal E}$. Each proposal is then assigned a leave-one-out \emph{deletion credit} equal to the certified coverage that the group would lose if that proposal were removed. This single geometric mechanism unifies both objectives of the problem:
\begin{enumerate}
\item \textbf{Pruning toward minimality:} Removing an unnecessary element from a sufficient set enlarges its up-set, strictly increasing its certified coverage. Meanwhile, any proposal that is a superset of, or identical to, another accepted proposal in the batch contributes zero marginal coverage and receives zero credit.
\item \textbf{Antichain recovery:} Discovering a minimal witness not yet represented in the batch annexes an uncovered region of the subset lattice, yielding strictly positive deletion credit (Figure~\ref{fig:failure}c).
\end{enumerate}

\paragraph{Contributions.}
\begin{itemize}
\item We formalize \emph{minimal-witness identification}: recovering the complete antichain of subset-minimal sufficient witnesses from black-box binary verifier feedback, and prove that standard proposal-separable, local reward-proportional, and success-probability RL objectives cannot solve this task in general (Section~\ref{sec:problem}).
\item We introduce Minimal Witness Reinforcement Learning (MWRL), characterizing group returns via the certified up-set union and deriving both an exact value-iteration planner and a scalable on-policy group gradient with leave-one-out deletion credit (Section~\ref{sec:method}).
\item We validate MWRL across three distinct domains: prime-implicant enumeration on monotone CNF formulas with known ground-truth antichains; amortized discovery of minimal reaction-condition sets on unseen substrates in Suzuki--Miyaura coupling; and sparse circuit discovery in a frozen Qwen3-1.7B language model under a raw activation-patching verifier that can violate monotonicity (Section~\ref{sec:experiments}).
\end{itemize}

Related work is discussed in Appendix~H, and additional problem domains fitting our formulation are detailed in Appendix~I.

\section{Problem Formulation}
\label{sec:problem}

Let $\mathcal E=\{1,\dots,n\}$ denote a finite ground set of elements (see Table~\ref{tab:notation} for a complete notation reference), and let $s:2^{\mathcal E}\to\{0,1\}$ be a \emph{sufficiency predicate}, where $s(S)=1$ if and only if the subset $S \subseteq \mathcal E$ is sufficient to produce the target outcome. We assume $s$ is monotone: adding elements to a sufficient set cannot break sufficiency, meaning $S\subseteq T$ implies $s(S)\le s(T)$. Any subset $S$ satisfying $s(S)=1$ is called a \emph{witness}. A \emph{minimal witness} is a witness for which no proper subset is also a witness. Because no minimal witness can be a subset of another, the family of all minimal witnesses forms an antichain in the subset lattice $(2^{\mathcal E}, \subseteq)$:
\begin{equation}
\label{eq:antichain}
\Msf \;=\; \minop_{\subseteq}\,\{\,S\subseteq\mathcal E : s(S)=1\,\}.
\end{equation}
In Boolean logic and combinatorics, the members of $\Msf$ correspond to the prime implicants of $s$~\citep{quine1952,crama2011}.

Given a context $c$ with sufficiency predicate $s_c$ and target antichain $\Msf(c)$, \emph{minimal-witness identification} seeks a stochastic policy $\pi_\theta(\cdot\mid c)$ whose terminal subset distribution $q_\theta(\cdot\mid c)$ recovers all members of $\Msf(c)$. We evaluate policies along two complementary axes: the \emph{born-minimal rate} (the fraction of accepted proposals that are minimal witnesses) and \emph{antichain recall} (the fraction of the ground-truth family $\Msf(c)$ recovered as distinct minimal witnesses within a fixed evaluation budget).

Because a deterministic policy can output at most one subset per context, recovering a multi-witness antichain requires a stochastic policy trained on groups of $K$ rollouts $S_{1:K}\overset{\mathrm{iid}}{\sim}q_\theta(\cdot\mid c)$ per update. For the training signal to guide $q_\theta$ toward $\Msf(c)$, the group return must be sensitive to two structural properties: (i) \emph{minimality}, whether any proper subset of an accepted proposal $S_i$ also suffices; and (ii) \emph{inter-proposal containment}, whether $S_i$ merely duplicates or contains another accepted proposal $S_j$ in the same batch.

Standard RL objectives fail to capture these relational properties. We say a group return is \emph{proposal-separable} if it decomposes as a sum of per-proposal rewards, $\sum_{i=1}^K r(S_i,s_c(S_i))$, evaluating each rollout independently of the rest of the batch. Furthermore, a reward function $r(S, s_c(S))$ is \emph{local} if it depends only on the proposed subset $S$ and its single verifier bit $s_c(S)$, remaining invariant across all monotone predicates that agree on $S$. A local reward is inherently blind to minimality: for any accepted subset $S$, the single bit $s_c(S)=1$ looks identical whether $S$ is a minimal witness or a redundant superset of a smaller witness.

\begin{theorem}[Relational necessity]
\label{thm:collapse}
Let a learning method target any of the following terminal distributions $q$ on $2^{\mathcal E}$:
\begin{enumerate}
\item the maximizers of a proposal-separable objective $J_{\mathrm{sep}}(q)=\mathbb{E}_{S_{1:K}\sim q^{\otimes K}}\big[\textstyle\sum_i r(S_i,s_c(S_i))\big]$;
\item the reward-proportional distribution $q(S)\propto r(S,s_c(S))$ for any non-negative local reward $r\ge0$; or
\item the maximizers of any objective that depends on $q$ solely through the marginal success probability $\Pr_{S\sim q}[s_c(S)=1]$.
\end{enumerate}
Then, for any ground set of size $|\mathcal E|\ge3$, there exists a monotone predicate $s_c$ with $|\Msf(c)|\ge2$ for which the target distribution's support fails to match $\Msf(c)$.
\end{theorem}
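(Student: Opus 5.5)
The argument will be constructive and will treat the three cases separately. Every case rests on one observation about local rewards: a local reward $r(S,s_c(S))$ is fixed once $S$ and its verifier bit are fixed, so it is a fixed function of $S$ and the bit, and does not depend on which monotone predicate produced the bit. I will therefore compare pairs of monotone predicates on a ground set with $|\mathcal E|\ge 3$ (wlog containing elements $a,b,c$). Both predicates will have $|\Msf|\ge2$, and they will agree on the bits of certain sets while disagreeing on which of those sets are minimal. Any target distribution built only from local information must then make the same support decision on such a set in both instances, so in at least one instance its support differs from $\Msf(c)$.

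\emph{Case 3} (success probability) is the simplest, and I treat it first. Fix $s_c$ with $\Msf(c)=\{\{a\},\{b,c\}\}$. The point mass on the non-minimal witness $\{a,b\}$ has success probability $1$, the largest possible value. So if the objective depends on $q$ only through $\Pr[s_c(S)=1]$, and is maximized by some $q$ with success probability $p^*$, then any other $q'$ with the same success probability is also a maximizer. When $p^*=1$, the point mass on $\{a,b\}$ is such a $q'$, and its support $\{\{a,b\}\}\neq\Msf(c)$. When $p^*<1$, every maximizer places positive mass on non-witnesses, so its support is not contained in $\Msf(c)$. Either way the maximizer set contains a distribution whose support fails to match.

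\emph{Case 1} (separable): by linearity, $J_{\mathrm{sep}}(q)=K\,\mathbb{E}_{S\sim q}[r(S,s_c(S))]$. Its maximizers are exactly the distributions supported on $\arg\max_S r(S,s_c(S))$. I will take two monotone predicates on the same ground set. The first is $s^{(1)}$ with $\Msf^{(1)}=\{\{a\},\{b,c\}\}$. The second is $s^{(2)}$ with $\Msf^{(2)}=\{\{a,b\},\{b,c\},\{a,c\}\}$, the majority function, which is also monotone and has $|\Msf^{(2)}|=3\ge2$. I then split on the argmax set $A$ of $r$ under $s^{(1)}$.
\begin{itemize}
\item If $A$ contains any set outside $\Msf^{(1)}$, that instance already fails.
\item If $A\supseteq\Msf^{(1)}$ is needed to match, then $\{a\}$ and $\{b,c\}$ must tie, giving $r(\{a\},1)=r(\{b,c\},1)$.
\item Under $s^{(2)}$, the set $\{a\}$ has bit $0$ while $\{a,b\}$, $\{a,c\}$ and $\{b,c\}$ have bit $1$. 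Exact matching therefore forces $r(\{a,b\},1)=r(\{b,c\},1)=r(\{a,c\},1)$ to be maximal. Under $s^{(1)}$, however, $\{a,b\}$ is also accepted with the same value $r(\{a,b\},1)$, which is at least $r(\{b,c\},1)$.
\end{itemize}
So in $s^{(1)}$ the non-minimal set $\{a,b\}$ lies in the argmax alongside $\{b,c\}$, a contradiction. The exact inequality chain is where care is needed. The ground-set elements outside $\{a,b,c\}$ also need handling: I will let them be dummies that never help, so that $\{a,b,x\}$ are non-minimal witnesses.

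\emph{Case 2} (reward-proportional) uses the same pair of predicates. Matching support means $r(S,1)>0$ exactly on minimal witnesses. Under $s^{(2)}$ this requires $r(\{a,b\},1)>0$. Under $s^{(1)}$ the set $\{a,b\}$ is a non-minimal witness with bit $1$, so it receives positive mass and the support fails. The main obstacle is making the case analysis in Case 1 airtight with only this one pair of predicates; if it does not close, I would add a third predicate, for example $\Msf=\{\{a,b\},\{c\}\}$, to pin down the remaining ties.
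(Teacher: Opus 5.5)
Your Cases 2 and 3 are sound. Case 2 is the paper's locality argument with a different pair of predicates: the paper uses $\{\{a,b\},\{d\}\}$ versus $\{\{a\},\{d\}\}$, and your $\{\{a\},\{b,c\}\}$ versus the majority antichain works the same way. Case 3 replaces the paper's two-point maximizer $t\,\delta_{\mathcal E}+(1-t)\,\delta_{\varnothing}$ with a split on $p^*=1$ versus $p^*<1$, which is equally valid.

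The gap is in Case 1. Your contradiction compares the numbers $r(\{a,b\},1)$ and $r(\{b,c\},1)$ across the two predicates $s^{(1)}$ and $s^{(2)}$, so it uses locality of $r$. Item~1 does not assume locality. The statement imposes locality only in item~2, the appendix defines proposal-separability with a context-dependent reward $r_c$, and the paper's proof notes that the Case~1 collapse holds whether or not $r$ is local.

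Consider the predicate-dependent reward $r_c(S,y)=\mathbf{1}[S\in\Msf(c)]$. Its argmax set is exactly $\Msf(c)$ under every predicate, so no comparison of two instances can produce a contradiction. Worse, under your reading of ``support'' as the whole argmax set, the claim would be false for this reward.

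What closes Case~1 in full generality is the existential reading you already adopt in Case~3. Since $J_{\mathrm{sep}}(q)=K\sum_S q(S)\,r(S,s_c(S))$ is linear in $q$, the point mass on any single element of the argmax is a maximizer. Its singleton support cannot equal $\Msf(c)$ when $|\Msf(c)|\ge2$. This is the paper's argument. It works for every predicate with at least two minimal witnesses and needs neither a second predicate nor locality, so your contingency of adding a third predicate is unnecessary.

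Under locality your two-predicate argument does close, and it proves something stronger: for one of the two instances, even the union of supports of all maximizers differs from $\Msf(c)$. But as written it covers only a special case of item~1. It also silently switches to a different meaning of ``support fails to match'' than the one you use in Case~3.
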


Theorem~\ref{thm:collapse} applies to three families of methods (proof in Appendix~A.3):
\begin{itemize}
\item \textbf{Proposal-separable policy gradients}, including PPO, GRPO, and RLOO~\citep{schulman2017,shao2024,ahmadian2024}, optimize an expectation $\mathbb{E}_{S\sim q}[r(S, s_c(S))]$ that is linear in $q$. Note that even though GRPO and RLOO normalize advantages across a group, their underlying expected objective remains linear in $q$ when the base reward $r$ is per-proposal. Consequently, a law that always proposes one highest-reward subset is globally optimal, and it misses all but one witness when $|\Msf(c)|\ge 2$.
\item \textbf{Proportional samplers}, including MaxEnt RL and GFlowNets~\citep{haarnoja2018,bengio2021}, match the distribution $q(S) \propto r(S, s_c(S))$. Because a local reward assigns the exact same weight $r(S,1)$ to an accepted subset $S$ regardless of whether a proper subset of $S$ is also accepted, some monotone predicate makes the sampler place positive probability on sets that are not minimal under it.
\item \textbf{Success-probability objectives} such as Maximum Likelihood RL (MaxRL)~\citep{tajwar2026maxrl} and pass@$k$ depend only on $\Pr_{S\sim q}[s_c(S)=1]$. Because monotonicity guarantees that the full ground set $\mathcal E$ is sufficient whenever any witness exists, concentrating all probability mass on $\mathcal E$ achieves the maximum possible success rate of $1$ while assigning zero probability to every nontrivial minimal witness.
\end{itemize}
Adding a size penalty to a separable objective or proportional sampler leaves the reward local, keeping the method trapped within the failure modes of Theorem~\ref{thm:collapse}. Recovering the complete antichain across all monotone predicates therefore requires a non-separable group return that explicitly compares accepted proposals within a batch.

\begin{table*}[!t]
\centering
\small
\setlength{\tabcolsep}{3pt}
\begin{tabularx}{\textwidth}{@{}l>{\raggedright\arraybackslash}X@{\hspace{1.6em}}l>{\raggedright\arraybackslash}X@{}}
\toprule
Symbol & Meaning & Symbol & Meaning \\
\midrule
\multicolumn{2}{@{}l}{\textbf{General notation}} & \multicolumn{2}{l@{}}{\textbf{Problem formulation}} \\
$\mathcal E,n$ & Ground set and its size & $s_c(S)$ & Sufficiency predicate for context $c$ \\
$S,T,Q\subseteq\mathcal E$ & Proposed or auxiliary subsets & $\Msf(c),L$ & Minimal-witness antichain and its size \\
$2^{\mathcal E},\subseteq$ & Subset lattice and inclusion order & $\minop_{\subseteq}$ & Inclusion-minimal members of a family \\
$|S|$ & Number of elements in $S$ & $F(c)$ & Set of all sufficient subsets \\
$\upset{S},\downarrow S$ & Supersets and subsets of $S$ & $s_\exists$ & Monotone closure of a raw predicate \\
$c$ & Context for the task & $r,J_{\mathrm{sep}}$ & Local reward and separable objective \\
\midrule
\multicolumn{2}{@{}l}{\textbf{Certified regions and coverage}} & \multicolumn{2}{l@{}}{\textbf{Value iteration}} \\
$W,W_K$ & Successful proposals; distinct group successes & $F,b$ & Certified region and commitments left \\
$U(W),U_K$ & Region certified by a set or group & $\Delta_{\mathcal R}(S\mid F)$ & Value of adding $\upset{S}$ beyond $F$ \\
$\mathcal R,\mathcal R_{\#}$ & Region valuation and counting measure & $V_b^\star(F)$ & Optimal value with $b$ commitments \\
$\mu,\mathcal R_\mu$ & Base measure and coverage measure $\mu(U)$ & $F_k,S_k$ & Region and choice at planner step $k$ \\
$\nu(S),p$ & Witness value $\mu(\upset{S})$ and the parameter of $\mu$ & $\lambda$ & Size penalty in scalar planning reference \\
\midrule
\multicolumn{2}{@{}l}{\textbf{Policy and sampling}} & \multicolumn{2}{l@{}}{\textbf{Group value and deletion credit}} \\
$\theta,\pi_\theta$ & Policy parameters and proposal policy & $G(W),J_K$ & Group return and expected objective \\
$x_t,a_t,\tau_i$ & State, action, and trajectory $i$ & $g,\mathcal A(W)$ & Symmetric value and minimal group successes \\
$q_\theta(S\mid c)$ & Terminal proposal law & $D_i$ & Minimal successes excluding proposal $i$ \\
$\psi(\tau),P_\theta(\tau\mid c)$ & Terminal map and trajectory law & $U_{-i},U_{-i,-j}$ & Region without indexed proposals \\
$\rho(\tau)$ & Policy-independent trajectory factors & $A_i,\widetilde A_i$ & Deletion credit and baseline-adjusted credit \\
$K,B$ & Training group size and deployment budget & $b_i$ & Baseline independent of trajectory $i$ \\
$y_i$ & Verifier bit $s_c(S_i)$ & $\widehat A_i,N$ & Monte Carlo credit and sample count \\
$\bar g_N$ & Mean gradient over $N$ rollout groups & $T^{(m)}$ & Base-measure draw for Monte Carlo credit \\
\midrule
\multicolumn{2}{@{}l}{\textbf{Derivations and diagnostics}} & \multicolumn{2}{l@{}}{\textbf{LLM circuit experiment}} \\
$\epsilon$ & Lower bound on each witness's proposal mass & $P_i,P_{S,i},P_{\varnothing,i}$ & Clean, retained, and fully ablated token laws \\
$\rho_\theta(T),p_m$ & Certification and witness probabilities & $\mathrm{KL}(S),\mathrm{KL}_{\varnothing}$ & Mean clean-to-masked and clean-to-empty KL \\
$\mathcal A_q$ & Successful subsets with positive proposal mass & $\alpha$ & Required fraction of the clean-to-empty KL gap \\
$p_t,\bar p$ & Elementwise inclusion probabilities and bound & $\mathrm{KL}_{\mathrm{held}}$ & Normalized held-out divergence retained \\
$\Phi(S)$ & Canonical witness returned by a canonicalizer & $s_0(S)$ & Raw circuit verifier \\
$\sigma$ & Standard deviation in experiment figures & $P$ & Number of probe prompts \\
\bottomrule
\end{tabularx}
\caption{Notation grouped by role in the paper. Symbols used only in appendix derivations
or diagnostics are included for completeness. We omit the context $c$ when fixed; $i,j$ index
proposals in a group, $t$ indexes policy steps, and $k$ indexes planner steps.}
\label{tab:notation}
\end{table*}

\section{Minimal Witness Reinforcement Learning}
\label{sec:method}

\subsection{Monotonicity and the certified region}
In our framework, an episodic rollout $\tau$ terminates at a proposed subset $S \subseteq \mathcal E$, which the verifier scores with a binary bit $y = s_c(S)\in\{0,1\}$. While MWRL applies to any policy parameterization whose rollouts terminate in subsets (Appendix~C.1), our experiments model the policy as a sequential walk on the subset lattice $2^{\mathcal E}$ with state $x_t=(c,S_t,t)$, either adding elements one at a time starting from $\varnothing$ or pruning elements from $\mathcal E$.

MWRL exploits the order-theoretic consequences of monotonicity. Whenever the verifier accepts a terminal subset $S$ ($s_c(S)=1$), monotonicity guarantees that every superset of $S$ is also sufficient; conversely, a rejection ($s_c(S)=0$) rules out all subsets of $S$. In lattice terminology, a single accepted proposal $S$ certifies its entire \emph{up-set}:
\begin{equation*}
\upset{S}=\{T\subseteq\mathcal E:T\supseteq S\},
\end{equation*}
and the complete family of sufficient subsets is the up-set generated by the minimal-witness antichain, $F(c)=\{S:s_c(S)=1\}=\upset{\Msf(c)}$. Given a rollout group of $K$ proposals with verifier outcomes $y_i=s_c(S_i)$, let $W=\{S_i:y_i=1\}$ denote the multiset of successful proposals. Together, the successes in $W$ certify a combined region of the subset lattice, the finitely generated up-set
\begin{equation}
U(W)=\bigcup_{S\in W}\upset{S}.
\label{eq:upset-value}
\end{equation}
This certified region $U(W)$ naturally filters out redundancy. Because subset inclusion reverses under up-sets ($S_1 \subset S_2 \implies \upset{S_1} \supset \upset{S_2}$), a smaller witness certifies a strictly larger region of the lattice than any of its supersets. If $W$ contains two accepted sets $S_1 \subset S_2$, the up-set $\upset{S_2}$ is entirely contained within $\upset{S_1}$. We call such a superset $S_2$ \emph{redundant}. Deleting redundant supersets or duplicate proposals from $W$ leaves the certified region $U(W)$ completely unchanged. The minimal elements of $U(W)$ correspond precisely to the inclusion-minimal members of $W$~\citep{davey2002}, which we call the \emph{group-minimal successes}.

\paragraph{Handling non-monotone raw verifiers.}
In some empirical domains, such as neural circuit ablation, a raw verifier $s$ may occasionally reject a superset of an accepted set due to destructive interference among components. Monotonicity can be restored formally via the \emph{existential closure}
\begin{equation*}
s_\exists(S)=\mathbf{1}[\,\exists Q\subseteq S:\ s(Q)=1\,],
\end{equation*}
which is monotone by construction and shares the exact same minimal witnesses as the raw predicate $s$. In practice, computing $s_\exists(S)$ requires searching over subsets of $S$. Instead, we can run MWRL directly on the raw verifier $s$ and evaluate a tractable local condition: an accepted set $S$ is \emph{1-minimal} if removing any single element $e \in S$ causes the verifier to reject $S \setminus \{e\}$. Under a monotone predicate, 1-minimality is equivalent to global subset-minimality. In our language-model circuit experiments, we train directly on the raw verifier and count only 1-minimal circuits, quantifying empirical monotonicity violations in Appendix~\ref{app:nonmono}.

\subsection{The group return}
Rather than scoring proposals individually, MWRL evaluates a group of rollouts collectively by measuring the volume of their certified region:
\begin{equation}
G(W)=\mathcal{R}(U(W)),
\label{eq:group-return}
\end{equation}
where $\mathcal R$ is a monotone valuation on up-sets. We train the policy to maximize the expected group coverage $J_K(\theta)=\mathbb{E}_{S_{1:K}\sim q_\theta^{\otimes K}}[\mathcal{R}(U_K)]$, where $U_K=U(W)$ is the group-certified region of $K$ rollouts (Figure~\ref{fig:failure}c). Whereas prior work on submodular or global RL optimizes non-additive utilities supplied externally by the environment~\citep{prajapat2024,desanti2024}, MWRL constructs its non-additive group return internally from binary verifier bits and the subset-inclusion order.

\begin{theorem}[Representation]
\label{thm:rep}
Let $g$ be a permutation-invariant group value function on $(S_i,y_i)_{i=1}^K$ that vanishes when no proposal succeeds, depends only on the multiset of successful proposals $W$, and remains invariant under deleting duplicate or redundant successes. Then there exists a function $\mathcal{R}$ on up-sets with $\mathcal{R}(\varnothing)=0$ such that $g=\mathcal{R}(U(W))$. Moreover, $g$ is non-decreasing when new successes are added if and only if $\mathcal{R}$ is monotone. (Proof in Appendix~A.)
\end{theorem}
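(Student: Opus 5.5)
The plan is to define $\mathcal R$ on finitely generated up-sets through their minimal generators. Every up-set $U\subseteq 2^{\mathcal E}$ is finite, so it is generated by its antichain of minimal elements $\min_\subseteq U$, and $U=\upset{\min_\subseteq U}$. Since $g$ depends only on the multiset $W$, write $g=\tilde g(W)$. For a nonempty up-set $U$ we then set
\[
\mathcal R(U)=\tilde g(\min\nolimits_\subseteq U),
\]
viewing the antichain as a multiset in which each element has multiplicity one. We also set $\mathcal R(\varnothing)=0$. For this to be a value $g$ actually attains, we need $|\min_\subseteq U|\le K$. I would either restrict $\mathcal R$ to up-sets whose antichain has at most $K$ elements and extend it arbitrarily (say monotonically) elsewhere, or note that the group value is only ever evaluated on such up-sets.

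The key step is to show $\tilde g(W)=\tilde g(\min_\subseteq W)$ for every multiset $W$. I would prove this by repeatedly deleting duplicates and strict supersets of other members of $W$, each deletion leaving $g$ unchanged by hypothesis. The process terminates at the set of inclusion-minimal members of $W$. As the paper notes, these coincide with the minimal elements of $U(W)$, because every $T\in U(W)$ contains some $S\in W$, and that $S$ contains a minimal member of $W$. Therefore $g=\tilde g(\min_\subseteq U(W))=\mathcal R(U(W))$. When $W=\varnothing$ the vanishing hypothesis gives $g=0=\mathcal R(\varnothing)$. Padding with failed proposals is harmless, since $g$ ignores them, so an antichain of size at most $K$ can always be realized as a full group.

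For the monotonicity equivalence, suppose first that $\mathcal R$ is monotone. Adding successes to $W$ can only enlarge $U(W)$, so $g$ is non-decreasing. Conversely, suppose $g$ is non-decreasing under adding successes, and take up-sets $U\subseteq U'$. Let $A=\min_\subseteq U$ and $A'=\min_\subseteq U'$, and form the group $W=A\cup A'$. Then $W\supseteq A$, so $W$ is obtained from $A$ by adding successes, and $U(W)=U\cup U'=U'$. Hence
\[
\mathcal R(U)=\tilde g(A)\le \tilde g(W)=\mathcal R(U').
\]

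The main obstacle is the budget constraint. The group $A\cup A'$ may exceed $K$ proposals even when each antichain fits separately. I would handle this by instead building the group from $A$ by first adding the elements of $A'$, then deleting members of $A$ that have become redundant. Each $a\in A$ lies in $U'$, so it contains some $a'\in A'$ and is redundant or duplicate once $a'$ is present. Interleaving these additions and deletions keeps the size at most $\max(|A|,|A'|)+1$. If that still overflows, I would interpret $g$ as defined on multisets of arbitrary finite size, which the phrase ``non-decreasing when new successes are added'' implicitly requires anyway.
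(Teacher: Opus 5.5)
Your proposal is correct and is essentially the paper's proof: you define $\mathcal R(U)=g(\minop_{\subseteq}U)$, collapse $W$ to its inclusion-minimal members via the deletion invariance, and use the antichain/up-set bijection, with your group $A\cup A'$ spelling out the converse of the monotonicity claim that the paper asserts in one line. Your budget concern is real and the paper ignores it: for fixed $K$, a group whose successes form an antichain of size $K$ has no free slot for an added success, and the converse genuinely fails (e.g.\ $K=2$, $\mathcal E=\{1,2,3\}$, with $\mathcal R(U)=|U|$ except $\mathcal R(\upset{\{\{1,2\},\{1,3\}\}})=10>\mathcal R(\upset{\{1\}})$, meets every hypothesis), so your fallback of letting $g$ act on finite multisets of arbitrary size is exactly the assumption the statement needs and the paper's proof tacitly makes.
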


Theorem~\ref{thm:rep} establishes that any group objective that properly ignores duplicates and redundant supersets must factor through the certified region $U(W)$. This follows from the one-to-one correspondence between antichains and finitely generated up-sets: two success multisets $W$ and $W'$ share the same group-minimal elements if and only if they generate the exact same up-set $U(W)=U(W')$.

To instantiate $\mathcal R$, we fix a full-support probability measure $\mu$ on $2^{\mathcal E}$, called the \emph{base measure}, and define the coverage measure as $\mathcal{R}_\mu(U)=\mu(U)$. Evaluating $\mathcal{R}_\mu(U(W))$ requires only the proposed terminal subsets, their verifier bits, and the analytical form of $\mu$; no extra verifier queries are needed. (In settings where an inexpensive local search or canonicalizer is available to shrink any accepted subset to a minimal witness, $\mathcal R$ can alternatively simply count distinct canonicalized witnesses; see Appendix~B.)

\subsection{Minimal-witness value iteration}
Before scaling to policy gradients, we show that our coverage objective induces an exact dynamic programming planner on small ground sets. Consider a sequential planner that commits one accepted subset at a time. Its state at step $k$ is the cumulative certified region $F_k \subseteq F(c)$ generated by all previously committed subsets. Committing a new accepted subset $S$ expands the certified region from $F$ to $F \cup \upset{S}$, yielding the marginal coverage gain $\Delta_{\mathcal R}(S\mid F)=\mathcal R(F\cup\upset{S})-\mathcal R(F)$. With $b$ commitment steps remaining, the optimal Bellman value is
\begin{equation}
\begin{aligned}
V_b^\star(F)&=\max_{S:\,s_c(S)=1}\big\{\Delta_{\mathcal R}(S\mid F)+
V_{b-1}^\star(F\cup\upset{S})\big\},\\
V_0^\star(F)&=0.
\end{aligned}
\label{eq:bellman}
\end{equation}
On a finite lattice, backward value iteration solves~\eqref{eq:bellman} for the optimal $b$-witness cover~\citep{bellman1957mdp}. When our goal is to enumerate the complete antichain $\Msf(c)$ without a preset budget, the planner simply selects the greedy marginal maximizer at each step:
\begin{equation}
F_{k+1}=F_k\cup\upset{S_k},\qquad
S_k\in\arg\max_{S:\,s_c(S)=1}\Delta_{\mathcal R}(S\mid F_k),
\label{eq:vi}
\end{equation}
terminating as soon as the maximum marginal gain reaches zero.

\begin{proposition}[Antichain recovery by value iteration]
\label{prop:vi}
Let $\mathcal R=\mathcal R_\mu$ for a full-support measure $\mu$, and assume the maximization in~\eqref{eq:vi} is exact. Starting from $F_0=\varnothing$, the greedy planner~\eqref{eq:vi} selects a distinct, previously undiscovered member of $\Msf(c)$ at every step and terminates in exactly $|\Msf(c)|$ steps with $F^\star=\upset{\Msf(c)}=F(c)$. (Proof in Appendix~B.)
\end{proposition}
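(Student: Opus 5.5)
The plan is an induction on the step index $k$, carrying the invariant that $F_k=\upset{\{M_0,\dots,M_{k-1}\}}$ for distinct members $M_0,\dots,M_{k-1}\in\Msf(c)$. The base case $F_0=\varnothing$ is the empty union, so nothing needs checking there.

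At a step with $F_k\neq F(c)$, I will establish three facts in turn.

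\textbf{Minimal witnesses dominate their supersets in gain.} Take any accepted $S$ with positive gain, and let $M\subseteq S$ be a minimal witness below it; one exists by monotonicity and finiteness. Since $\upset{S}\subseteq\upset{M}$, we have
\[
\Delta_\mu(S\mid F_k)=\mu(\upset{S}\setminus F_k)\le\mu(\upset{M}\setminus F_k)=\Delta_\mu(M\mid F_k).
\]
The key point is that this inequality is \emph{strict} whenever $M\subsetneq S$. The set $M$ itself lies in $\upset{M}$ but not in $\upset{S}$. It also does not lie in $F_k$. Otherwise some $M_j\subseteq M$, and minimality of $M$ would force $M_j=M$, giving $\upset{M}\subseteq F_k$ and $\Delta(S\mid F_k)=0$, which contradicts positive gain. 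Full support of $\mu$ gives $\mu(\{M\})>0$, and strictness follows. Hence every exact maximizer with positive gain is a minimal witness.

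\textbf{The selected witness is new.} Any $M_j$ already chosen has $\upset{M_j}\subseteq F_k$, so its gain is zero. The selected witness therefore differs from $M_0,\dots,M_{k-1}$.

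\textbf{The maximum gain is positive as long as $F_k\neq F(c)$.} Some minimal witness $M\notin\{M_0,\dots,M_{k-1}\}$ must exist. Otherwise $F_k=\upset{\Msf(c)}=F(c)$, using the identity $F(c)=\upset{\Msf(c)}$ stated in the paper. For such an $M$, antichain incomparability gives $M\notin\upset{M_j}$ for every $j$, so $M\notin F_k$. Full support then gives $\Delta_\mu(M\mid F_k)\ge\mu(\{M\})>0$.

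These three facts together preserve the invariant and show that the planner does not stop early. Conversely, once all $|\Msf(c)|$ witnesses have been selected, $F=F(c)$. Every accepted $S$ then satisfies $\upset{S}\subseteq F(c)$, so every gain is zero and the planner terminates after exactly $|\Msf(c)|$ steps with $F^\star=F(c)$.

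The main obstacle is the strictness argument in the first fact, specifically confirming that $M\notin F_k$. It relies on combining the invariant, which says $F_k$ is generated only by minimal witnesses, with minimality of $M$. Ties between distinct new minimal witnesses are harmless, because any maximizer works.
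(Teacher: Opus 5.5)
Your proof is correct and follows essentially the same route as the paper's Appendix~B argument. Both use induction over planner steps, show that every unselected minimal witness has gain at least $\mu(\{M\})>0$ by antichain incomparability, and show that a non-minimal accepted $S$ either has zero gain (it contains an already selected witness) or is strictly beaten by a minimal $M\subsetneq S$ via the point $M\in\upset{M}\setminus(F_k\cup\upset{S})$; your explicit invariant $F_k=\upset{\{M_0,\dots,M_{k-1}\}}$ and the separate zero-gain check for already selected witnesses only spell out steps the paper leaves implicit.
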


The intuition behind Proposition~\ref{prop:vi} highlights why up-set coverage naturally enforces minimality. Any undiscovered minimal witness $M \in \Msf(c) \setminus F_k$ is guaranteed to have a strictly positive marginal gain $\Delta_{\mathcal R_\mu}(M \mid F_k) > 0$, because the singleton lattice point $\{M\}$ itself cannot be covered by the up-set of any other minimal witness. Conversely, if a non-minimal accepted set $S$ has a positive marginal gain, it must contain at least one undiscovered minimal witness $M \subset S$. Because $\upset{M} \supset \upset{S}$ and the point $M \in \upset{M} \setminus (\upset{S} \cup F_k)$ carries strictly positive mass $\mu(\{M\}) > 0$, the minimal witness $M$ always achieves a strictly larger marginal gain than its non-minimal superset $S$. Thus, exact maximization never selects a non-minimal superset and extracts the entire antichain in $|\Msf(c)|$ steps, whereas standard value iteration on a size-penalized scalar reward simply halts after finding the single smallest witness.

\subsection{The MWRL policy gradient}
Because exact value iteration requires maximizing over all $2^n$ subsets, we scale MWRL to large spaces using an on-policy group gradient that optimizes $J_K(\theta)=\mathbb{E}[\mathcal R(U_K)]$. Given a batch of $K$ rollouts terminating at subsets $S_1,\dots,S_K$ with verifier bits $y_i=s_c(S_i)$, let $U_K=\bigcup_{j:y_j=1}\upset{S_j}$ be the region certified by the full group, and let $U_{-i}=\bigcup_{j\ne i,\,y_j=1}\upset{S_j}$ be the region certified when proposal $i$ is excluded. MWRL assigns each proposal $i$ a \emph{deletion credit} equal to its leave-one-out marginal contribution to the group's certified coverage:
\begin{equation}
A_i=\mathcal R(U_K)-\mathcal R(U_{-i}).
\label{eq:advantage}
\end{equation}
Because $\mathcal R(U_{-i})$ does not depend on rollout $\tau_i$, subtracting it from the group return $\mathcal R(U_K)$ acts as an exact, zero-bias control variate in the score-function gradient for $\tau_i$. Under the coverage measure $\mathcal R=\mathcal R_\mu$, the deletion credit equals the probability mass of all supersets certified \emph{exclusively} by proposal $i$:
\begin{equation*}
A_i=y_i\,\mu(\upset{S_i}\setminus U_{-i}).
\end{equation*}
Notice the geometric parallel with the planner: while the planner in~\eqref{eq:vi} scores a candidate $S$ by its marginal gain $\Delta_{\mathcal R}(S \mid F_k)$ against previously committed witnesses, the policy gradient in~\eqref{eq:advantage} scores each rollout $S_i$ by its marginal gain $\Delta_{\mathcal R}(S_i \mid U_{-i})$ against its peers in the current rollout batch (Figure~\ref{fig:credit}). Because overlapping up-sets ($\upset{S}\cap\upset{T}=\upset{(S\cup T)}$) are counted only once in the union, $\mathcal R_\mu(U(W))$ is a monotone submodular function of the accepted proposals~\citep{nemhauser1978}, and $A_i$ is its exact discrete derivative.

\paragraph{Choosing the base measure $\mu$.}
The base measure $\mu$ governs how strongly MWRL rewards pruning an accepted proposal to a smaller subset. Define the \emph{witness value} of a single subset $S$ as the mass of its up-set, $\nu(S):=\mu(\upset{S})=\Pr_{T\sim\mu}[T\supseteq S]$. Under the uniform measure on $2^{\mathcal E}$ (which includes each element independently with probability $1/2$), $\nu(S) = 2^{-|S|}$, so removing one element doubles a proposal's witness value. More generally, we ask that removing a redundant element yields a consistent multiplicative reward boost regardless of the subset's current size $|S|$.

\begin{proposition}[Rank-uniform refinement]
\label{prop:rankuniform}
Suppose $\nu(S)=\mu(\upset{S})$ depends on $S$ only through its cardinality $|S|$. Then the single-element refinement ratio $\nu(|S|)/\nu(|S|{+}1)$ is constant across all cardinalities $|S|$ if and only if $\nu(S)=p^{|S|}$ for some parameter $p\in(0,1)$, which is uniquely realized by the product measure $\mu_p$ that includes each element $e \in \mathcal E$ independently with probability $p$. (Proof in Appendix~D.)
\end{proposition}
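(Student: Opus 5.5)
The argument has two parts: a functional-equation step that pins down $\nu$ as a function of cardinality, and a Möbius-inversion step that identifies $\mu$ from $\nu$.

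\textbf{From constant ratio to a power law.} Write $\nu(S)=\phi(|S|)$ for $k=0,\dots,n$. Because $\upset{\varnothing}=2^{\mathcal E}$ and $\mu$ is a probability measure, $\phi(0)=\nu(\varnothing)=1$. Suppose the ratio $\phi(k)/\phi(k+1)$ equals a constant $r$ for all $0\le k<n$, and set $p:=1/r$. Induction on $k$ gives $\phi(k)=p^k$, so $\nu(S)=p^{|S|}$.

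To place $p$ in $(0,1)$, use full support. We have $\nu(\{e\})<\nu(\varnothing)=1$, since the set $\varnothing$ lies in $\upset{\varnothing}\setminus\upset{\{e\}}$ and has positive mass; hence $p<1$. Likewise $\nu(\mathcal E)=\mu(\{\mathcal E\})>0$, so $p>0$. This also shows every $\phi(k)$ is nonzero, so the ratio is well defined. The converse is immediate: if $\nu(S)=p^{|S|}$, every ratio equals $1/p$.

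\textbf{Realization and uniqueness of $\mu$.} For $\mu_p$, independence gives $\mu_p(\upset{S})=\Pr[S\subseteq T]=\prod_{e\in S}p=p^{|S|}$, so $\mu_p$ realizes the power law. For uniqueness, note that the map $\mu\mapsto\nu$, $\nu(S)=\sum_{T\supseteq S}\mu(\{T\})$, is zeta transformation on the Boolean lattice, which is invertible. Möbius inversion gives
\begin{equation*}
\mu(\{T\})=\sum_{S\supseteq T}(-1)^{|S\setminus T|}\,\nu(S).
\end{equation*}
So $\nu$ determines $\mu$, and any measure with $\nu(S)=p^{|S|}$ must equal $\mu_p$. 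As a check, substituting gives
\begin{equation*}
\mu(\{T\})=p^{|T|}\sum_{j=0}^{n-|T|}\binom{n-|T|}{j}(-p)^j=p^{|T|}(1-p)^{n-|T|},
\end{equation*}
which is exactly $\mu_p$.

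\textbf{Main obstacle.} The difficulty lies in interpreting the statement, not in any computation. The constant ratio must be read over the consecutive ranks $0,\dots,n$ that actually occur. The boundary facts $\phi(0)=1$ and $p\in(0,1)$ come from $\mu$ being a full-support probability measure. Uniqueness of $\mu$ needs $\nu$ on all subsets, which the cardinality hypothesis supplies, together with Möbius inversion.
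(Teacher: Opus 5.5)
Your proof is correct, and its first half follows the paper. The paper also normalizes $\nu(0)=1$, inducts to $\nu(k)=p^k$, invokes full support to force $p\in(0,1)$, and cites the product-measure lemma to show that $\mu_p$ realizes $p^{|S|}$. Your boundary argument spells out the paper's one-line appeal to ``full support and strict monotonicity.'' Mass at $\varnothing$ gives $p<1$, and mass at $\mathcal E$ gives $p>0$. Since $\mathcal E$ lies in every up-set, that mass also makes every $\nu(k)$ positive before any ratio is formed.

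The genuine difference is uniqueness. The paper's proof stops once $\mu_p$ is shown to realize $p^{|S|}$, which proves existence but does not exclude other measures with the same up-set values. You observe that $\mu\mapsto\nu$ is the zeta transform on $2^{\mathcal E}$, inverted by $\mu(\{T\})=\sum_{S\supseteq T}(-1)^{|S\setminus T|}\nu(S)$. This shows that $\nu$ determines $\mu$, and it yields $\mu(\{T\})=p^{|T|}(1-p)^{n-|T|}$ explicitly.

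So your argument proves the ``uniquely realized'' clause, which the paper asserts but does not prove, at the cost of one standard lemma (M\"obius inversion on the Boolean lattice).
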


Under this \emph{geometric base measure} $\mu_p$, pruning one element from an accepted subset multiplies its witness value by $1/p > 1$ at every rank of the lattice. Unless stated otherwise, we fix $p=0.7$ (see Appendix~E.10 and the sensitivity sweep in Section~\ref{sec:experiments}).

\paragraph{Policy gradient and leave-two-out baseline.}
Using the deletion credit $A_i$, the unbiased policy gradient~\citep{williams1992,sutton1999} of the group objective $J_K(\theta)=\mathbb{E}[\mathcal R(U_K)]$ is
\begin{equation*}
\nabla_\theta J_K(\theta)=\mathbb{E}\Big[\sum_{i=1}^K A_i\sum_{t}
\nabla_\theta\log\pi_\theta(a_{i,t}\mid x_{i,t})\Big],
\end{equation*}
as derived in Appendix~C and summarized in Algorithm~\ref{alg:mwrl}. Because $\mathcal R$ is monotone, the raw deletion credit $A_i$ is always non-negative; using $\widetilde A_i = A_i$ directly yields the leave-one-out (\emph{l1o}) estimator. To allow negative advantages that actively push down redundant trajectories without introducing gradient bias, we can subtract a baseline $b_i$ computed solely from the remaining $K-1$ trajectories~\citep{mnih2016,ahmadian2024}. Our leave-two-out (\emph{l2o}) estimator subtracts the average marginal contribution among the other $K-1$ proposals:
\begin{equation*}
\widetilde A_i
=A_i-\Big[\mathcal R(U_{-i})-\frac{1}{K-1}\sum_{j\ne i}\mathcal R\big(U_{-i,-j}\big)\Big],
\end{equation*}
where $U_{-i,-j}$ denotes the region certified when both proposals $i$ and $j$ are omitted (see Appendix~C and Figure~\ref{fig:bias}).

\begin{algorithm}[t]
\caption{On-policy MWRL.}
\label{alg:mwrl}
\begin{algorithmic}[1]
\REQUIRE policy $\pi_\theta$, group size $K$, valuation $\mathcal R$, verifier access $s_c$
\FOR{each update}
\STATE roll out $\tau_1,\dots,\tau_K$ under $\pi_\theta$ for context $c$, yielding terminal
subsets $S_1,\dots,S_K$
\STATE query the verifier once per rollout: $y_i=s_c(S_i)$
\STATE $U_K\leftarrow\bigcup_{j:\,y_j=1}\upset{S_j}$
\STATE $U_{-i}\leftarrow\bigcup_{j\ne i,\,y_j=1}\upset{S_j}$ for each $i \in \{1,\dots,K\}$
\STATE $A_i\leftarrow\mathcal R(U_K)-\mathcal R(U_{-i})$
\STATE $\widetilde A_i\leftarrow A_i$ (optionally centered via the unbiased l2o baseline)
\STATE update $\theta$ by ascending $\sum_{i,t}\widetilde A_i\,\nabla_\theta
\log\pi_\theta(a_{i,t}\mid x_{i,t})$
\ENDFOR
\end{algorithmic}
\end{algorithm}

\begin{figure}[t]
\centering
\includegraphics[width=\linewidth]{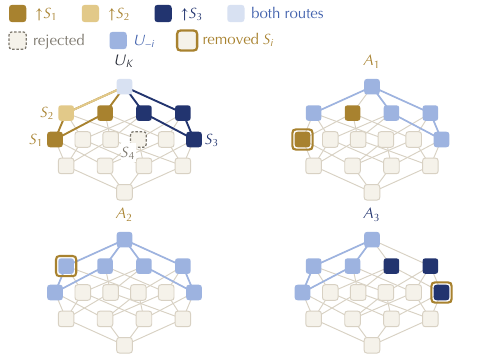}
\caption{\textbf{Deletion credit on the subset lattice}
($\mathcal E=\{a,b,c,d\}$; vertical height corresponds to subset size $|S|$). An accepted proposal certifies its up-set (all supersets above it); thus the smaller witness $S_1=\{a,b\}$ certifies a strictly larger up-set than its redundant superset $S_2=\{a,b,c\}$. In a batch where $S_1$ and $S_3$ are group-minimal successes, $S_2$ is a redundant superset, and $S_4$ fails verification, the top-left lattice shows the combined certified region $U_K$. In the bottom panels for each $A_i$, light lapis nodes mark $U_{-i}$ (the region still certified by the rest of the batch when $S_i$ is removed), and colored nodes highlight $\upset{S_i}\setminus U_{-i}$, the sets whose certification is lost without $S_i$. The base measure $\mu$ of this colored region gives the deletion credit $A_i$.}
\label{fig:credit}
\end{figure}

\begin{proposition}[Credit support]
\label{prop:support}
If $\mathcal R=\mathcal R_\mu$ for a full-support measure $\mu$, then for each rollout $i \in \{1,\dots,K\}$,
\begin{equation*}
A_i>0
\iff
y_i=1\ \text{and}\ \nexists j\ne i:\ y_j=1,\ S_j\subseteq S_i.
\end{equation*}
\end{proposition}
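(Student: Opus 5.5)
The plan is to reduce the claim to a statement about a single lattice point, using the closed form of the deletion credit under $\mathcal R=\mathcal R_\mu$. I would first re-derive the identity
\begin{equation*}
A_i=\mu(U_K)-\mu(U_{-i})=\mu(U_K\setminus U_{-i}),
\end{equation*}
which holds because $U_{-i}\subseteq U_K$ and $\mu$ is additive.

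If $y_i=0$, then proposal $i$ contributes nothing to the union. Hence $U_K=U_{-i}$ and $A_i=0$, which settles the case where the right-hand side fails because $y_i=0$. If $y_i=1$, then $U_K=U_{-i}\cup\upset{S_i}$, so $A_i=\mu(\upset{S_i}\setminus U_{-i})$, in agreement with the expression stated before the proposition. From here on I assume $y_i=1$. Since $\mu$ has full support on the finite lattice $2^{\mathcal E}$, $A_i>0$ holds exactly when the set $\upset{S_i}\setminus U_{-i}$ is nonempty.

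The key step is to show that $\upset{S_i}\setminus U_{-i}\neq\varnothing$ if and only if $S_i\notin U_{-i}$, by testing the bottom point $S_i$ itself. If $S_i\notin U_{-i}$, then $S_i$ is a witness of nonemptiness, since $S_i\in\upset{S_i}$. Conversely, suppose $S_i\in U_{-i}$. Then $U_{-i}$ is an up-set (a union of up-sets), so it contains every $T\supseteq S_i$. This gives $\upset{S_i}\subseteq U_{-i}$, and the difference is empty.

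It remains to unpack $S_i\in U_{-i}$. By definition this means there is some $j\neq i$ with $y_j=1$ and $S_i\in\upset{S_j}$, that is, $S_j\subseteq S_i$. So $A_i>0$ exactly when $y_i=1$ and no other accepted $S_j$ is contained in $S_i$. The case $S_j=S_i$ (duplicates) is handled automatically, because containment is non-strict. This makes duplicates receive zero credit, as the statement requires.

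None of these steps is a serious obstacle. The only point needing care is the reduction to the single point $S_i$: it uses the up-closure of $U_{-i}$, and positivity uses full support only at $\mu(\{S_i\})>0$. I would state both of these explicitly.
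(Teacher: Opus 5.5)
Your proposal is correct and follows the paper's proof essentially step for step: write $A_i=y_i\,\mu(\upset{S_i}\setminus U_{-i})$, use that $U_{-i}$ is an up-set to reduce emptiness of the difference to $S_i\in U_{-i}$, unpack this as some accepted $S_j\subseteq S_i$ with $j\ne i$, and invoke full support for positivity. Your remark that full support is needed only through $\mu(\{S_i\})>0$ is slightly sharper than the paper's appeal to positive mass on every nonempty set, but the argument is the same.
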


Proposition~\ref{prop:support} shows that deletion credit directly operationalizes the subset-inclusion order defining $\Msf(c)$ in~\eqref{eq:antichain}. Within any batch, positive gradient signal flows \emph{exclusively} to strict group-minimal successes; any duplicate proposal ($S_j = S_i$) or redundant superset ($S_j \subset S_i$) receives $A_i = 0$. Notice how this enables progressive refinement during training: early on, a non-minimal sufficient set $S_i$ may receive $A_i > 0$ if no smaller subset was sampled in the same batch, teaching the policy where the sufficient region lies. As soon as the policy discovers a proper subset $S_j \subset S_i$ within the batch, credit on $S_i$ drops to zero, and $S_j$ receives its own marginal credit. As the group size $K$ increases, group-minimality converges rapidly to global minimality:

\begin{proposition}[Group minimality becomes minimality]
\label{prop:groupmin}
Let $W_K$ denote the set of distinct accepted proposals in a batch of size $K$. If the policy assigns probability $q(m)\ge\epsilon>0$ to every minimal witness $m\in\Msf(c)$, where $L=|\Msf(c)|$, then the group-minimal successes match the ground-truth antichain, $\minop_{\subseteq}W_K=\Msf(c)$, with probability at least $1-Le^{-K\epsilon}$. (Proof in Appendix~B.)
\end{proposition}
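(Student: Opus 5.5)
The plan is to reduce the equality $\minop_{\subseteq}W_K=\Msf(c)$ to a single covering event, namely that every minimal witness appears at least once among the $K$ rollouts, and then bound the failure of that event by a union bound.

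First, the deterministic step. Let $E$ be the event that for every $m\in\Msf(c)$ there is some $i$ with $S_i=m$. On $E$, each $m$ is accepted, since $s_c(m)=1$, so $m\in W_K$. Every element of $W_K$ is a witness, because it was accepted by the verifier. By monotonicity and finiteness, every witness $S$ contains some $m\in\Msf(c)$, since $F(c)=\upset{\Msf(c)}$.

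\begin{itemize}
\item Any $S\in W_K\setminus\Msf(c)$ strictly contains such an $m$. That $m$ lies in $W_K$ on $E$, so $S$ is not inclusion-minimal in $W_K$.
\item Each $m\in\Msf(c)\cap W_K$ is inclusion-minimal in $W_K$. Any $S'\in W_K$ with $S'\subseteq m$ is a witness, and minimality of $m$ among all witnesses forces $S'=m$.
\end{itemize}

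Hence on $E$ we get $\minop_{\subseteq}W_K=\Msf(c)$. This is the same order-theoretic fact used for Proposition~\ref{prop:support} and Theorem~\ref{thm:rep}: the minimal elements of $U(W)$ are the minimal members of $W$. Here, on $E$, $U(W_K)=\upset{\Msf(c)}$, so its minimal elements form $\Msf(c)$.

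Second, the probabilistic step. The rollouts are i.i.d.\ from $q$, so for a fixed $m$ we have $\Pr[\forall i:\ S_i\neq m]=(1-q(m))^K\le(1-\epsilon)^K\le e^{-K\epsilon}$, using $1-x\le e^{-x}$. A union bound over the $L$ members of $\Msf(c)$ gives $\Pr[E^c]\le Le^{-K\epsilon}$. Combining this with the deterministic step yields the claimed bound $1-Le^{-K\epsilon}$.

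I do not expect a real obstacle. The only point needing care is the deterministic inclusion: it relies on every accepted set containing a minimal witness, which uses monotonicity of $s_c$ and finiteness of $\mathcal E$. I would also note that the bound is vacuous unless $K\epsilon>\log L$, so no independence between witness events is needed, and the union bound suffices.
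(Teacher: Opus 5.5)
Your proposal is correct and follows the paper's proof essentially step for step: a union bound over the $L$ witnesses using $(1-\epsilon)^K\le e^{-K\epsilon}$, then the deterministic observation that once $\Msf(c)\subseteq W_K$, every non-minimal success strictly contains a witness lying in $W_K$, while each $m\in\Msf(c)$ remains inclusion-minimal in $W_K$. One small correction: your closing remark links the lack of independence to vacuity, but the union bound never needs independence between the events, whether or not the resulting bound is vacuous.
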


Because a given minimal witness $m \in \Msf(c)$ is missed in $K$ independent draws with probability at most $(1-\epsilon)^K \le e^{-K\epsilon}$, a union bound over all $L$ witnesses yields Proposition~\ref{prop:groupmin}. Whenever all $L$ minimal witnesses appear in the batch, every non-minimal success contains at least one of them and receives strictly zero deletion credit (Proposition~\ref{prop:support}). At test time, when we sample the frozen policy under a deployment budget of $B$ proposals, replacing $K$ with $B$ in Proposition~\ref{prop:groupmin} bounds the probability of recovering the complete antichain (Appendix~B).

\paragraph{Fast shared-sample Monte Carlo credit.}
Computing $A_i = \mu(\upset{S_i} \setminus U_{-i})$ exactly requires accounting for overlaps among the up-sets of group-minimal successes. Exact inclusion--exclusion scales exponentially in the number of group-minimal successes, so for large batches we estimate all deletion credits and l2o baselines simultaneously using a single shared batch of $N$ auxiliary subsets drawn from $\mu_p$. Evaluating whether a sampled subset $T^{(m)} \sim \mu_p$ falls in $\upset{S_i} \setminus U_{-i}$ requires only fast bitwise subset checks in $O(NK)$ time and \textbf{zero} additional verifier calls (Appendix~D). We test the accuracy of its credits and policy gradients in the ablations.

\section{Experiments}
\label{sec:experiments}

Unless stated otherwise, we evaluate raw policies at fixed proposal budgets using
the metrics of Section~\ref{sec:problem}.

\subsection{Prime implicant enumeration}
We begin with monotone formulas in propositional logic, whose minimal witnesses can all
be enumerated. Like a fixed maze used to test an RL method, each formula gives us
a controlled instance with a known target: we can tell whether a policy found the whole family
or stopped after one valid solution, and we can compare it with the value iteration planner.
A monotone $k$-CNF instance induces the sufficiency predicate
$s(S)=[\,S\text{ satisfies every clause}\,]$, whose minimal witnesses are the prime
implicants of the monotone function, equivalently the minimal hitting sets of the
clauses~\citep{berge1989,eiter1995,eiter2003} (the MaxSAT benchmark of Appendix~E). We take $n=14$ with $16$ to $19$ minimal witnesses per
instance (mean $|\Msf|=18.1$) of sizes 2 to 4, and average over instances.
The policy adds variables one at a time and chooses when to stop.

Under a success-only reward, policies are rewarded only for their probability
of success. The policies trained with it produce redundant supersets. With a size penalty, the policy favors the smallest witness and recovers about one minimal
witness per instance.
GFlowNet~\citep{bengio2021,malkin2022} targets a law proportional to its reward and
samples many sufficient sets, but many are non-minimal supersets, leaving most of
the antichain uncovered. We add a size penalty to GFlowNet and to maximum-entropy (MaxEnt)
RL~\citep{haarnoja2018}, and both targets stay proportional to local rewards; at the best
penalty strength both recover about a quarter of the antichain, and most of their
successful proposals remain non-minimal supersets (Appendix~E.12). MWRL refines proposals along containment chains and
recovers incomparable witnesses, covering most of the antichain
(Table~\ref{tab:maxsat}). We also vary antichain size, witness size, and overlap on
planted instances (Appendix~E.9).

\begin{table}[!htbp]
\centering
\small
\setlength{\tabcolsep}{0.7pt}
\begin{tabular}{@{}lcccc@{}}
\toprule
Method & \shortstack{born-min.\\rate {\scriptsize$\uparrow$}} & \shortstack{antichain\\recall {\scriptsize$\uparrow$}} & \#found {\scriptsize$\uparrow$} & raw $|S|$ \\
\midrule
PPO / MaxRL         & $0.00{\scriptstyle\pm}0.00$ & $0.00{\scriptstyle\pm}0.00$ & $0.0{\scriptstyle\pm}0.0$  & $12.0{\scriptstyle\pm}0.0$ \\
GRPO / RLOO         & $0.00{\scriptstyle\pm}0.00$ & $0.00{\scriptstyle\pm}0.00$ & $0.0{\scriptstyle\pm}0.0$  & $12.0{\scriptstyle\pm}0.0$ \\
GFlowNet (success)  & $0.01{\scriptstyle\pm}0.00$ & $0.02{\scriptstyle\pm}0.02$ & $0.4{\scriptstyle\pm}0.3$  & $7.8{\scriptstyle\pm}0.1$ \\
GFlowNet (up-set)   & $0.01{\scriptstyle\pm}0.00$ & $0.05{\scriptstyle\pm}0.02$ & $0.9{\scriptstyle\pm}0.3$  & $7.0{\scriptstyle\pm}0.4$ \\
\midrule
\multicolumn{5}{l}{\emph{with a size penalty}}\\
PPO                 & $1.00{\scriptstyle\pm}0.00$ & $0.07{\scriptstyle\pm}0.01$ & $1.3{\scriptstyle\pm}0.1$  & $2.8{\scriptstyle\pm}0.0$ \\
MaxEnt RL           & $0.12{\scriptstyle\pm}0.01$ & $0.24{\scriptstyle\pm}0.02$ & $4.3{\scriptstyle\pm}0.3$ & $4.8{\scriptstyle\pm}0.1$ \\
GFlowNet            & $0.07{\scriptstyle\pm}0.00$ & $0.24{\scriptstyle\pm}0.01$ & $4.3{\scriptstyle\pm}0.2$ & $5.0{\scriptstyle\pm}0.1$ \\
\midrule
MWRL                & $\mathbf{0.76{\scriptstyle\pm}0.01}$ & $\mathbf{0.67{\scriptstyle\pm}0.01}$ & $\mathbf{12.1{\scriptstyle\pm}0.3}$ & $4.0{\scriptstyle\pm}0.1$ \\
\midrule
\multicolumn{5}{l}{\emph{planning references (full information)}}\\
Scalar VI           & $1.00$ & $0.06$ & $1.0$  & $2.8$ \\
Minimal-witness VI  & $1.00$ & $1.00$ & $18.1$ & $4.1$ \\
\bottomrule
\end{tabular}
\caption{Prime-implicant recovery with ground truth. Columns report born-minimal
rate, antichain recall, the number of distinct witnesses found, and raw proposal size.
Scalar VI is value iteration on a size-penalized scalar reward and returns one witness. Minimal-witness VI plans
on $(F,b)$ and recovers the full antichain at $b=|\Msf|$
(Appendix~E.4). Learned rows report mean $\pm$ standard deviation over
three policy seeds after averaging the eight instances (training configuration in
Appendix~E.3). The size-penalized MaxEnt RL and GFlowNet rows report the
best setting of a sweep over the penalty strength, trained with $57{,}600$ verifier calls
per formula.}
\label{tab:maxsat}
\end{table}

\subsection{Scientific variable identification}
\begin{figure}[t!]
\centering
\includegraphics[width=\linewidth]{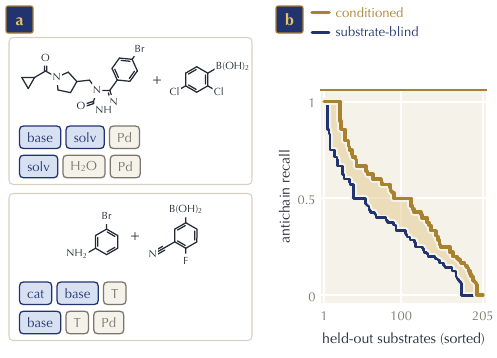}
\caption{Amortized discovery of minimal reaction-condition sets on the Suzuki
coupling. \textbf{(a)} Two held-out substrate pairs with their ground-truth antichains.
Each row of chips names one minimal set of dimensions allowed to vary from the
baseline (cat catalyst system, solv solvent, Pd loading, T temperature). \textbf{(b)} Recall on each held-out substrate
within the window of 32 distinct condition sets, with the values of each policy sorted in decreasing order. The
shaded gap between the conditioned and the substrate-blind policy is the recall that the
substrate fingerprint adds.}
\label{fig:amortization}
\end{figure}

We train one policy across many Suzuki--Miyaura substrate pairs and test it on
pairs withheld during training. Each pair is a context, and the policy receives a
fingerprint of the substrate pair as input.

The variables are 14 reaction-condition dimensions defined relative to a shared
baseline protocol. For a substrate pair $c$, a subset $S$ is sufficient when the
benchmark contains a reaction that changes only dimensions in $S$ and reaches the
target yield. Conditions outside $S$ remain at baseline. A minimal witness is therefore an inclusion-minimal set of dimensions that can be
varied to achieve the target yield. One substrate pair can admit several
incomparable minimal witnesses.

The verifier evaluates this predicate by closed-world lookup in the
benchmark reaction table. It returns $s_c(S)=1$ if and only if a tabulated reaction
with deviation set contained in $S$ reaches the yield threshold; unlisted assignments
count as failures. MWRL receives only this verifier bit as the reward. In a wet-lab deployment, the same verifier can be
implemented by Bayesian optimization restricted to $S$, with each query executing a
reaction and measuring its yield.

We build the dataset by augmenting real literature reactions from a curated
catalog~\citep{chemrang2026} that supplies the substrate pairs, products, and condition
vocabulary. However, the catalog does not contain complete condition panels. We construct the panels using mechanistic
chemical rules calibrated on measured flow-chemistry screens~\citep{perera2018} (Appendix~F).

We show that the conditioned policy recovers minimal condition sets for substrate pairs it
never saw during training. Its held-out recall is close to its seen recall and
reaches a large fraction of the recall obtained by training a separate policy for
each substrate (Table~\ref{tab:amortization}, Figure~\ref{fig:amortization}). Its
born-minimal rate also exceeds that of these separately trained policies. Without
the substrate fingerprint, recall falls to about half their recall; a scalar reward does not recover any minimal witness.

\begin{table}[!htbp]
\centering
\small
\setlength{\tabcolsep}{2pt}
\begin{tabular}{@{}lcccc@{}}
\toprule
Method & born-min. & \multicolumn{2}{c}{recall {\scriptsize$\uparrow$}} & \#found {\scriptsize$\uparrow$} \\
\cmidrule(lr){3-4}
       & rate {\scriptsize$\uparrow$} & seen & held-out & \\
\midrule
Per-substrate & $0.20$ & $0.67$ & --- & $4.9$ \\
\midrule
\multicolumn{5}{@{}l}{\emph{Amortized MWRL}} \\
\quad Conditioned & $0.26{\scriptstyle\pm}0.02$ & $0.50{\scriptstyle\pm}0.02$ & $\mathbf{0.48{\scriptstyle\pm}0.03}$ & $\mathbf{3.6{\scriptstyle\pm}0.2}$ \\
\quad Substrate-blind & $0.20{\scriptstyle\pm}0.01$ & $0.37{\scriptstyle\pm}0.01$ & $0.34{\scriptstyle\pm}0.02$ & $2.6{\scriptstyle\pm}0.1$ \\
\midrule
Scalar RL & $0.00{\scriptstyle\pm}0.00$ & $0.00{\scriptstyle\pm}0.00$ & $0.00{\scriptstyle\pm}0.00$ & $0.0{\scriptstyle\pm}0.0$ \\
\bottomrule
\end{tabular}
\caption{Amortized recovery of minimal reaction-condition sets within a window of 32
distinct condition sets per substrate. We report antichain recall on substrates seen in training
and on held-out substrates, and the held-out column measures amortization. The column \#found
counts distinct minimal witnesses per held-out context. The two amortized rows train the
same MWRL objective and differ only in access to the substrate fingerprint. Amortized rows
give the mean $\pm$ standard deviation over three seeds that control both the substrate split
and the policy. Per-substrate search retrains for each substrate to mark the ceiling, and
nothing is held out from it.}
\label{tab:amortization}
\end{table}

\subsection{LLM sparse circuit discovery}
\label{sec:circuits}
We search subsets of attention heads and MLP blocks in a frozen Qwen3~\citep{qwen3}. To
test a subset, we give every component outside it the activation it has on another prompt
of the same task. We measure the KL divergence between the resulting next-token distribution and
that of the unmodified model. The subset is accepted when this divergence stays
below $1-\alpha=0.2$ times the corresponding divergence with every component
replaced~\citep{conmy2023}.
We use the raw verifier even though it can reject a superset of an accepted
circuit (Appendix~\ref{app:nonmono}).

We compare recovered circuits with circuit-discovery and pruning methods on MMLU
under matched sparsity (Appendix~G.1).

On Qwen3-1.7B ($476$ components), we evaluate the dense model on all $57$ MMLU
subjects~\citep{hendrycks2021} and retain the ten on which the correct answer receives higher probability than a foil answer on at least
$0.92$ of the probes. We compare against top-$k$ selection by EAP attribution~\citep{syed2024}, weight
magnitude, and the activation-aware Wanda score~\citep{sun2024}. We also include a
per-subject HardConcrete mask trained through differentiable
ablation~\citep{louizos2018}, greedy ACDC-style faithfulness
pruning~\citep{conmy2023}, and random-$k$. Every method uses the same
subject-specific active-component budget, except ACDC, which uses its converged size.
We discover circuits on one prompt split, then freeze them and evaluate them on
questions drawn from disjoint dataset rows.

\begin{figure*}[t]
\centering
\includegraphics{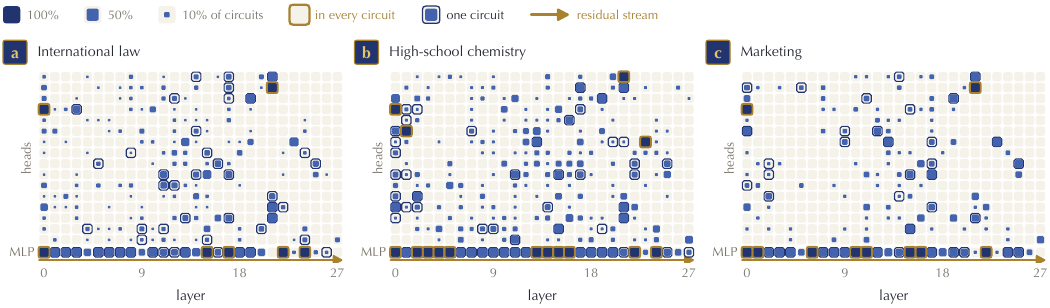}
\caption{Complete recovered circuit families for three MMLU subjects on Qwen3-1.7B,
shown on the module graph ($28$ layers $\times$ $16$ heads, MLP rail below). The recovered families contain $14$, $10$ and $8$ circuits, with median sizes $55$, $67$ and
$49$, respectively. A tessera's area is the share of the subject's recovered circuits that use the module, and the lapis outline marks one circuit of median size. Gold rims mark the modules present in every recovered circuit: $7$ of
the $189$ modules used in (a), $15$ of $230$ in (b), and $10$ of $136$ in (c). The remaining
modules vary across circuits.}
\label{fig:antichain}
\end{figure*}

On the discovery prompts, only the verifier-guided searches find sufficient circuits at
the matched budget. ACDC returns one circuit per subject, and MWRL returns a family of
circuits with similar discovery retention (Figure~\ref{fig:antichain}). On held-out
questions, we measure how much of the original behavior each circuit keeps with
KL$_{\text{held}}$ (Table~\ref{tab:circuits}). MWRL circuits keep substantially more of
the original behavior than ranking-based masks.

\begin{table}[!t]
\centering
\small
\setlength{\tabcolsep}{1.5pt}
\begin{tabular}{lccc}
\toprule
Method & KL$_{\text{held}}$ $\uparrow$ & retention $\uparrow$ & circuits $\uparrow$ \\
\midrule
Dense model        & $1.00$ & $1.00$ & --- \\
Random-$k$         & $-0.01{\scriptstyle\pm}0.01$ & $0.18{\scriptstyle\pm}0.00$ & --- \\
Magnitude          & $0.02{\scriptstyle\pm}0.00$  & $0.21{\scriptstyle\pm}0.00$ & --- \\
Wanda              & $-0.05{\scriptstyle\pm}0.00$ & $0.20{\scriptstyle\pm}0.01$ & --- \\
EAP                & $0.15{\scriptstyle\pm}0.01$  & $0.20{\scriptstyle\pm}0.02$ & --- \\
ACDC (greedy)      & $\mathbf{0.42{\scriptstyle\pm}0.01}$ & $\mathbf{0.74{\scriptstyle\pm}0.01}$ & $1$ \\
HardConcrete       & $0.02{\scriptstyle\pm}0.00$  & $0.18{\scriptstyle\pm}0.01$ & $1$ \\
MWRL               & $0.38{\scriptstyle\pm}0.02$ & $\mathbf{0.74{\scriptstyle\pm}0.04}$ & $\mathbf{8.9{\scriptstyle\pm}1.3}$ \\
\bottomrule
\end{tabular}
\caption{Circuit recovery on ten MMLU subjects with Qwen3-1.7B, every mask at the subject's matched
active-component budget (ACDC at its converged size). All columns evaluate the frozen
circuit with every other component given its activation on another prompt.
KL$_{\text{held}}$ is one minus the circuit's KL divergence from the unmodified model,
divided by the divergence when every component is replaced, on questions disjoint from
discovery ($1$ = the circuit behaves like the full model, $0$ = no better than replacing
every component); retention is discovery-prompt accuracy relative
to the dense model; circuits is the number of distinct proposals per subject that are 1-minimal under
the raw verifier. Means over
ten subjects and three discovery seeds, $\pm$ the across-seed standard deviation.}
\label{tab:circuits}
\end{table}

\subsection{Ablations}

\subsubsection{Credit estimator.}
The raw deletion credit (l1o) and the leave-two-out baseline (l2o) of Section~\ref{sec:method} both give unbiased gradients. Group-mean centering, which subtracts $\frac{1}{K}\sum_{j=1}^K A_j$, includes $A_i$ in rollout $i$'s own baseline. Figure~\ref{fig:bias} compares the three with the exact policy gradient on an enumerable MaxSAT instance: l1o and l2o converge to it at the Monte Carlo rate $O(1/\sqrt{N})$, whereas group-mean centering plateaus at a nonzero bias.

\begin{figure}[!t]
\centering
\includegraphics[width=\linewidth]{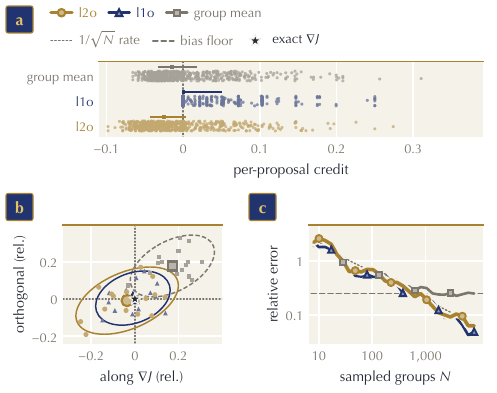}
\caption{Comparison of credit estimators against the exact analytical policy gradient on an enumerable MaxSAT
instance (frozen mid-training policy, $K=6$). \textbf{(a)} Per-proposal credit distributions across shared rollout batches: raw deletion credit (l1o) is non-negative, whereas l2o and group-mean centering produce signed advantages. \textbf{(b)} 2D projection of gradient estimates: small dots show averages over $512$ groups, large markers show the mean across $16$ independent runs with $2\sigma$ dispersion ellipses, and the star marks the exact gradient $\nabla J$.
\textbf{(c)} Relative gradient error $\lVert\bar g_N-\nabla J\rVert/\lVert\nabla J\rVert$ vs.\ number of rollout groups $N$: both l1o and l2o decay at the unbiased $1/\sqrt N$ rate (dotted line), while group-mean centering stalls at its finite-group bias floor (dashed line).}
\label{fig:bias}
\end{figure}

\subsubsection{Training group size and deployment budget.}
Larger training groups $K$ increase the likelihood that a batch simultaneously contains a minimal witness and its non-minimal supersets, allowing deletion credit to zero out the supersets immediately (Proposition~\ref{prop:groupmin}). Holding the total number of verifier queries strictly fixed at $36{,}864$ per instance, so that larger $K$ means fewer gradient steps, antichain recovery still rises with $K$ (Figure~\ref{fig:kmatched}a). At deployment time, varying the evaluation budget $B$ on frozen policies reveals a natural trade-off: smaller training groups produce sharper policies that excel at tiny deployment budgets $B$, whereas larger training groups distribute probability more evenly across the antichain and dominate at moderate-to-large $B$ (Figure~\ref{fig:kmatched}b).

\begin{figure}[!t]
\centering
\includegraphics[width=\linewidth]{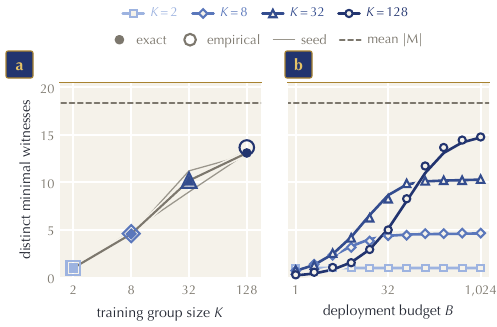}
\caption{Effect of training group size $K$ and deployment budget $B$ under a fixed training budget of $36{,}864$
verifier calls per instance (dashed horizontal line indicates true antichain size $|\Msf|$). \textbf{(a)} Witnesses recovered at deployment budget $B=256$ vs.\ training group size $K$: grey curves show three individual seeds averaged across six instances, filled markers show seed means, and open rings show empirical estimates. \textbf{(b)} Deployment budget sweep $B$ for frozen policies trained at different $K$: solid curves plot the exact expected number of distinct minimal witnesses in $B$ draws (Appendix~\ref{app:kinf-limit}), and markers show empirical recovery.}
\label{fig:kmatched}
\end{figure}

\subsubsection{Shared-sample Monte Carlo credit.}
When a rollout group contains more than $12$ group-minimal successes, MWRL switches from exact inclusion--exclusion to our shared-sample Monte Carlo estimator using $N$ auxiliary draws from $\mu_p$. Table~\ref{tab:mc-fidelity} evaluates the fidelity of this estimator for $N\in\{256,1024,5000,20000\}$ against exact lattice enumeration ($2^{14}$ states) on a challenging $K=48$ batch containing $32$ successes ($8$ with positive credit, alongside duplicates, redundant supersets, and failures). At $N=5{,}000$, which requires only bitwise subset checks and zero verifier calls, the Monte Carlo estimator achieves $1.000$ support recall and $0.995$ cosine similarity on both the l2o credit vector and the resulting policy gradient.

\begin{table}[!htbp]
\centering
\small
\setlength{\tabcolsep}{2.5pt}
\begin{tabular}{@{}ccccc@{}}
\toprule
$N$ & $\cos(\widehat{\widetilde A},\widetilde A)\uparrow$ &
\shortstack{relative $\ell_1$\\error $\downarrow$} &
\shortstack{support\\recall $\uparrow$} &
\shortstack{policy-grad.\\cosine $\uparrow$} \\
\midrule
$256$ & $0.866{\scriptstyle\pm}0.138$ & $0.516{\scriptstyle\pm}0.176$
& $0.594{\scriptstyle\pm}0.140$ & $0.888{\scriptstyle\pm}0.163$ \\
$1{,}024$ & $0.970{\scriptstyle\pm}0.027$ & $0.232{\scriptstyle\pm}0.069$
& $0.850{\scriptstyle\pm}0.150$ & $0.972{\scriptstyle\pm}0.020$ \\
$5{,}000$ & $0.995{\scriptstyle\pm}0.004$ & $0.094{\scriptstyle\pm}0.033$
& $1.000{\scriptstyle\pm}0.000$ & $0.995{\scriptstyle\pm}0.004$ \\
$20{,}000$ & $0.998{\scriptstyle\pm}0.001$ & $0.059{\scriptstyle\pm}0.017$
& $1.000{\scriptstyle\pm}0.000$ & $0.999{\scriptstyle\pm}0.001$ \\
\bottomrule
\end{tabular}
\caption{Fidelity of shared-sample Monte Carlo credit estimation against exact lattice enumeration under the geometric base measure ($p=0.7$). Credit metrics are averaged over $20$ auxiliary sample batches for a fixed $K=48$ group; policy-gradient cosine similarity is averaged across $12$ on-policy groups. The first two columns evaluate baseline-adjusted credits $\widetilde A$, support recall evaluates raw positive credits $A_i > 0$, and $N$ counts auxiliary draws from $\mu_p$ (requiring zero verifier calls).}
\label{tab:mc-fidelity}
\end{table}

\subsubsection{Base-measure choice.}
In the geometric base measure $\mu_p$, each removed element multiplies a witness's up-set value $\nu(S) = p^{|S|}$ by $1/p$ (Proposition~\ref{prop:rankuniform}). Smaller values of $p$ create a steeper preference for small witnesses, whereas larger $p$ equalizes witness values across different cardinalities. Sweeping $p$ on the MaxSAT benchmark (Figure~\ref{fig:psweep-main}) confirms that the born-minimal rate remains stably high across the entire range, while antichain recall increases with $p$ as larger minimal witnesses receive more balanced gradient weight relative to smaller ones. We fixed $p=0.7$ before this sweep, near the value $1/\sqrt2$, which has equal logarithmic distances to strong refinement ($p=\tfrac12$) and to no refinement preference ($p=1$) (Appendix~E.10).

\begin{figure}[!htbp]
\centering
\includegraphics{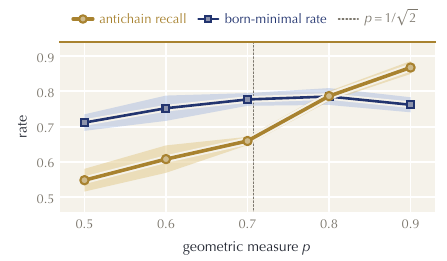}
\caption{Sensitivity sweep over the geometric base-measure parameter $p$ on the MaxSAT benchmark (dotted vertical line marks $p=1/\sqrt2 \approx 0.707$). Curves show mean $\pm$ standard deviation across three seeds and eight instances.}
\label{fig:psweep-main}
\end{figure}

\section{Conclusion}
We formalized minimal-witness identification, the recovery of every minimal sufficient
set of a task from binary verifier feedback, and introduced MWRL to solve it. MWRL values
the region certified by a group of proposals and credits each proposal for the part of
this region that would be lost without it. From this objective, we derived a planner that
recovers every minimal witness on enumerable instances and a group policy gradient that
scales to language models. MWRL recovers most minimal witnesses of each task, while methods
that value each proposal in isolation return redundant supersets or a single witness. One
trained policy also recovers minimal witnesses on tasks unseen in training. Wherever a
verifier certifies an outcome, the family of minimal witnesses of that outcome becomes a
learnable object.

\bibliography{aaai2027}

\appendix
\raggedbottom
\makeatletter
\setlength{\@fptop}{0pt}
\setlength{\@fpsep}{10pt plus 4pt}
\setlength{\@fpbot}{0pt plus 1fil}
\setlength{\@dblfptop}{0pt}
\setlength{\@dblfpsep}{10pt plus 4pt}
\setlength{\@dblfpbot}{0pt plus 1fil}
\makeatother
\renewcommand{\topfraction}{0.92}
\renewcommand{\dbltopfraction}{0.92}
\renewcommand{\bottomfraction}{0.65}
\renewcommand{\textfraction}{0.05}
\renewcommand{\floatpagefraction}{0.85}
\renewcommand{\dblfloatpagefraction}{0.85}
\input{MWRL_Appendix_Improved}

\end{document}

%% file: MWRL_Appendix_Improved.tex

\section{Foundations and verifier scope}
\label{app:representation}

\subsection{Up-set representation}
Under a monotone predicate, an accepted proposal certifies all of its supersets and a rejected
proposal rules out all of its subsets: $s_c(S)=1\Rightarrow\upset{S}\subseteq
F(c)$ and $s_c(S)=0\Rightarrow\downarrow S\cap F(c)=\varnothing$, where $\downarrow
S=\{T:T\subseteq S\}$. Across a group of $K$ rollouts, the accepted proposals therefore certify the combined region $U_K\subseteq F(c)$.

\begin{represtate}
Let $g$ be a permutation-invariant group value function on $(S_i,y_i)_{i=1}^K$ that vanishes when no proposal succeeds, depends only on the multiset of successful proposals $W$, and remains invariant under deleting duplicate or redundant successes. Then there exists a function $\mathcal{R}$ on up-sets with $\mathcal{R}(\varnothing)=0$ such that $g=\mathcal{R}(U(W))$. Moreover, $g$ is non-decreasing when new successes are added if and only if $\mathcal{R}$ is monotone.
\end{represtate}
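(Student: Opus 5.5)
The plan is to show that $g$ depends on $W$ only through its set of inclusion-minimal elements, and then to use the bijection between antichains and finitely generated up-sets to transfer $g$ to a function of $U(W)$.

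\textbf{Step 1: reduction to the minimal elements.} By hypothesis, $g$ is a function $\tilde g(W)$ of the multiset of successes. Repeatedly delete duplicates and any success that properly contains another success. Each deletion leaves $\tilde g$ unchanged by the invariance assumption, and the process terminates at $\minop_{\subseteq} W$, viewed as a set (an antichain). Hence $\tilde g(W)=\tilde g(\minop_{\subseteq}W)$. If $W=\varnothing$, then $g=0$ by the vanishing assumption.

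\textbf{Step 2: the antichain--up-set bijection.} For an antichain $\mathcal A$, the minimal elements of $U(\mathcal A)=\bigcup_{S\in\mathcal A}\upset{S}$ are exactly the members of $\mathcal A$. Indeed, each $S\in\mathcal A$ lies in $U(\mathcal A)$. If some $T\subsetneq S$ were in $U(\mathcal A)$, then $T\supseteq S'$ for some $S'\in\mathcal A$, so $S'\subsetneq S$, contradicting the antichain property. Conversely, every element of $U(\mathcal A)$ contains some member of $\mathcal A$. Thus $\mathcal A=\minop_{\subseteq}U(\mathcal A)$, and $U(W)=U(\minop_{\subseteq}W)$. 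Now define $\mathcal R(U):=\tilde g(\minop_{\subseteq}U)$ for every up-set $U$ realizable as some $U(W)$ with $|W|\le K$. On the lattice every up-set is finitely generated, so we extend $\mathcal R$ arbitrarily (say by $0$) to up-sets whose antichain has more than $K$ members. Set $\mathcal R(\varnothing)=0$. Then $\mathcal R(U(W))=\tilde g(\minop_{\subseteq}U(W))=\tilde g(\minop_{\subseteq}W)=g$. This is well defined because the minimal-element antichain determines $U$ and vice versa.

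\textbf{Step 3: monotonicity.} For the direction $\Leftarrow$: adding a success enlarges $W$ to $W'$, and $U(W)\subseteq U(W')$. If $\mathcal R$ is monotone with respect to inclusion, then $g(W)\le g(W')$. For the direction $\Rightarrow$: given up-sets $U\subseteq U'$ in the realizable domain, take $W=\minop U$ and $W'=W\cup\minop U'$. Then $U(W')=U\cup U'=U'$, and $W'$ is obtained from $W$ by adding successes, so $\mathcal R(U)=g(W)\le g(W')=\mathcal R(U')$.

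\textbf{Main obstacle.} The delicate point is the domain in Step 3. $W'$ may exceed $K$ elements, so "adding successes" must be understood within the group size: $g$ is given for groups of size $K$, with failures as padding. I would handle this by noting that replacing a failure with a success is the operation of "adding a success". I would also restrict monotonicity of $\mathcal R$ to up-sets generated by at most $K$ successes, or equivalently choose the arbitrary extension of $\mathcal R$ beyond that range monotonically, for instance by taking the supremum over realizable sub-up-sets. With $|\minop U'|\le K$, one can first delete redundant elements of $W'$ (which is invariant) so that it fits in $K$ slots, and then the argument above goes through.
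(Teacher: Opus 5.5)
Your Steps 1--3 are the paper's proof: collapse $W$ to $\minop_{\subseteq}W$ by invariance, use the antichain/up-set bijection to set $\mathcal R(U):=g(\minop_{\subseteq}U)$, and read off monotonicity. Your construction for the ``only if'' direction ($W=\minop_{\subseteq}U$, $W'=W\cup\minop_{\subseteq}U'$, so $U(W')=U'$) goes beyond the paper. Its remark that adding successes enlarges $U(W)$ gives only the ``if'' direction. The paper writes $g=g(W)$ and defines $\mathcal R$ on all up-sets, so it implicitly treats $g$ as a function of finite success multisets of any size. Under that reading your Step 3 is complete as written. No arbitrary extension of $\mathcal R$ is needed, since every up-set of the finite lattice equals $U(\minop_{\subseteq}U)$.

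\textbf{The gap is your repair for a fixed group size $K$.} The monotonicity hypothesis only compares $g$ on two groups that both fit in $K$ slots. So you cannot add successes beyond $K$ and then delete the redundant ones by invariance: the intermediate multiset is not a group, and nothing constrains its value. In fact, under the fixed-$K$ reading the ``only if'' direction is false, so no bookkeeping can rescue it.
\begin{itemize}
\item For $K=1$, take $g=y_1|S_1|$. The invariance hypothesis is vacuous, and the only way to add a success raises $g$ from $0$ to $|S_1|\ge0$. Yet any representing $\mathcal R$ must satisfy $\mathcal R(\upset{S})=|S|$, which violates monotonicity on $\upset{\{a,b\}}\subseteq\upset{\{a\}}$.
\item For general $K$, suppose $2^{\mathcal E}$ contains an antichain of size $K$. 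A full group of $K$ pairwise incomparable successes is never bounded above by the hypotheses, since nothing can be added to it. Its value can therefore be raised above $g(\{\varnothing\})=\mathcal R(\upset{\varnothing})$, again breaking monotonicity of $\mathcal R$.
\end{itemize}
Restricting monotonicity to up-sets with at most $K$ generators, or extending $\mathcal R$ monotonically beyond them, does not help. These violations occur among realizable up-sets, where $\mathcal R$ is forced.

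You have two clean fixes. Either adopt the unbounded-multiset reading, or state the ``only if'' direction only for inclusions $U\subseteq U'$ with $|\minop_{\subseteq}U\cup\minop_{\subseteq}U'|\le K$. In the second case your $W'$ fits in $K$ slots and the argument goes through unchanged.
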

\begin{proof}
Because $g$ depends only on the multiset of successful proposals $W$, we can write $g=g(W)$. Deleting duplicate proposals and redundant supersets
leaves the value unchanged, so $g(W)=g(\mathcal{A}(W))$, where
$\mathcal{A}(W)=\minop_\subseteq(\operatorname{supp}W)$ denotes the inclusion-minimal members of the set of distinct successes $\operatorname{supp}W$. Antichains and finitely generated up-sets stand in one-to-one correspondence: the maps $\mathcal{A}\mapsto\upset{\mathcal{A}}$ and
$U\mapsto\minop_\subseteq U$ are mutual inverses. Consequently, defining
$\mathcal R(U):=g(\minop_\subseteq U)$ yields a well-defined set function on up-sets satisfying
$g(W)=\mathcal R(U(W))$. Finally, because adding new successes to $W$ expands $U(W)$ by set inclusion, $g$ is non-decreasing in added successes if and only if $\mathcal R$ is monotone.
\end{proof}

Thus, the certified region $U_K$ completely determines the group return, and the policy objective takes the form
$J_K=\mathbb{E}[\mathcal R(U_K)]$.

\subsection{Group returns and verifier access}
Throughout the main text, we instantiate $\mathcal R$ using the coverage measure $\mathcal R_\mu(U)=\mu(U)$, which values the certified region directly using only the single verifier bit per rollout. As an alternative when additional verifier queries are permitted, the counting measure first shrinks each successful proposal to a minimal witness via a canonicalizer $\Phi$ and then assigns unit weight to each distinct canonical witness recovered in the group (Appendix~B).

\subsection{Proposal-separable objectives}
\label{app:separable}
A symmetric group return is proposal-separable if, for each group size $K$, there exist a constant
$b_K(c)$ and a per-proposal reward function
$r_c:2^{\mathcal E}\times\{0,1\}\to\mathbb{R}$ such that
\begin{equation*}
G(S_{1:K},y_{1:K})=b_K(c)+\sum_{i=1}^K r_c(S_i,y_i).
\end{equation*}
Here each term $r_c(S_i,y_i)$ depends on proposal $S_i$ and its own verifier bit $y_i$, but not on the other proposals in the batch. Because the additive constant $b_K(c)$ shifts all distributions equally, we set $b_K(c)=0$ without loss of generality.

Beyond proposal-separable returns, a method may target a reward-proportional distribution $q\propto r(\cdot,s_c(\cdot))$ for a non-negative local reward $r\ge0$, where a reward is local if it is the same function of $(S,s_c(S))$ across all monotone predicates. Alternatively, a method may optimize an objective that depends on $q$ solely through the marginal success rate
$q(F(c))=\Pr_{S\sim q}[s_c(S)=1]$. Theorem~\ref{thm:collapse} addresses all three families simultaneously.

\begin{collapserestate}
Let a learning method target any of the following terminal distributions $q$ on $2^{\mathcal E}$:
\begin{enumerate}
\item the maximizers of a proposal-separable objective $J_{\mathrm{sep}}(q)=\mathbb{E}_{S_{1:K}\sim q^{\otimes K}}\big[\textstyle\sum_i r(S_i,s_c(S_i))\big]$;
\item the reward-proportional distribution $q(S)\propto r(S,s_c(S))$ for any non-negative local reward $r\ge0$; or
\item the maximizers of any objective that depends on $q$ solely through the marginal success probability $\Pr_{S\sim q}[s_c(S)=1]$.
\end{enumerate}
Then, for any ground set of size $|\mathcal E|\ge3$, there exists a monotone predicate $s_c$ with $|\Msf(c)|\ge2$ for which the target distribution's support fails to match $\Msf(c)$.
\end{collapserestate}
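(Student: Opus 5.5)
We prove the three cases separately. Each case gets its own explicit monotone predicate on a ground set with $|\mathcal E|\ge 3$. Fix three distinct elements $a,b,c\in\mathcal E$. Throughout, we read "the support fails to match $\Msf(c)$" as: either the support contains a non-minimal set, or it misses a member of $\Msf(c)$. For ties among maximizers, we exhibit one maximizer whose support is wrong. When the relevant maximizer set is forced, we show that every maximizer is wrong.

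\textbf{Case 1 (separable objectives).} Since the samples are i.i.d., $J_{\mathrm{sep}}(q)=K\,\mathbb{E}_{S\sim q}[r(S,s_c(S))]$, which is linear in $q$. Its maximum over the simplex equals $K\max_S r(S,s_c(S))$, and it is attained by a point mass $\delta_{S^\star}$ on any maximizing $S^\star$. Take $s_c$ with $\Msf(c)=\{\{a\},\{b\}\}$, that is, $s_c(S)=\mathbf 1[a\in S\text{ or }b\in S]$. This predicate is monotone with $|\Msf(c)|=2$. The point mass $\delta_{S^\star}$ is then a target distribution whose support is a single set, so it cannot equal a two-element antichain. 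If one insists that every maximizer be wrong, note that the maximizer set is the convex hull of point masses on $\arg\max r$. That hull can match $\Msf(c)$ only if $\arg\max_S r(S,s_c(S))=\{\{a\},\{b\}\}$ exactly. In that event we switch to a second predicate $s'$ with $\Msf'=\{\{a\},\{b,c\}\}$. For this predicate, $r(\{b\},s'(\{b\}))=r(\{b\},0)$. The argmax cannot be exactly $\Msf'$ unless $r(\{a\},1)=r(\{b,c\},1)$ is the top value. Combining this with the first predicate forces $r(\{b,c\},1)=r(\{a\},1)=\max$. Now take a third predicate with $\Msf''=\{\{b\},\{c\}\}$. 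There $\{b,c\}$ is a non-minimal success attaining the maximal reward, so its point mass is a wrong maximizer. This bookkeeping is routine.

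\textbf{Case 2 (proportional samplers).} The main obstacle is that $r$ is arbitrary, and the target $q\propto r$ must be well defined, so $r\not\equiv 0$ on the relevant sets. We argue through locality: $r(S,1)$ is a single number, independent of the predicate. First suppose $q$ matches $\Msf$ for $s_1$ with $\Msf_1=\{\{a\},\{b\}\}$. Then $r(\{a\},1)>0$ and $r(\{b\},1)>0$. Now consider $s_2$ with $\Msf_2=\{\{a,c\},\{b\}\}$, which is monotone with $|\Msf_2|=2$. Matching for $s_2$ requires $r(\{a,c\},1)>0$. In that case take $s_3$ with $\Msf_3=\{\{a\},\{c\}\}$. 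Here $\{a,c\}$ is an accepted non-minimal set with $r(\{a,c\},1)>0$, so it lies in the support of $q$, and the support contains a non-minimal set. Thus at least one of $s_1,s_2,s_3$ yields a mismatch. The general principle is that any accepted set receiving positive weight under one predicate, where it is minimal, also receives positive weight under another predicate where it is a strict superset of a witness. Using $|\mathcal E|\ge3$ guarantees that a two-element set and a second incomparable witness both exist.

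\textbf{Case 3 (success-probability objectives).} Any such objective has the form $\Phi(q(F(c)))$. Since $q(F(c))\le 1$, it is maximized, when $\Phi$ is maximized at $1$, by $q=\delta_{\mathcal E}$, because $\mathcal E\in F(c)$ by monotonicity. Take $\Msf=\{\{a\},\{b\}\}$. Then $\mathcal E\supsetneq\{a\}$ is non-minimal, so this maximizer's support fails to match. More generally, $q(F(c))$ is invariant under moving mass within $F(c)$. Hence whenever some maximizer exists, transporting its mass on $F(c)$ to $\mathcal E$ yields another maximizer with a wrong support. In that statement "maximizer" should be read as the one selected by the method; the uniqueness caveat is the same as in Case 1.
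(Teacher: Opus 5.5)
Your proof is correct and follows the paper's argument in all three cases. For separable objectives you use linearity and a point-mass maximizer. For proportional samplers you transfer positive weight by locality, from a predicate where a two-element set is minimal to one where it strictly contains a witness; your $s_2,s_3$ play the role of the paper's $s_1,s_2$, and your $s_1$ step is never used afterwards. For success-probability objectives you move a maximizer's success mass onto $\mathcal E$; the paper also sends the failure mass to $\varnothing$, which changes nothing.

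Drop the optional ``every maximizer is wrong'' bookkeeping in Case~1, because it is false. The convex hull of point masses on $\arg\max$ contains a law with support $\Msf(c)$ whenever $\Msf(c)\subseteq\arg\max$, not only when the two are equal. For example, with $r(S,y)=y$ the uniform law on $\Msf(c)$ is a maximizer under every predicate. So the existence reading, which you and the paper both use for the main argument, is the only one that holds.
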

\begin{proof}
We assume each rule is well-defined across all monotone predicates: the success-probability objective attains its maximum, and the local reward $r(\cdot,s(\cdot))$ has positive total mass under every monotone predicate $s$.

\emph{Case 1: Separable objective.} Let $\bar r(S)=r(S,s_c(S))$. Because the $K$ proposals in a group are drawn i.i.d.\ from $q$, linearity of expectation yields
\begin{equation*}
J_{\mathrm{sep}}(q)=\sum_{i=1}^{K}\mathbb{E}_{S_i\sim q}\,\bar r(S_i)
=K\sum_{S\subseteq\mathcal E}q(S)\,\bar r(S),
\end{equation*}
which is linear in $q$. Let $\bar r^\star=\max_S\bar r(S)$ and $Q^\star=\arg\max_S\bar r(S)$. Then
\begin{equation*}
J_{\mathrm{sep}}(q)=K\bar r^\star-K\sum_{S}q(S)\,\big(\bar r^\star-\bar r(S)\big)
\le K\bar r^\star,
\end{equation*}
with equality if and only if $q(Q^\star)=1$. Thus the maximizers of $J_{\mathrm{sep}}$ are precisely the distributions supported inside $Q^\star$. In particular, any point mass concentrated on a single subset in $Q^\star$ is globally optimal. Whenever $|\Msf(c)|\ge2$, such a point-mass maximizer misses at least one member of $\Msf(c)$. Moreover, if $\bar r(A)>\bar r(B)$ for two minimal witnesses $A,B\in\Msf(c)$, then $B\notin Q^\star$ and every maximizer assigns zero probability to $B$. Note that this collapse holds for any per-proposal reward $\bar r$, whether local or not.

\emph{Case 2: Proportional law.} Choose three distinct elements $a,b,d\in\mathcal E$ and consider two monotone predicates $s_1$ and $s_2$ generated by the antichains $\Msf_1=\{\{a,b\},\{d\}\}$ and $\Msf_2=\{\{a\},\{d\}\}$, respectively. The subset $X=\{a,b\}$ is sufficient under both predicates ($s_1(X)=s_2(X)=1$), so locality assigns it the exact same unnormalized weight $r(X,1)$ under both $s_1$ and $s_2$. If the target distribution under $s_1$ fails to match $\Msf_1$, then $s_1$ already witnesses the claim. Otherwise, $X$ belongs to the support under $s_1$, which implies $r(X,1)>0$. Under $s_2$, the target distribution must then also assign positive probability mass to $X=\{a,b\}$, even though $X$ strictly contains the minimal witness $\{a\}$ and is therefore non-minimal under $s_2$.

\emph{Case 3: Success probability.} Let $q^\star$ maximize the objective and let $t=q^\star(F(c))$ be its marginal success probability. Whenever $|\Msf(c)|\ge2$, the empty set cannot be sufficient ($\varnothing\notin F(c)$), because $\varnothing\in F(c)$ would force $\Msf(c)=\{\varnothing\}$. Meanwhile, because $F(c)$ is a non-empty up-set, the full ground set $\mathcal E$ always belongs to $F(c)$, and because $|\Msf(c)|\ge2$, $\mathcal E$ strictly contains every minimal witness and is itself non-minimal. Consequently, the two-point distribution $q' = t\,\delta_{\mathcal E}+(1-t)\,\delta_{\varnothing}$ achieves the exact same success probability $q'(F(c))=t$ and the exact same objective value as $q^\star$, yet its support lies entirely within $\{\varnothing,\mathcal E\}$ and misses $\Msf(c)$ completely.
\end{proof}

Theorem~\ref{thm:collapse} extends immediately to practical policy-gradient estimators built around separable objectives. Subtracting an admissible baseline leaves the expected gradient of $J_{\mathrm{sep}}$ unchanged (Appendix~C.2), and group-level advantage normalization (as in GRPO) rescales each group's contribution by a positive statistic; both are used for variance control in the gradient estimator of $J_{\mathrm{sep}}$, and the theorem concerns the objective being estimated. Similarly, replacing the sum over a batch with a best-of-$K$ maximum $\max_i\rho(S_i)$ still satisfies $\mathbb{E}_q[\max_i\rho(S_i)]\le\rho^\star$, so a point mass on a single maximizer in $Q^\star$ remains globally optimal. When $\rho$ is the binary verifier bit $s_c(S)$, the expected best-of-$K$ return reduces to $1-(1-q(F(c)))^K$, which depends on $q$ solely through the success rate $q(F(c))$ and is indifferent among all distributions supported on $F(c)$. This places pass@$K$ policy optimization and Maximum Likelihood RL (MaxRL), which maximizes $\log q(F(c))=-\sum_{k\ge1}(1-q(F(c)))^k/k$, directly under the third case of Theorem~\ref{thm:collapse}. Likewise, Maximum-Entropy RL with per-proposal reward $r_0$ and temperature $\beta$ targets $q(S)\propto n(S)\,e^{r_0(S,s_c(S))/\beta}$ (where $n(S)$ counts the number of construction trajectories terminating at $S$), and a GFlowNet with terminal reward $R$ targets $q(S)\propto R(S)$. Because both weights are local functions of $(S, s_c(S))$, both methods fall under the second case of Theorem~\ref{thm:collapse}.

While Theorem~\ref{thm:collapse} characterizes the optima of separable objectives, our next result proves that the MWRL coverage return itself cannot be represented as any proposal-separable sum.

\begin{theorem}[Proposal-separable inexpressibility]
Let $K\ge2$, let $\mu$ have full support on $2^{\mathcal E}$, and suppose that the
sufficient region $F(c)$
contains a strict containment chain $A\subsetneq B$. Then the coverage return
\begin{equation*}
G_\mu(S_{1:K},y_{1:K})=
\mu\!\left(\bigcup_{i:y_i=1}\upset{S_i}\right)
\end{equation*}
is not proposal-separable.
\end{theorem}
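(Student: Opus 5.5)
We argue by contradiction. Suppose $G_\mu(S_{1:K},y_{1:K})=b_K(c)+\sum_{i=1}^K r_c(S_i,y_i)$ for some constant $b_K(c)$ and per-proposal reward $r_c$, as in the definition of proposal separability in Appendix~\ref{app:separable}. The plan is to evaluate both sides on a few carefully chosen groups of size $K$ that differ only in the first two proposals. Each of the other $K-2$ proposals will always be a fixed failing padding element; since $A\in F(c)$ and $|\Msf(c)|$ may be large, we pad with a rejected subset (e.g.\ $\varnothing$ when $\varnothing\notin F(c)$). Because $\varnothing\subseteq A\subsetneq B$ and $A\neq\varnothing$ is not guaranteed, we will use $\varnothing$ when it fails. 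If $\varnothing\in F(c)$, then $F(c)=2^{\mathcal E}$. In that case we instead pad with copies of $A$ itself; since $\upset{A}$ then absorbs nothing new from $B$, the same algebra goes through. The padding contributes the same additive constant to every evaluation, so it cancels in differences.

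The core step is a four-point (second-difference) identity. Consider four groups whose first two slots are $(A,A)$, $(A,B)$, $(B,A)$, and $(B,B)$, all with $y=1$. Under separability, write $x_A=r_c(A,1)$ and $x_B=r_c(B,1)$. Then
\begin{equation*}
G(A,B)+G(B,A)-G(A,A)-G(B,B)=0,
\end{equation*}
since each side contains $2x_A+2x_B$ plus identical padding terms. A simpler version of this identity uses the pairs $(A,\text{fail})$, $(B,\text{fail})$, $(A,B)$, and $(\text{fail},\text{fail})$. Separability gives
\begin{equation*}
G(A,B)-G(A,\text{fail})-G(B,\text{fail})+G(\text{fail},\text{fail})=0.
\end{equation*}
We then compute the same combination from the coverage formula. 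Because $A\subsetneq B$ gives $\upset{B}\subseteq\upset{A}$, we have $G(A,B)=\mu(\upset{A}\cup U_0)$ and $G(A,\text{fail})=\mu(\upset{A}\cup U_0)$, where $U_0$ is the padding region. We also have $G(B,\text{fail})=\mu(\upset{B}\cup U_0)$ and $G(\text{fail},\text{fail})=\mu(U_0)$. The combination therefore equals $-\mu(\upset{B}\setminus U_0)$.

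Choosing failing padding makes $U_0=\varnothing$, so the combination equals $-\mu(\upset{B})\le-\mu(\{B\})<0$ by full support. This contradicts separability. The requirement $K\ge2$ is exactly what is needed to place $A$ and $B$ in the same group.

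The main obstacle is the existence of a failing proposal, needed so that padding and the ``fail'' slot are legitimate inputs. Note that the definition allows any $(S_i,y_i)$ pair as an argument, and $y_i$ must equal $s_c(S_i)$ only for realizable batches. If we read $G$ as a function of arbitrary $(S,y)$, we can simply set $y=0$, and the contradiction above is immediate. If only realizable inputs are allowed and $F(c)=2^{\mathcal E}$, we fall back to the pure four-point identity. Pad the remaining slots with $A$ when $K>2$, and compare $G(A,B)$, $G(A,A)$, $G(B,B)$, and $G(B,B)$-type configurations. The aim is an inclusion--exclusion identity whose coverage value is $\mu(\upset{A})-\mu(\upset{B})=\mu(\upset{A}\setminus\upset{B})\ge\mu(\{A\})>0$ while separability forces it to be zero. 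Getting the bookkeeping of this fallback right is the step we would check most carefully.
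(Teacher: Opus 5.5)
Your main second-difference computation is correct. Under any separable form, $G(A,B)-G(A,\text{fail})-G(B,\text{fail})+G(\text{fail},\text{fail})$ vanishes, while the coverage return gives $-\mu(\upset B\setminus U_0)$. This equals $-\mu(\upset B)<0$ when the padding also fails. That settles the theorem whenever a rejected subset exists (equivalently $\varnothing\notin F(c)$), or if $G$ may be evaluated at arbitrary $(S,y)$.

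\textbf{The gap is the remaining case.} If only realizable inputs $y_i=s_c(S_i)$ are admitted, the hypotheses allow $F(c)=2^{\mathcal E}$, for example with $A=\varnothing$. There your fallback of padding with copies of $A$ fails for $K>2$. The padding region is $U_0=\upset A\supseteq\upset B$, so $G(A,B)$, $G(B,A)$, $G(A,A)$ and $G(B,B)$ all equal $\mu(\upset A)$, and the four-point combination is $0$. Even where a fail slot exists, padding your first combination with $A$ gives $-\mu(\upset B\setminus\upset A)=0$.

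Your remark that $\upset A$ absorbs nothing new from $B$, so the algebra goes through, has it backwards. Because $\upset A$ absorbs $\upset B$, the padding swallows exactly the difference you need. In general, with padding region $U_0$ the four-point combination equals $\mu\bigl((\upset A\setminus\upset B)\setminus U_0\bigr)$, which is positive iff $A\notin U_0$. So pad with copies of $B$ instead, or with any success not contained in $A$. This yields $\mu(\upset A)-\mu(\upset B)\ge\mu(\{A\})>0$, and for $K=2$ no padding is needed.

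The paper avoids the case split by using only successful proposals. A batch of $K$ copies of $A$ and a batch of $K-1$ copies of $A$ plus one $B$ both certify exactly $\upset A$, which forces $r_c(A,1)=r_c(B,1)$. Then $K$ copies of $B$ would have to score the same as $K$ copies of $A$, contradicting $\mu(\upset B)<\mu(\upset A)$. This needs no rejected subset and no convention about non-realizable inputs.

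Once the padding is fixed, your four-point identity is the same contradiction packaged as a single nonzero interaction term. It has the merit of exhibiting directly the overlap (submodular) interaction that no per-proposal sum can express. The cost is the realizability bookkeeping, which is exactly where your draft went wrong.
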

\begin{proof}
Let $w_A=\mu(\upset{A})$ and $w_B=\mu(\upset{B})$. Because $\mu$ has full support and
$A\subsetneq B$, we have $\upset{B} \subsetneq \upset{A}$ and thus $w_A>w_B$. Now compare a batch consisting of $K$ identical copies of $A$ against a batch consisting of $K-1$
copies of $A$ and one copy of $B$. Because $\upset{B} \subset \upset{A}$, both batches certify the exact same region $\upset{A}$ and achieve total coverage $w_A$. If $G_\mu$ were proposal-separable, equating the two batch returns would force
$r_c(A,1)=r_c(B,1)$. However, a batch of $K$ copies of $B$ would then have to receive the same return as a
batch of $K$ copies of $A$, contradicting $w_B<w_A$.
\end{proof}

Because the coverage return $G_\mu$ depends on containment and overlap among the proposals in a batch, its marginal credit inherently couples the proposals within each group.

\subsection{Nonmonotone verifiers and existential closure}
\label{app:nonmono}
When a raw verifier $s$ is not strictly monotone, $s(S)=1$ confirms that $S$ itself is sufficient, but it no longer guarantees that every superset of $S$ will also be accepted by $s$. Even in this regime, the MWRL coverage return continues to penalize supersets and duplicates among accepted proposals, although the formal up-set union $U_K$ may include supersets where the raw verifier drops back to zero. Without full monotonicity, global subset-minimality can diverge from local \emph{1-minimality} (where an accepted set $S$ is 1-minimal if removing any single element causes the verifier to reject it). Under a monotone predicate, 1-minimality and subset-minimality coincide.

Monotonicity can be restored formally through the existential closure $s_\exists(S) = \mathbf{1}[\exists Q \subseteq S : s(Q)=1]$, which certifies all supersets of any accepted subset and preserves Proposition~\ref{prop:support}. Because our coverage credit requires only the raw verifier bit and terminal subset, our language-model circuit experiments train directly on the raw verifier $s_0$ without invoking any canonicalizer during training. To audit how often the raw circuit verifier violates monotonicity in practice, we define the \emph{finite closure} of a rejected superset as the output of our circuit canonicalizer $\Phi$ (Appendix~\ref{app:circuits}), which searches inside the rejected superset for an accepted subset. Throughout the circuit experiments, recovered circuits are 1-minimal under the raw verifier $s_0$.

\paragraph{Measured violation and repair.}
For each behavior, we extract verified circuits by applying $\Phi$ to the full component set and to random dense subsets. We then form supersets by adding random components to each verified circuit and query the raw verifier $s_0$ on those supersets; a strictly monotone verifier would accept $100\%$ of them. Table~\ref{tab:nonmono} reports the raw rejection rate, the fraction of rejected supersets repaired by the finite closure $\Phi$, and the residual violation rate, while Figure~\ref{fig:nonmono} plots rejection rates and faithfulness as a function of the number of added components. On Qwen3-1.7B, rejected supersets drift further below the $\alpha=0.8$ faithfulness threshold as more random components are added. On Qwen3-8B, monotonicity violations are less frequent and rejected supersets remain clustered near the $0.8$ boundary. Because the raw verifier can reject supersets of a valid circuit, pointwise acceptance does not certify an entire up-set under $s_0$; the residual row in Table~\ref{tab:nonmono} upper-bounds the violations that persist after finite-closure search.

\begin{table}[t]
\centering
\footnotesize
\setlength{\tabcolsep}{3pt}
\begin{tabular}{@{}lccc@{}}
\toprule
& Bench10 & 8B family & 8B leaf \\
\midrule
Probes & $630$ & $546$ & $2{,}688$ \\
Raw violation $\downarrow$ & $0.83$ & $0.56$ & $0.56$ \\
Closure repair $\uparrow$ & $0.70$ & $0.73$ & $0.68$ \\
Residual $\downarrow$ & $0.25$ & $0.15$ & $0.18$ \\
1-minimal & $15/15$ & $13/13$ & $64/64$ \\
\bottomrule
\end{tabular}
\caption{Monotonicity audit of the raw circuit verifier at threshold $\alpha=0.8$. Bench10 uses Qwen3-1.7B, while 8B family and 8B leaf use Qwen3-8B. Raw violation is the fraction of constructed supersets rejected by the raw verifier $s_0$; closure repair is the fraction of those rejected supersets within which the finite closure $\Phi$ successfully recovers an accepted subset. Residual bounds the remaining uncertified supersets after finite closure. The final row confirms that all circuits returned by $\Phi$ are 1-minimal.}
\label{tab:nonmono}
\end{table}

\begin{figure}[!t]
\centering
\includegraphics{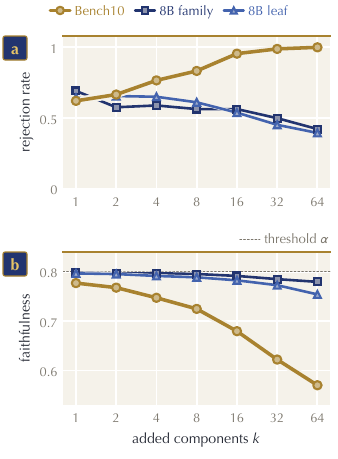}
\caption{Empirical monotonicity violations in the raw circuit verifier. \textbf{(a)}
Fraction of random supersets of a verified circuit rejected by the raw
predicate $s_0$ as a function of the number of added components (a strictly monotone predicate
would have zero rejections). \textbf{(b)} Mean distributional faithfulness of the rejected
supersets relative to the acceptance threshold $\alpha=0.8$ (dotted line).}
\label{fig:nonmono}
\end{figure}

\paragraph{Quality of the returned circuits.}
Every circuit recovered by the finite closure $\Phi$ passed full 1-minimality verification under the raw predicate $s_0$. The unrepaired residuals reflect misses of the attribution-guided greedy search rather than stochasticity, as re-running $\Phi$ on the same rejected supersets yields the same outcome. Notably, $\Phi$ is not constant across an up-set: among repaired supersets, $\Phi$ recovers the original seed circuit roughly half the time and lands on an alternative 1-minimal circuit in the remaining cases. Because the counting measure deduplicates proposals via $\Phi(S)$, sensitivity in $\Phi$ can alter which proposals are counted as distinct. By operating directly on raw terminal proposals and raw verifier bits, our coverage measure $\mathcal R_\mu$ avoids any dependence on $\Phi$ during training.

\section{Planning and finite-budget recovery}
\label{app:vi}

\subsection{Minimal-witness value iteration}
Throughout this section, let $\mathcal R=\mathcal R_\mu$ for a full-support base measure
$\mu$ on $2^{\mathcal E}$. Fix a context $c$ with sufficient region $F(c)=\upset{\Msf(c)}$. In our planning formulation, a state consists of the currently certified
region $F\subseteq F(c)$ paired with a remaining commitment budget $b$. Committing an accepted subset $S$ transitions the state to $F\cup\upset{S}$ and yields the immediate marginal coverage gain
$\Delta_{\mathcal R_\mu}(S\mid F)=\mu(F\cup\upset{S})-\mu(F)$. This defines a standard finite-horizon dynamic program over the augmented state space $(F,b)$; for any budget $b<|\Msf(c)|$, backward value iteration computes the coverage-optimal $b$-witness subset of $\Msf(c)$.

\begin{theorem}[Recovery by minimal-witness value iteration]
Let $s_c$ be monotone and let $\mu$ have full support on $2^{\mathcal E}$. Initialize
$F_0=\varnothing$ and apply the greedy minimal-witness update~\eqref{eq:vi} until the
maximum marginal gain reaches zero. Then every selected subset is a distinct, previously unselected member of
$\Msf(c)$, and the planner terminates after exactly $|\Msf(c)|$ steps with
\begin{equation*}
\{S_1,\ldots,S_{|\Msf(c)|}\}=\Msf(c),\qquad
F_{|\Msf(c)|}=F(c).
\end{equation*}
\end{theorem}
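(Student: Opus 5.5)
The plan is an induction on the step index $k$ with the invariant that $F_k=\upset{\{S_0,\dots,S_{k-1}\}}$, where $S_0,\dots,S_{k-1}$ are distinct members of $\Msf(c)$. The invariant holds trivially at $k=0$. Two structural facts drive the argument. First, every accepted $S$ satisfies $S\in F(c)=\upset{\Msf(c)}$, so $S\supseteq M$ for some $M\in\Msf(c)$, and therefore $\upset{S}\subseteq\upset{M}$. Second, for any antichain member $M$ and any up-set generated by other members of $\Msf(c)$, the point $M$ itself lies outside that up-set, since $M\supseteq M'$ with $M'\in\Msf(c)$ forces $M'=M$.

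\textbf{Inductive step.} Suppose the invariant holds at step $k$ and some $M\in\Msf(c)\setminus\{S_0,\dots,S_{k-1}\}$ remains. By the second fact, $M\notin F_k$, and full support gives $\Delta_{\mathcal R_\mu}(M\mid F_k)\ge\mu(\{M\})>0$. Hence the maximum gain is positive and the planner does not stop. Next, take an accepted $S$ that is not an undiscovered minimal witness. There are two cases.
\begin{itemize}
\item If $S$ contains some already-selected $S_j$, then $\upset{S}\subseteq F_k$ and the gain is $0$.
\item Otherwise, pick $M\in\Msf(c)$ with $M\subseteq S$. This $M$ is undiscovered, and $M\subsetneq S$ unless $S$ is itself an undiscovered witness. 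Then $\upset{S}\setminus F_k\subseteq\upset{M}\setminus F_k$, and $M$ lies in the latter but not the former, because $M\notin\upset{S}$ when $M\subsetneq S$. So $\Delta(M\mid F_k)\ge\Delta(S\mid F_k)+\mu(\{M\})>\Delta(S\mid F_k)$.
\end{itemize}
In both cases $S$ is strictly beaten by some undiscovered witness. Therefore every exact maximizer is an undiscovered member of $\Msf(c)$, and the invariant propagates with distinctness.

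\textbf{Termination.} Once all $L=|\Msf(c)|$ witnesses are selected, $F_L=\upset{\Msf(c)}=F(c)$. Every accepted $S$ then has $\upset{S}\subseteq F(c)=F_L$, so the maximum gain is $0$ and the planner stops. It could not have stopped earlier, because the inductive step shows the gain is positive whenever a witness remains. This gives exactly $L$ steps and $\{S_1,\dots,S_L\}=\Msf(c)$.

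The main obstacle is the strict comparison in the second case. One has to be careful that the witness $M\subseteq S$ is undiscovered, which holds because $S$ contains no selected witness, and that $M$ is a point of positive mass lying outside both $F_k$ and $\upset{S}$. The rest is bookkeeping with the antichain–up-set correspondence already used in Theorem~\ref{thm:rep}.
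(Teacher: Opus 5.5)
Your proof is correct and follows essentially the same route as the paper's: induction on the step with the invariant that all prior selections are distinct members of $\Msf(c)$, the bound $\Delta_{\mathcal R_\mu}(M\mid F_k)\ge\mu(\{M\})>0$ for every undiscovered witness, and the same two-case comparison (zero gain for any $S$ containing a selected witness, and otherwise a strict loss because $M\in\upset{M}\setminus(F_k\cup\upset{S})$). Your explicit check that the planner cannot halt before step $|\Msf(c)|$, and your case covering already-selected witnesses, are small pieces of bookkeeping that the paper leaves implicit.
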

\begin{proof}
Proceed by induction on the step $k$, assuming all subsets selected prior to step $k$ are distinct members of $\Msf(c)$. For any unselected minimal witness
$m\in\Msf(c)$, the lattice point $m$ itself cannot belong to the up-set of any other minimal witness, so $m \notin F_k$ and its marginal gain satisfies
$\Delta_{\mathcal R_\mu}(m\mid F_k)\ge\mu(\{m\})>0$. Now consider any non-minimal accepted subset $S$, which must strictly contain at least one minimal witness $m\subsetneq S$ for some $m\in\Msf(c)$. If $m$ has already been selected in an earlier step, then $\upset{S}\subseteq\upset{m}\subseteq F_k$, giving $\Delta_{\mathcal R_\mu}(S\mid F_k)=0$. If $m$ has not yet been selected, then $\upset{S}\setminus F_k\subsetneq\upset{m}\setminus F_k$ because $m \in \upset{m}\setminus (F_k \cup \upset{S})$, which implies $\Delta_{\mathcal R_\mu}(S\mid F_k)<\Delta_{\mathcal R_\mu}(m\mid F_k)$ by full support of $\mu$. Consequently, the argmax in~\eqref{eq:vi} must select an undiscovered member of $\Msf(c)$ at every step. After $|\Msf(c)|$ steps, all minimal witnesses have been selected, their up-sets cover $F(c)$, and every remaining marginal gain is zero.
\end{proof}

On our enumerable MaxSAT instances, executing this dynamic program over $(F,b)$ provides the exact planning reference reported in Table~\ref{tab:maxsat} (see Appendix~\ref{app:vi-exec} for implementation and verification details).

\subsection{Finite-group objective identities}
Let $q_\theta$ denote the policy's terminal proposal distribution. For any lattice point $T \subseteq \mathcal E$, let
$\rho_\theta(T)=\Pr_{S\sim q_\theta}[s_c(S)=1,\ S\subseteq T]$ be the probability that a single rollout certifies $T$, and for each minimal witness $m \in \Msf(c)$, let
$p_m=\Pr_{S\sim q_\theta}[s_c(S)=1,\ \Phi(S)=m]$ be the probability that a single rollout canonicalizes to $m$. Because the $K$ rollouts in a group are drawn independently, the expected coverage and counting returns admit exact pass@$K$-style decompositions:
\begin{align}
J^{\mathrm{cov}}_K
&=\mathbb{E}_{T\sim\mu}\big[1-(1-\rho_\theta(T))^K\big],\label{eq:passk-cov}\\
J^{\mathrm{cnt}}_K
&=\sum_{m\in\Msf(c)}\big[1-(1-p_m)^K\big].\label{eq:passk-cnt}
\end{align}
Equation~\eqref{eq:passk-cov} shows that $J^{\mathrm{cov}}_K$ is the $\mu$-weighted average probability that a lattice point $T$ is certified by at least one of the $K$ proposals, while Equation~\eqref{eq:passk-cnt} sums the probability that each canonical witness $m$ appears at least once in the batch. For any group size $K \ge 2$, the derivative of $1-(1-p)^K$ with respect to $p$ is $K(1-p)^{K-1}$, which is strictly larger for smaller $p$, naturally prioritizing under-sampled witnesses. Of course, a witness assigned strictly zero probability ($p_m=0$) is never sampled on-policy, meaning sufficient exploration remains essential to ensure all regions of the antichain are reachable during training.

\subsection{The counting measure}
When a canonicalizer $\Phi$ is available to shrink any accepted proposal to a minimal witness, the counting measure scores a batch by the number of distinct canonical witnesses it recovers. Because verifier queries made inside $\Phi$ count against the total evaluation budget, this variant is practical primarily when $\Phi$ is computationally cheap and reliable. Given the accepted proposals $W=\{S_j:y_j=1\}$ in a batch, let $\Phi(W)=\{\Phi(S):S\in W\}$ denote the set of distinct canonical witnesses produced by the group. Because every member of $\Phi(W)$ is minimal, $\Phi(W)$ forms an antichain and coincides with the minimal elements of the up-set $U(\Phi(W))$. The counting valuation assigns unit weight to each distinct canonical witness:
\begin{equation}
\mathcal R_{\#}\big(U(\Phi(W))\big)=\big|\Phi(W)\big|.
\end{equation}
Note that $\mathcal R_{\#}$ is a function of the canonicalized region $U(\Phi(W))$ rather than the raw region $U(W)$, since a greedy canonicalizer $\Phi$ need not be invariant across supersets (Appendix~\ref{app:nonmono}). Letting $W_{-i}=\{S_j:j\ne i,\ y_j=1\}$, the resulting leave-one-out deletion credit reduces to a binary uniqueness indicator:
\begin{equation}
\begin{aligned}
A_i&=\mathcal R_{\#}\big(U(\Phi(W))\big)-\mathcal R_{\#}\big(U(\Phi(W_{-i}))\big)\\
   &=y_i\,\mathbf 1\!\big[\,\Phi(S_i)\neq\Phi(S_j)\ \ \forall\,j\neq i:\ y_j=1\,\big],
\end{aligned}
\end{equation}
which awards unit credit $A_i=1$ if and only if rollout $i$ succeeds and no other accepted rollout in the batch maps to the same canonical witness $\Phi(S_i)$. (If two rollouts map to the same witness, removing either one leaves $|\Phi(W)|$ unchanged, so both receive $A_i=0$.) This defines the l1o estimator for the counting measure; the l2o baseline of Appendix~C is used exclusively with the coverage measure.

Table~\ref{tab:objective-mc-ablation} compares the coverage and counting measures across the eight MaxSAT formulas. Coverage uses only the verifier bit and must learn minimality, reaching a $0.76$ born-minimal rate and two thirds of the antichain; the counting canonicalizer forces every output minimal and nearly saturates recall, at the cost of the extra verifier calls made by $\Phi$.

\begin{table}[tb]
\centering
\footnotesize
\setlength{\tabcolsep}{4pt}
\begin{tabular}{@{}lccc@{}}
\toprule
Measure & Interface &
\shortstack{minimal-output\\rate $\uparrow$} &
\shortstack{antichain\\recall $\uparrow$} \\
\midrule
Coverage $\mathcal R_\mu$ & $S,s_c(S),\mu$
& $0.758{\scriptstyle\pm}0.010$ & $0.667{\scriptstyle\pm}0.016$ \\
Counting & $S,s_c(S),\Phi(S)$
& $1.000{\scriptstyle\pm}0.000$ & $0.993{\scriptstyle\pm}0.007$ \\
\bottomrule
\end{tabular}
\caption{Comparison of coverage and counting measures on the eight MaxSAT instances (mean $\pm$ standard deviation over three seeds). The interface column lists the inputs required by each return. Minimal-output rate is the fraction of delivered witnesses that are minimal (the raw born-minimal rate for coverage, and $1.000$ by construction after canonicalization for counting).}
\label{tab:objective-mc-ablation}
\end{table}

\subsection{Finite-budget optimum of the counting measure}
\begin{theorem}[Finite-budget optimum of the counting measure]
Fix $L=|\Msf(c)|\ge1$, and assume the terminal policy class can realize any
categorical distribution over $\Msf(c)$ (with any residual mass placed on failed proposals). For any group size $K\ge2$, the counting objective~\eqref{eq:passk-cnt} has the unique maximizer
\begin{equation*}
p_m^\star=\frac{1}{L}\quad(\forall m\in\Msf(c)),\qquad p_\bot^\star=0,
\end{equation*}
achieving the optimal expected count
\begin{equation*}
\max_p J_K^{\mathrm{cnt}}=
L\left[1-\left(1-\frac{1}{L}\right)^K\right].
\end{equation*}
For $K=1$, any failure-free distribution ($p_\bot^\star=0$) is optimal.
\end{theorem}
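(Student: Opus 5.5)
Our plan is to treat \eqref{eq:passk-cnt} as a separable concave maximization over the probability simplex. The decision variables are $p=(p_m)_{m\in\Msf(c)}$ together with a residual failure mass $p_\bot=1-\sum_m p_m\ge0$. We write $J_K^{\mathrm{cnt}}(p)=\sum_m f(p_m)$ with $f(x)=1-(1-x)^K$ on $[0,1]$. For $K\ge2$ we have $f'(x)=K(1-x)^{K-1}>0$ on $[0,1)$ and $f''(x)=-K(K-1)(1-x)^{K-2}<0$ on $[0,1)$. So $f$ is strictly increasing and strictly concave, and $J$ is a strictly concave function on the convex feasible set $\{p_m\ge0,\ \sum_m p_m\le1\}$. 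Any maximizer is therefore unique once we exhibit one.

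We proceed in three steps. First, we show that $p_\bot=0$ at any maximizer. If $p_\bot>0$, moving that mass onto any single $p_m$ strictly increases $f(p_m)$ because $f$ is strictly increasing, and it leaves the other terms unchanged. Second, on the face $\sum_m p_m=1$, we apply Jensen's inequality to the strictly concave $f$:
\[
\frac1L\sum_m f(p_m)\le f\!\left(\frac1L\sum_m p_m\right)=f\!\left(\frac1L\right),
\]
with equality iff all $p_m$ are equal, that is, $p_m=1/L$. This gives both the maximizer and the value $L[1-(1-1/L)^K]$. Alternatively, the KKT conditions $f'(p_m)=\lambda$ on the support force all active $p_m$ to be equal. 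An inactive coordinate $p_m=0$ would violate stationarity since $f'(0)=K$ exceeds $f'(x)$ for every $x>0$, so all coordinates are active. Third, for $K=1$ the function $f(x)=x$ is linear, so $J=\sum_m p_m=1-p_\bot$. Every distribution with $p_\bot=0$ attains the maximum $1$, and uniqueness fails, as the statement says.

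The main obstacle is the modelling step rather than the optimization. We must justify that the counting objective depends on the policy only through $(p_m)_m$ and that the feasible set is exactly the simplex. Under the realizability assumption, we restrict to categorical laws over $\Msf(c)\cup\{\bot\}$, and on such laws $\Phi(m)=m$, so $p_m=q(m)$. We must also address accepted non-minimal proposals, which contribute to some $p_m$ through $\Phi$ and do not fall outside the simplex. Since $\sum_m p_m\le\Pr[s_c(S)=1]\le1$ always, the optimal value over any richer policy class is still bounded by the simplex optimum. Uniqueness is then read at the level of the vector $p$, or of $q$ under the stated class. One degenerate edge case also needs checking: for $L=1$ the formula gives $p^\star=1$ with value $1$, which is consistent with the general argument.
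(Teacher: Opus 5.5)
Your proposal is correct and follows the paper's proof essentially step for step: strict monotonicity of $f(x)=1-(1-x)^K$ forces $p_\bot=0$, Jensen's inequality for the strictly concave $f$ on the face $\sum_m p_m=1$ gives the unique uniform maximizer with value $L[1-(1-1/L)^K]$, and for $K=1$ the objective reduces to $1-p_\bot$. Your KKT cross-check and your observation that $\sum_m p_m\le\Pr[s_c(S)=1]\le 1$, which means uniqueness is read at the level of the vector $p$, are sound additions that the paper leaves implicit.
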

\begin{proof}
Because $f_K(p)=1-(1-p)^K$ is strictly increasing on $[0,1]$, any optimal distribution must assign zero mass to failure ($p_\bot^\star=0$, so $\sum_{m\in\Msf(c)} p_m = 1$). For $K\ge2$ and $L>1$, $f_K(p)$ is strictly concave on $[0,1]$. Applying Jensen's inequality across the $L$ witnesses yields
$\frac{1}{L}\sum_{m\in\Msf(c)} f_K(p_m)\le f_K(1/L)$, with equality if and only if $p_m = 1/L$ for all $m\in\Msf(c)$. The case $L=1$ is trivial, and for $K=1$ the objective reduces to $\sum_m p_m = 1$, which is maximized by every failure-free distribution.
\end{proof}

Under the optimal uniform allocation $p_m^\star = 1/L$, a batch of $K$ proposals achieves expected antichain recall $1-(1-1/L)^K$ (and clearly no single batch of size $K<L$ can recover all $L$ witnesses simultaneously).

\subsection{Deployment-budget recovery}
\label{app:kinf-limit}
At test time, evaluating~\eqref{eq:passk-cnt} at a deployment budget of $B$ independent rollouts gives the exact expected number of distinct minimal witnesses recovered in $B$ draws:
$\sum_{m\in\Msf(c)}\big(1-(1-p_m)^B\big)$, where $p_m = q_\theta(m)$ is the frozen policy's probability of terminating at minimal witness $m$. On the enumerable MaxSAT lattice, we compute $p_m$ exactly via a single forward pass over the subset DAG and plot the analytical deployment curves alongside empirical prefix counts in Figure~\ref{fig:kmatched}. As $B\to\infty$, each curve asymptotes to the number of minimal witnesses in $\Msf(c)$ supported by the policy.

\subsection{Asymptotic recovery under full support}
\begin{theorem}[Asymptotic recovery under full support]
For a fixed terminal distribution $q$, let
$\mathcal{A}_q=\{S\in F(c):q(S)>0\}$ denote its accepted support and let $L=|\Msf(c)|$. Then
\begin{equation*}
\lim_{K\to\infty}J_K^{\mathrm{cov}}(q)=\mu(\upset{\mathcal{A}_q}).
\end{equation*}
If $\mu$ has full support on $2^{\mathcal E}$, this limit equals the maximum possible coverage $\mu(F(c))$ if and only if
$q(m)>0$ for every minimal witness $m\in\Msf(c)$. Furthermore, if
$\epsilon=\min_{m\in\Msf(c)}q(m)>0$, then for every finite $K$,
\begin{align*}
\mu(F(c))-J_K^{\mathrm{cov}}(q)&\le\mu(F(c))(1-\epsilon)^K,\\
\Pr[U_K=F(c)]&\ge1-Le^{-K\epsilon}.
\end{align*}
\end{theorem}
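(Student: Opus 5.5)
The whole argument will run through the pointwise identity~\eqref{eq:passk-cov}, $J_K^{\mathrm{cov}}(q)=\mathbb{E}_{T\sim\mu}[1-(1-\rho(T))^K]$ with $\rho(T)=q(\{S\in F(c):S\subseteq T\})$. Since $2^{\mathcal E}$ is finite, the expectation is a finite sum, so limits pass inside. For each $T$, the term $1-(1-\rho(T))^K$ tends to $1$ if $\rho(T)>0$ and stays at $0$ otherwise. Moreover, $\rho(T)>0$ holds exactly when some $S\in\mathcal A_q$ satisfies $S\subseteq T$, that is, when $T\in\upset{\mathcal A_q}$. Summing against $\mu$ gives $\lim_K J_K^{\mathrm{cov}}(q)=\mu(\upset{\mathcal A_q})$.

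For the characterization, I will first note that $\mathcal A_q\subseteq F(c)$ and $F(c)$ is an up-set, so $\upset{\mathcal A_q}\subseteq F(c)$. Full support then makes $\mu(\upset{\mathcal A_q})=\mu(F(c))$ equivalent to set equality $\upset{\mathcal A_q}=F(c)$.
\begin{itemize}
\item If $q(m)>0$ for all $m\in\Msf(c)$, then $\Msf(c)\subseteq\mathcal A_q$. Hence $\upset{\mathcal A_q}\supseteq\upset{\Msf(c)}=F(c)$.
\item Conversely, suppose $q(m)=0$ for some $m\in\Msf(c)$. Then $m\notin\upset{\mathcal A_q}$, because any $S\in\mathcal A_q$ with $S\subseteq m$ is sufficient, and minimality forces $S=m$, contradicting $q(m)=0$. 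So the point $m\in F(c)\setminus\upset{\mathcal A_q}$ carries positive $\mu$-mass and the coverage is strictly smaller.
\end{itemize}
This mirrors the point-mass argument in Proposition~\ref{prop:vi}.

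For the finite-$K$ gap, I will use $\mu(F(c))-J_K^{\mathrm{cov}}=\sum_{T\in F(c)}\mu(T)(1-\rho(T))^K$. This uses that $\rho(T)=0$ for $T\notin F(c)$, since a sufficient $S\subseteq T$ would put $T$ in $F(c)$. Every $T\in F(c)$ contains some $m\in\Msf(c)$, and $m\in F(c)$ with $m\subseteq T$, so $\rho(T)\ge q(m)\ge\epsilon$. Each summand is therefore at most $\mu(T)(1-\epsilon)^K$, and summing over $T\in F(c)$ gives the bound $\mu(F(c))(1-\epsilon)^K$.

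For the probability statement, I will argue as in Proposition~\ref{prop:groupmin}. If every $m\in\Msf(c)$ appears among the $K$ i.i.d.\ draws, then $U_K\supseteq\upset{\Msf(c)}=F(c)$. Also $U_K\subseteq F(c)$ always holds by monotonicity, so in that event $U_K=F(c)$. A fixed $m$ is missed with probability $(1-q(m))^K\le(1-\epsilon)^K\le e^{-K\epsilon}$, and a union bound over the $L$ witnesses gives $\Pr[U_K=F(c)]\ge1-Le^{-K\epsilon}$.

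No step is a real obstacle. The only care needed is the bookkeeping: sums restrict to $F(c)$ because $\rho$ vanishes outside it, and the limit interchange is justified by finiteness of the lattice.
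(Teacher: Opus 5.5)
Your proposal is correct and follows the paper's proof step for step. It passes termwise limits through the identity for $J_K^{\mathrm{cov}}$ over the finite lattice, uses full support plus minimality of each $m$ to reduce coverage equality to $\upset{\mathcal A_q}=F(c)$, bounds the gap using $\rho(T)\ge\epsilon$ on $F(c)$, and finishes with a union bound over the $L$ witnesses for $\Pr[U_K=F(c)]$. Your explicit remarks that $\rho$ vanishes off $F(c)$ and that $U_K\subseteq F(c)$ always holds just make precise some bookkeeping the paper leaves implicit.
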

\begin{proof}
In~\eqref{eq:passk-cov}, the term $1-(1-\rho_q(T))^K$ converges to $1$ as $K\to\infty$ if and only if $\rho_q(T)>0$, which holds precisely when $T \in \upset{\mathcal A_q}$. Because $2^{\mathcal E}$ is finite, termwise convergence gives $\lim_{K\to\infty}J_K^{\mathrm{cov}}(q)=\mu(\upset{\mathcal{A}_q})$.
Under a full-support measure $\mu$, $\mu(\upset{\mathcal{A}_q}) = \mu(F(c))$ if and only if $\upset{\mathcal{A}_q} = F(c)$, which requires every minimal witness $m\in\Msf(c)$ to lie in $\upset{\mathcal{A}_q}$. By subset-minimality of $m$, the only sufficient subset of $m$ is $m$ itself, so $m \in \upset{\mathcal{A}_q}$ if and only if $q(m)>0$. Finally, when $\min_{m\in\Msf(c)} q(m) \ge \epsilon > 0$, every $T\in F(c)$ has $\rho_q(T) \ge \epsilon$ and remains uncertified after $K$ draws with probability at most $(1-\epsilon)^K$. Summing over $T\in F(c)$ yields the coverage gap bound, while a union bound over the $L$ minimal witnesses in $\Msf(c)$ gives $\Pr[U_K=F(c)] \ge 1 - L(1-\epsilon)^K \ge 1 - Le^{-K\epsilon}$.
\end{proof}

The exact same concentration argument establishes Proposition~\ref{prop:groupmin}:

\begin{groupminrestate}
Let $W_K$ denote the set of distinct accepted proposals in a batch of size $K$. If the policy assigns probability $q(m)\ge\epsilon>0$ to every minimal witness $m\in\Msf(c)$, where $L=|\Msf(c)|$, then the group-minimal successes match the ground-truth antichain, $\minop_{\subseteq}W_K=\Msf(c)$, with probability at least $1-Le^{-K\epsilon}$.
\end{groupminrestate}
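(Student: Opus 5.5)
The plan is to show that the event ``every minimal witness $m\in\Msf(c)$ appears at least once among $S_1,\dots,S_K$'' implies $\minop_{\subseteq}W_K=\Msf(c)$, and then to bound the probability of that event from below exactly as in the preceding asymptotic-recovery theorem.

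\emph{Deterministic step.} Suppose every $m\in\Msf(c)$ is drawn at least once. Because $s_c(m)=1$, each such $m$ belongs to $W_K$, so $\Msf(c)\subseteq W_K$.

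We first show every $m\in\Msf(c)$ is inclusion-minimal in $W_K$. Any $T\in W_K$ with $T\subseteq m$ is accepted. By subset-minimality of $m$, the only sufficient subset of $m$ is $m$ itself, so $T=m$.

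Conversely, take any $S\in W_K$ that is not in $\Msf(c)$. Since $S$ is sufficient, $S\in F(c)=\upset{\Msf(c)}$, so $S\supsetneq m$ for some $m\in\Msf(c)$. That $m$ also lies in $W_K$, hence $S$ is not group-minimal. Together these give $\minop_{\subseteq}W_K=\Msf(c)$ on this event.

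\emph{Probabilistic step.} The draws are i.i.d.\ from $q$, so a fixed $m$ is absent from all $K$ draws with probability $(1-q(m))^K\le(1-\epsilon)^K\le e^{-K\epsilon}$. The last inequality uses $1-x\le e^{-x}$. A union bound over the $L$ witnesses shows that some witness is missed with probability at most $Le^{-K\epsilon}$. The stated bound $1-Le^{-K\epsilon}$ follows.

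I do not expect a real obstacle here. The only point needing care is the deterministic inclusion $\minop_{\subseteq}W_K\subseteq\Msf(c)$, which uses monotonicity through $F(c)=\upset{\Msf(c)}$ to guarantee that every non-minimal success sits strictly above some minimal witness. For the deployment remark, the same argument with $K$ replaced by $B$ carries over verbatim.
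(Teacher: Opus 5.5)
Your proposal is correct and follows essentially the same route as the paper. Both use a union bound to show that every $m\in\Msf(c)$ appears among the $K$ draws with probability at least $1-Le^{-K\epsilon}$, and then argue deterministically that on this event each $m\in\Msf(c)$ is inclusion-minimal in $W_K$ while every non-minimal success strictly contains some $m\in W_K$, so $\minop_{\subseteq}W_K=\Msf(c)$.
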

\begin{proof}
Each minimal witness $m\in\Msf(c)$ is absent from the $K$ draws with probability at most
$(1-\epsilon)^K\le e^{-K\epsilon}$. By the union bound, all $L$ members of $\Msf(c)$ appear in $W_K$ simultaneously with probability at least $1-Le^{-K\epsilon}$. Conditioned on this event, every $S\in W_K$ is sufficient ($s_c(S)=1$) and therefore contains some $m\in\Msf(c) \subseteq W_K$. Because no member of $\Msf(c)$ can contain another sufficient set as a proper subset, every $m\in\Msf(c)$ is inclusion-minimal in $W_K$. Conversely, any non-minimal $S\in W_K \setminus \Msf(c)$ strictly contains some $m\in\Msf(c) \subseteq W_K$ and is therefore pruned by $\minop_{\subseteq}W_K$. Thus $\minop_{\subseteq}W_K=\Msf(c)$.
\end{proof}

Whenever a batch contains the complete antichain $\Msf(c)$, its group-minimal successes coincide with $\Msf(c)$, and every non-minimal accepted proposal in the batch receives strictly zero deletion credit (Proposition~\ref{prop:support}).

\section{Group policy gradients and control variates}
\label{app:pg}
In this section, we derive the leave-one-out (l1o) deletion credit and the leave-two-out (l2o) baseline directly from the grouped objective $J_K(\theta)$.

\subsection{Grouped objective and REINFORCE}
\paragraph{Admissible proposal processes.}
Our policy-gradient derivations hold for any episodic decision process that outputs terminal subsets under four mild conditions:
(i)~\emph{Finite horizon:} Every episode terminates within a finite number of steps, ensuring a finite trajectory space where expectations and gradients commute.
(ii)~\emph{Terminal verifier interface:} A deterministic readout map $\psi$ maps each completed trajectory $\tau$ to a terminal subset $S=\psi(\tau)\subseteq\mathcal E$; the verifier is queried once per episode on $S$, and the group return depends on trajectory $\tau_i$ solely through $S_i=\psi(\tau_i)$ and $y_i=s_c(S_i)$.
(iii)~\emph{Differentiable policy factorization:} Trajectory probabilities factorize as $P_\theta(\tau\mid c)=\rho(\tau)\prod_t\pi_\theta(a_t\mid x_t)$, where $\rho(\tau)$ collects $\theta$-independent transition factors and each action probability $\pi_\theta(a_t\mid x_t)$ is positive and differentiable with respect to $\theta$.
(iv)~\emph{Independent rollout groups:} The $K$ trajectories in each group are sampled i.i.d.\ from $P_\theta(\cdot\mid c)$.
Under conditions (i) and (ii), the expected objective $J_K(\theta)$ depends only on the induced terminal distribution $q_\theta(S)=\Pr_{\tau\sim P_\theta}[\psi(\tau)=S]$, invariant to the internal state representation or intermediate trajectory path. For a group of $K$ i.i.d.\ trajectories at context $c$, the group state is
\begin{equation*}
U_K=\bigcup_{i:\,y_i=1}\upset{S_i},\qquad
\mathcal R(\varnothing)=0,\ \ \mathcal R\ \text{monotone},
\end{equation*}
matching~\eqref{eq:upset-value}.

\paragraph{Reachability and expressiveness.}
Two additional conditions connect the trajectory-level policy $\pi_\theta$ to the terminal distribution theorems in Appendix~B:
First, every minimal witness must be reachable by the policy class, $\Msf(c)\subseteq\bigcup_\theta\operatorname{supp}q_\theta$; by~\eqref{eq:passk-cnt}, any unreachable witness has $p_m=0$ and permanently caps recall. Second, the closure of the realizable terminal distributions $\{q_\theta\}_\theta$ must contain all categorical distributions supported on $\Msf(c)$, ensuring that the uniform finite-budget optimum and full-support asymptotic recovery are realizable. In our experiments, MaxSAT episodes add elements sequentially starting from $\varnothing$, while Suzuki and circuit episodes start from $\mathcal E$ and prune elements (with add actions available for backtracking). Both parameterizations satisfy reachability and expressiveness whenever the horizon is large enough to reach all members of $\Msf(c)$. More generally, any policy that decides element inclusion in a fixed canonical order generates each subset along exactly one path, so the chain rule over the prefix tree realizes every law on the reachable subsets, in particular every law on $\Msf(c)$, and adding extra editing actions preserves this expressiveness. Autoregressive sequence models and factorized Bernoulli masks likewise satisfy conditions (i) to (iv) whenever their outputs parse to subsets.

\paragraph{Score-function gradient.}
Fix a context $c$. Because the $K$ trajectories in a batch are drawn i.i.d.\ from $P_\theta$, the expected group return is
\begin{equation*}
J_K(\theta)=\sum_{\tau_{1:K}}\Big(\prod_{i=1}^K P_\theta(\tau_i)\Big)\mathcal R(U_K).
\end{equation*}
Applying the log-derivative trick $\nabla_\theta p=p\,\nabla_\theta\log p$ to the product $\prod_{i=1}^K P_\theta(\tau_i)$ gives
\begin{align}
\nabla_\theta J_K
&=\sum_{\tau_{1:K}}\Big(\prod_{i=1}^K P_\theta(\tau_i)\Big)\mathcal R(U_K)\sum_{i=1}^K\nabla_\theta\log
P_\theta(\tau_i)\notag\\
&=\mathbb{E}\Big[\mathcal R(U_K)\sum_{i=1}^K\nabla_\theta\log P_\theta(\tau_i)\Big].
\label{eq:reinforce}
\end{align}
Because $\rho(\tau)$ does not depend on $\theta$, $\nabla_\theta\log P_\theta(\tau)=\sum_t
\nabla_\theta\log\pi_\theta(a_t\mid x_t)$, requiring no gradients through the black-box verifier. Equation~\eqref{eq:reinforce} is standard group-level REINFORCE: it is unbiased, but it weights every trajectory in the batch by the exact same total group return $\mathcal R(U_K)$, failing to isolate which rollout actually contributed coverage.

\subsection{Admissible baselines and leave-one-out credit}
\paragraph{Admissible control variates.}
For each rollout $i$, let $b_i$ be any baseline that does not depend on trajectory $\tau_i$ (though it may depend arbitrarily on context $c$ and the remaining $K-1$ trajectories $\tau_{-i}$). Conditioning on $\tau_{-i}$ and summing over $\tau_i$, we have
\begin{align}
\mathbb{E}\big[b_i\,\nabla_\theta\log P_\theta(\tau_i)\big]
&=\mathbb{E}_{\tau_{-i}}\Big[b_i\!\sum_{\tau_i}P_\theta(\tau_i)\,\nabla_\theta\log
P_\theta(\tau_i)\Big]\notag\\
&=\mathbb{E}_{\tau_{-i}}\Big[b_i\,\nabla_\theta\!\sum_{\tau_i}P_\theta(\tau_i)\Big]\notag\\
&=\mathbb{E}_{\tau_{-i}}[b_i\,\nabla_\theta 1]=0,
\label{eq:legal}
\end{align}
since $\sum_{\tau_i}P_\theta(\tau_i)=1$. Thus, subtracting $\sum_{i=1}^K b_i\nabla_\theta\log P_\theta(\tau_i)$ from~\eqref{eq:reinforce} leaves the expected policy gradient completely unchanged for any choice of $\tau_i$-independent baselines $\{b_i\}$.

\paragraph{The leave-one-out deletion credit (l1o).}
The natural baseline tailored to our group coverage objective is the coverage achieved when rollout $i$ is omitted from the group:
$b_i=\mathcal R(U_{-i})$, where
$U_{-i}=\bigcup_{j\ne i,\,y_j=1}\upset{S_j}$. Because $U_{-i}$ depends only on $\tau_{-i}$, $b_i$ is admissible by~\eqref{eq:legal}. Substituting $b_i=\mathcal R(U_{-i})$ into~\eqref{eq:reinforce} yields the unbiased MWRL policy gradient:
\begin{align}
\nabla_\theta J_K&=\mathbb{E}\Big[\sum_{i=1}^K A_i\sum_t\nabla_\theta\log\pi_\theta
(a_{i,t}\mid x_{i,t})\Big],\notag\\
A_i&=\mathcal R(U_K)-\mathcal R(U_{-i}),
\label{eq:mwrladv}
\end{align}
recovering the deletion credit of~\eqref{eq:advantage}.

\subsection{Leave-two-out baseline}
\label{app:loo2}
Because $\mathcal R$ is monotone, the raw l1o deletion credit $A_i$ in~\eqref{eq:mwrladv} is always non-negative ($A_i \ge 0$). To center the credits so that failed or redundant rollouts receive negative advantages without introducing gradient bias, we subtract a second admissible baseline constructed from the remaining $K-1$ rollouts. For each $j\ne i$, let
$U_{-i,-j}=\bigcup_{k\ne i,j,\,y_k=1}\upset{S_k}$ denote the region certified when both rollouts $i$ and $j$ are excluded, and define the leave-two-out (l2o) baseline as the average marginal contribution among the other $K-1$ proposals inside $\tau_{-i}$:
\begin{equation*}
b_i=\mathcal R(U_{-i})-\frac{1}{K-1}\sum_{j\ne i}\mathcal R\big(U_{-i,-j}\big).
\end{equation*}
The baseline-adjusted advantage is then $\widetilde A_i=A_i-b_i$. Because both $\mathcal R(U_{-i})$ and every $\mathcal R(U_{-i,-j})$ depend exclusively on $\{\tau_k\}_{k\ne i}$, $b_i$ is statistically independent of $\tau_i$. By~\eqref{eq:legal},
$\mathbb{E}[\widetilde A_i\,\nabla_\theta\log P_\theta(\tau_i\mid c)]=\mathbb{E}[A_i\,\nabla_\theta\log P_\theta(\tau_i\mid c)]$, so the l2o estimator preserves the exact expected gradient $\nabla_\theta J_K$.

\subsection{Support of deletion credit}
\begin{supportrestate}
If $\mathcal R=\mathcal R_\mu$ for a full-support measure $\mu$, then
\begin{equation}
A_i>0\iff y_i=1\ \text{and}\ \nexists\,j\ne i:\ y_j=1,\ S_j\subseteq S_i.
\label{eq:support}
\end{equation}
\end{supportrestate}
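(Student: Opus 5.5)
The plan is to start from the closed form of the deletion credit under the coverage measure, $A_i=\mu(U_K)-\mu(U_{-i})$. Since $U_K=U_{-i}\cup\upset{S_i}$ when $y_i=1$ and $U_K=U_{-i}$ when $y_i=0$, we get $A_i=y_i\,\mu(\upset{S_i}\setminus U_{-i})$. Because $\mu$ has full support, $A_i>0$ holds exactly when $y_i=1$ and the set difference $\upset{S_i}\setminus U_{-i}$ is nonempty. The proposition therefore reduces to a purely order-theoretic claim: for $y_i=1$, we have $\upset{S_i}\subseteq U_{-i}$ if and only if some $j\ne i$ with $y_j=1$ satisfies $S_j\subseteq S_i$.

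For the direction ($\Leftarrow$ of the reduced claim, i.e.\ the failure of the right-hand side forces $A_i=0$), we treat two cases. If $y_i=0$, then $A_i=0$ immediately. If some accepted $S_j\subseteq S_i$ with $j\ne i$ exists, then every $T\supseteq S_i$ also satisfies $T\supseteq S_j$. Hence $\upset{S_i}\subseteq\upset{S_j}\subseteq U_{-i}$, the difference is empty, and $A_i=0$. This case covers both duplicates ($S_j=S_i$) and strict supersets.

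For the converse, assume $y_i=1$ and that no accepted $S_j$ with $j\ne i$ lies inside $S_i$. We exhibit a witness point, namely the lattice point $S_i$ itself. It lies in $\upset{S_i}$. It lies in $U_{-i}$ only if $S_i\supseteq S_j$ for some accepted $j\ne i$, which the hypothesis excludes. So $S_i\in\upset{S_i}\setminus U_{-i}$, and full support gives $A_i\ge\mu(\{S_i\})>0$. This is the same point-mass argument used in the intuition for Proposition~\ref{prop:vi}.

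No step here is really an obstacle. The only point needing care is the bookkeeping of indices in the multiset setting: the condition ``$\nexists j\ne i$'' must range over other rollout indices, not over distinct sets, so that two identical accepted copies both receive zero credit. The up-set identity $U_K=U_{-i}\cup\upset{S_i}$ should also be stated explicitly so that the closed form of $A_i$ is justified from equation~\eqref{eq:advantage} rather than merely asserted.
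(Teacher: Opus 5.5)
Your proposal is correct and follows essentially the same route as the paper: it derives $A_i=y_i\,\mu(\upset{S_i}\setminus U_{-i})$ from $U_K=U_{-i}\cup\upset{S_i}$, reduces the claim to whether some accepted $S_j\subseteq S_i$ with $j\ne i$ exists, and uses full support to turn nonemptiness of the difference into positive mass. The paper compresses your two-case argument into one observation: since $U_{-i}$ is an up-set, $\upset{S_i}\setminus U_{-i}=\varnothing$ if and only if $S_i\in U_{-i}$. Your explicit witness point $S_i$ is exactly the content of that step.
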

\begin{proof}
Rollout $i$ contributes $\upset{S_i}$ to the union when $y_i=1$ and $\varnothing$ otherwise, so $U_K=U_{-i}\cup\upset{S_i}$ if $y_i=1$ and $U_K=U_{-i}$ if $y_i=0$. Consequently, $A_i=y_i\cdot\mu(\upset{S_i}\setminus U_{-i})\ge0$.
Because $U_{-i}$ is an up-set, the set difference $\upset{S_i}\setminus U_{-i}$ is empty if and only if $S_i\in U_{-i}$, which by definition of $U_{-i}$ holds if and only if there exists $j\ne i$ with $y_j=1$ and $S_j\subseteq S_i$. Since $\mu$ assigns strictly positive mass to every non-empty subset of $2^{\mathcal E}$, $\mu(\upset{S_i}\setminus U_{-i})>0$ holds if and only if~\eqref{eq:support} is satisfied.
\end{proof}
Thus, positive deletion credit is awarded exclusively to non-duplicated group-minimal successes (Figure~\ref{fig:ledger}).

\begin{figure}[tb]
\centering
\includegraphics[width=\linewidth]{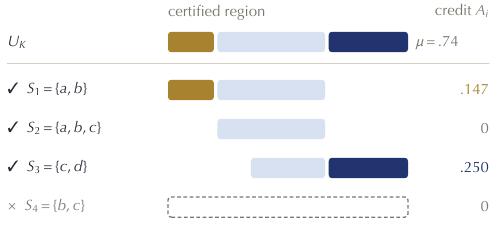}
\caption{Partition of certified measure and deletion credit for the example in Figure~\ref{fig:credit}, where the target minimal witnesses are $\{a,b\}$ and $\{c,d\}$. Horizontal bar widths are proportional to $\mu$-measure. The top bar displays the total group-certified region $U_K$, partitioned by which subset of proposals certifies each portion. Each lower bar displays the up-set $\upset{S_i}$ of an individual proposal $S_i$, where the solid colored segment represents $\upset{S_i}\setminus U_{-i}$ (the region certified exclusively by $S_i$, whose measure equals $A_i$). The minimal witnesses $S_1=\{a,b\}$ and $S_3=\{c,d\}$ receive positive credit, whereas the redundant superset $S_2=\{a,b,c\}$ and the rejected proposal $S_4=\{b,c\}$ receive $A_i=0$, illustrating Proposition~\ref{prop:support}.}
\label{fig:ledger}
\end{figure}

\section{Base-measure families and scalable credit computation}
\label{app:mc}

\subsection{Witness values}
\label{app:measures}

We first characterize cardinality-dependent base measures on $2^{\mathcal E}$ via their up-set witness values $\nu(S)=\mu(\upset{S})$.

\begin{rankrestate}
Suppose $\nu(S)=\mu(\upset{S})$ depends on $S$ only through its cardinality $|S|$. Then the single-element refinement ratio $\nu(|S|)/\nu(|S|{+}1)$ is constant across all cardinalities $|S|$ if and only if $\nu(S)=p^{|S|}$ for some parameter $p\in(0,1)$, which is uniquely realized by the product measure $\mu_p$ that includes each element $e \in \mathcal E$ independently with probability $p$.
\end{rankrestate}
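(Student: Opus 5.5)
The plan has three parts: a functional equation for the witness values, a Möbius-inversion step that recovers $\mu$ from those values, and a converse check.

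\emph{Step 1: the functional equation.} Write $\nu(k)$ for the common value of $\nu(S)$ over all $|S|=k$, for $k=0,\dots,n$. Since $\upset{\varnothing}=2^{\mathcal E}$ and $\mu$ is a probability measure, $\nu(0)=1$. Suppose the ratio $\nu(k)/\nu(k+1)=r$ is constant. Induction on $k$ then gives $\nu(k)=r^{-k}$. Set $p:=1/r$, so that $\nu(S)=p^{|S|}$.

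For the range of $p$, I would argue as follows.
\begin{itemize}
\item $p>0$ because $\nu(k)\ge\mu(\{\mathcal E\})>0$ under full support. In any case the ratio must be defined, which forces $\nu(k)>0$.
\item $p<1$ (for $n\ge1$) because $\nu(0)-\nu(1)=\mu(\{T:e\notin T\})\ge\mu(\{\varnothing\})>0$.
\end{itemize}
The converse direction is a direct check. The product measure $\mu_p$ gives $\Pr[T\supseteq S]=p^{|S|}$, so it has constant ratio $1/p$.

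\emph{Step 2: uniqueness of the realizing measure.} This is the substantive step. Up-set masses determine $\mu$ by Möbius inversion on the Boolean lattice. Concretely,
\[
\mu(\{T\})=\sum_{S\supseteq T}(-1)^{|S\setminus T|}\,\nu(S).
\]
This identity follows from writing $\nu(S)=\sum_{T'\supseteq S}\mu(\{T'\})$ and applying inclusion--exclusion over the elements outside $T$. Substituting $\nu(S)=p^{|S|}$ and grouping the terms by $j=|S\setminus T|$ gives
\[
\mu(\{T\})=p^{|T|}\sum_{j=0}^{n-|T|}\binom{n-|T|}{j}(-p)^j=p^{|T|}(1-p)^{n-|T|}.
\]
This is exactly $\mu_p$. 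So any $\mu$ with $\nu(S)=p^{|S|}$ must equal $\mu_p$, which is the uniqueness claim. As a consistency check, the formula also gives $\mu_p(\{T\})>0$ for all $T$, so $\mu_p$ has full support, as the paper's base measures require.

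\emph{Main obstacle and caveats.} I expect the only real obstacle to be stating the Möbius inversion cleanly, namely the fact that the map $\mu\mapsto\nu$ is injective. I would give the one-line inclusion--exclusion justification rather than cite lattice theory. A second caveat concerns the statement's wording: "for some $p\in(0,1)$" implicitly needs $\mu$ to have full support, or at least $\nu>0$, so that the ratio is defined. I would state that assumption explicitly, consistent with the paper's standing hypothesis on base measures.
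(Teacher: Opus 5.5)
Your proposal is correct, and its first half is the paper's proof. The paper likewise normalizes $\nu(0)=\mu(2^{\mathcal E})=1$, iterates the constant ratio to get $\nu(k)=r^{-k}=p^k$ with $p=1/r$, and checks the converse directly. It then appeals to full support for $p\in(0,1)$ and cites its product-measure lemma to show that $\mu_p$ realizes $\nu(S)=p^{|S|}$.

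The genuine difference is your Step~2. The paper never argues the word ``uniquely'': it only exhibits $\mu_p$ as one realizing measure. Even read generously, its product lemma gives uniqueness only among product measures, since singletons force every $p_t=p$. Your M\"obius inversion $\mu(\{T\})=\sum_{S\supseteq T}(-1)^{|S\setminus T|}\nu(S)$ shows that $\mu\mapsto\nu$ is injective on all measures on $2^{\mathcal E}$. So $\mu_p$ is the only measure whatsoever with these up-set masses. This costs a few lines but actually proves the uniqueness clause. As a by-product it yields the atom formula $p^{|T|}(1-p)^{n-|T|}$ and hence the full support of $\mu_p$. The paper's version is shorter only because it leaves that clause unproved.

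Your caveat that $p\in(0,1)$ needs full support is right: for example, $\mu=\delta_{\mathcal E}$ depends only on cardinality and has constant ratio $1$. This matches the standing hypothesis the paper invokes for this step.
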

\begin{proof}
Let $\nu(k)$ denote the witness value of any subset of cardinality $k$, normalized by
$\nu(0)=\mu(\upset{\varnothing})=\mu(2^{\mathcal E})=1$, and let $r_k=\nu(k)/\nu(k+1)$. If $r_k=r$ is constant across all $k$, then $\nu(k+1)=\nu(k)/r$, and induction yields $\nu(k)=r^{-k}=p^k$ with $p=1/r$. Conversely, $\nu(k)=p^k$ immediately gives a constant refinement ratio $r_k=1/p$. Full support and strict monotonicity of $\nu$ require $p\in(0,1)$. By the lemma on product measures below, the independent Bernoulli product measure with marginal inclusion probability $\Pr(t\in T)=p$ realizes $\nu(S)=\mu(\upset{S})=p^{|S|}$.
\end{proof}

In contrast, under polynomial reweightings of the layer-uniform measure, where $\nu(S)$ decays as an inverse power of $|S|$, the single-element refinement ratio $\nu(|S|)/\nu(|S|{+}1)$ shrinks toward $1$ as $|S|$ grows, weakening the relative incentive to prune larger sets (Figure~\ref{fig:measure-shrink}).

\begin{figure}[tb]
\centering
\includegraphics[width=\linewidth]{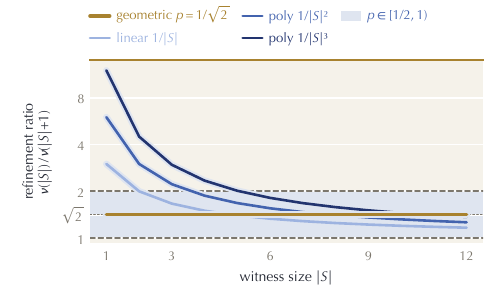}
\caption{Single-element refinement ratio $\nu(|S|)/\nu(|S|{+}1)$ across subset sizes $|S|$. The geometric base measure maintains a constant multiplicative refinement ratio $1/p$ across all cardinalities, whereas polynomial reweightings of the layer-uniform measure decay toward $1$ as $|S|$ increases. The shaded band spans the interval between uniform lattice weighting ($p=1/2$, strong size preference) and the size-neutral limit ($p\to 1$). Our default $p=0.7$ sits near $1/\sqrt2 \approx 0.707$, the geometric midpoint in log-ratio space.}
\label{fig:measure-shrink}
\end{figure}

\begin{lemma}[Product measures]
The uniform measure $\mu(T)=2^{-n}$ yields $\nu(S)=2^{-|S|}$. More generally, any independent Bernoulli product measure
$\mu=\bigotimes_{t\in\mathcal E}\mathrm{Bern}(p_t)$ yields $\nu(S)=\prod_{t\in S}p_t$; if
$\max_t p_t\le\bar p<1$, then $\nu(S)\le\bar p^{\,|S|}$ decays exponentially with $|S|$.
\end{lemma}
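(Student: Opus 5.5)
The plan is to compute $\nu(S)=\mu(\upset{S})$ directly from the independence structure of the product measure, and then read off the uniform case and the exponential bound as corollaries. Write $\mu=\bigotimes_{t\in\mathcal E}\mathrm{Bern}(p_t)$, so that a random $T\sim\mu$ contains each element $t$ independently with probability $p_t$, i.e.
\begin{equation*}
\mu(\{T\})=\prod_{t\in T}p_t\prod_{t\notin T}(1-p_t).
\end{equation*}
The key observation is that the event $T\in\upset{S}$ is the same as $T\supseteq S$, which is the intersection $\bigcap_{t\in S}\{t\in T\}$ of the inclusion events of the elements of $S$.

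\textbf{Step 1: the product formula.} I will apply independence of the coordinates to get
\begin{equation*}
\nu(S)=\Pr_{T\sim\mu}\Big[\textstyle\bigcap_{t\in S}\{t\in T\}\Big]=\prod_{t\in S}p_t.
\end{equation*}
The coordinates outside $S$ are unconstrained and marginalize to $1$. A purely combinatorial route gives the same thing: sum $\mu(\{T\})$ over $T=S\cup R$ with $R\subseteq\mathcal E\setminus S$, factor out $\prod_{t\in S}p_t$, and note that the remaining sum equals $\prod_{t\notin S}\big(p_t+(1-p_t)\big)=1$ by the binomial-type expansion of a product. For $S=\varnothing$ the empty product equals $1$, which matches $\mu(2^{\mathcal E})=1$.

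\textbf{Step 2: specializations.}
\begin{itemize}
\item Setting every $p_t=1/2$ makes $\mu(\{T\})=2^{-n}$, which is the uniform measure, and Step 1 then gives $\nu(S)=2^{-|S|}$.
\item Setting every $p_t=p$ gives $\nu(S)=p^{|S|}$. This is exactly the realization step invoked in the proof of Proposition~\ref{prop:rankuniform}.
\item If $\max_t p_t\le\bar p<1$, each factor of $\prod_{t\in S}p_t$ is at most $\bar p$, and all factors are nonnegative, so monotonicity of the product gives $\nu(S)\le\bar p^{\,|S|}$. This decays exponentially in $|S|$.
\end{itemize}

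No real obstacle is expected. The only point needing care is a clean justification that the coordinates outside $S$ marginalize to $1$, and the factorization in Step 1 handles that. It is also worth remarking that full support of $\mu$ requires $p_t\in(0,1)$, but the formula itself holds for all $p_t\in[0,1]$.
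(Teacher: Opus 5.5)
Your proof is correct and takes essentially the same route as the paper: coordinate-wise independence gives $\Pr_{T\sim\mu}[T\supseteq S]=\prod_{t\in S}p_t$, with the coordinates outside $S$ unconstrained. The uniform case and the $\bar p^{\,|S|}$ bound then follow by specialization, and your combinatorial marginalization check and factor-by-factor bound just spell out details the paper leaves implicit.
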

\begin{proof}
The event $T\supseteq S$ requires $t\in T$ for all $t\in S$ while leaving the remaining $n-|S|$ elements unconstrained. By coordinate-wise independence, $\Pr_{T\sim\mu}(T\supseteq S)=\prod_{t\in S}\Pr(t\in T)=\prod_{t\in S}p_t$.
\end{proof}

\begin{lemma}[Layer-uniform measure]
Suppose $T\subseteq\mathcal E$ is drawn by first sampling a cardinality $k\sim\mathrm{Unif}\{0,\dots,n\}$ and then sampling $T$ uniformly among all $\binom{n}{k}$ subsets of size $k$, so that $\mu(T)=
1/\big((n+1)\binom{n}{|T|}\big)$. Then $\nu(S)=1/(|S|+1)$.
\end{lemma}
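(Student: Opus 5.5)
The plan is to compute $\nu(S)=\mu(\upset{S})=\Pr_{T\sim\mu}[T\supseteq S]$ by conditioning on the cardinality layer of $T$. Fix $S$ with $|S|=s$. Conditioned on $|T|=k$, the draw $T$ is uniform among the $\binom{n}{k}$ subsets of size $k$. The supersets of $S$ of size $k$ correspond bijectively to choices of $k-s$ further elements from the $n-s$ elements outside $S$. So the conditional probability is $\binom{n-s}{k-s}/\binom{n}{k}$ for $k\ge s$, and $0$ otherwise. Averaging over the uniform layer gives
\begin{equation*}
\nu(S)=\frac{1}{n+1}\sum_{k=s}^{n}\frac{\binom{n-s}{k-s}}{\binom{n}{k}} .
\end{equation*}
The entire claim then reduces to showing that this sum equals $(n+1)/(s+1)$.

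The main step is evaluating the sum. First I will rewrite the ratio by expanding the factorials:
\begin{equation*}
\frac{\binom{n-s}{k-s}}{\binom{n}{k}}=\frac{k!\,(n-s)!}{n!\,(k-s)!}=\frac{\binom{k}{s}}{\binom{n}{s}} .
\end{equation*}
The sum becomes $\binom{n}{s}^{-1}\sum_{k=s}^{n}\binom{k}{s}$. The hockey-stick identity $\sum_{k=s}^{n}\binom{k}{s}=\binom{n+1}{s+1}$ applies directly. Since $\binom{n+1}{s+1}/\binom{n}{s}=(n+1)/(s+1)$, we get $\nu(S)=\frac{1}{n+1}\cdot\frac{n+1}{s+1}=1/(s+1)$, as claimed.

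As a sanity check, I will confirm the formula at the extremes. At $s=0$ it gives $\nu(\varnothing)=1$, the total mass. At $s=n$ it gives $\nu(\mathcal E)=\mu(\{\mathcal E\})=1/(n+1)$, which matches the stated point mass $1/((n+1)\binom{n}{n})$.

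I could instead give a probabilistic argument. Sample a uniformly random permutation of $\mathcal E$ together with an independent uniform cut point, and let $T$ be the prefix before the cut. Then $T\supseteq S$ exactly when all elements of $S$ precede the cut, which also yields $1/(s+1)$. Even so, I would present the binomial computation as the proof. The only nonroutine step is the ratio simplification followed by the hockey-stick identity. I expect no real obstacle beyond keeping the index ranges correct, in particular that the terms with $k<s$ vanish.
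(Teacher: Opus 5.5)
Your proof is correct and is the paper's argument: you group the supersets of $S$ by cardinality, rewrite $\binom{n-s}{k-s}/\binom{n}{k}$ as $\binom{k}{s}/\binom{n}{s}$, and finish with the hockey-stick identity and $\binom{n+1}{s+1}/\binom{n}{s}=(n+1)/(s+1)$. The permutation-with-cut argument you sketch is also valid, but the proof does not need it.
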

\begin{proof}
Let $m=|S|$. Summing $\mu(T)$ over all supersets $T\supseteq S$ grouped by cardinality $k=|T|$ gives
$\nu(S)=\frac{1}{n+1}\sum_{k=m}^n\binom{n-m}{k-m}/\binom{n}{k}$.
Using the binomial identity $\binom{n-m}{k-m}/\binom{n}{k}=\binom{k}{m}/\binom{n}{m}$ and the hockey-stick identity $\sum_{k=m}^n\binom{k}{m}=\binom{n+1}{m+1}$, we obtain $\nu(S)=\frac{1}{n+1}\binom{n+1}{m+1}/\binom{n}{m}=1/(m+1)$.
\end{proof}

Both product measures and the layer-uniform measure assign strictly larger witness values to proper subsets, preserving the refinement order while differing in whether the decay with $|S|$ is exponential or inverse-linear.

\subsection{Exact inclusion--exclusion credit}
For a given accepted rollout $S_i$, let $D_i=\minop_\subseteq\{S_j:j\ne i,\ y_j=1\}$ denote the antichain of group-minimal successes among the remaining $K-1$ proposals. Using the up-set intersection identity $\upset{A}\cap\upset{B}=\upset{(A\cup B)}$, the principle of inclusion--exclusion gives
\begin{align*}
A_i&=\mu(\upset{S_i}\setminus U_{-i})\\
&=\nu(S_i)-\!\!\sum_{\varnothing\ne T\subseteq
D_i}\!\!(-1)^{|T|+1}\,\nu\Big(S_i\cup\textstyle\bigcup_{x\in T}x\Big),
\end{align*}
where each term $\nu(\cdot)$ is evaluated in closed form via Appendix~\ref{app:measures}. (If any $x\in D_i$ satisfies $x\subseteq S_i$, $A_i=0$ immediately by Proposition~\ref{prop:support}.) The exact sum costs $O(2^{|D_i|})$. Exact inclusion--exclusion is used when a group has at most twelve group-minimal successes.

\subsection{Shared-sample Monte Carlo credit}
For large groups, we avoid the exponential $O(2^{|D_i|})$ inclusion--exclusion expansion by noting that $\mathcal R_\mu(U_K)=\Pr_{T\sim\mu}[\exists i:\,y_i=1,\ T\supseteq S_i]$. Drawing a single shared batch of $N$ auxiliary subsets $T^{(1)},\dots,T^{(N)}\overset{\mathrm{iid}}{\sim}\mu$, the estimator
\begin{equation*}
\widehat A_i=\frac{1}{N}\sum_{m=1}^N\mathbf{1}\big[T^{(m)}\supseteq S_i\big]
\prod_{x\in D_i}\mathbf{1}\big[T^{(m)}\not\supseteq x\big]
\end{equation*}
is an unbiased estimate of $A_i = \mu(\upset{S_i}\setminus U_{-i})$. Computing $\widehat A_i$ requires $O(N|D_i|)$ bitwise subset checks and zero verifier queries, scaling linearly rather than exponentially in the number of group-minimal successes $|D_i|$.

\section{MaxSAT experiments}
\label{app:maxsat}

\subsection{Benchmark construction and ground truth}
Our MaxSAT benchmark provides an enumerable testbed where the complete ground-truth antichain $\Msf$ is known exactly. Each instance consists of $n$ Boolean variables $\mathcal E=\{1,\ldots,n\}$ and $m$ monotone positive clauses $C_1,\ldots,C_m\subseteq\mathcal E$. Proposing a subset $S\subseteq\mathcal E$ corresponds to setting the variables in $S$ to true and all variables in $\mathcal E \setminus S$ to false. The verifier evaluates whether $S$ satisfies every clause:
\begin{equation}
s(S)=\mathbf 1\!\left[\,S\cap C_j\ne\varnothing\ \text{for all}\ j\le m\,\right].
\label{eq:maxsat-verifier}
\end{equation}
Thus $s(S)=1$ if and only if $S$ is a hitting set of the clauses $\{C_1,\dots,C_m\}$, and the minimal witnesses $\Msf$ are the prime implicants of the monotone CNF formula (equivalently, the minimal hitting sets of its clauses).

We generate eight benchmark instances with $n=14$ variables, $m=10$ clauses, and $3$ variables per clause. Each clause is sampled uniformly without replacement, discarding duplicate clauses, using random seed $1000+i$ for instance $i$. We rejection-sample formulas to have between $6$ and $20$ minimal witnesses; the eight resulting instances have between $16$ and $19$ minimal witnesses (mean $|\Msf|=18.125$). For evaluation, we exhaustively test all $2^{14}=16{,}384$ subsets and extract the inclusion-minimal satisfying assignments to form the exact ground-truth antichain $\Msf$.

\subsection{Subset-construction MDP}
Each formula defines a finite-horizon episodic MDP over the subset lattice. At step $t$, the state is $(S_t,t)$, where $S_t \subseteq \mathcal E$ is the subset of variables selected so far. The policy receives a $15$-dimensional observation consisting of the $14$-bit indicator vector of $S_t$ and the normalized time step $t/H$. The available action space at state $S_t$ is
\begin{equation}
\mathrm{Act}(S_t)=\bigl(\mathcal E\setminus S_t\bigr)\cup\{\mathrm{STOP}\}.
\end{equation}
Selecting an unselected variable $e \in \mathcal E \setminus S_t$ deterministically transitions to $S_{t+1}=S_t\cup\{e\}$, while selecting $\mathrm{STOP}$ (or reaching the horizon $H=12$) terminates the episode at $S_T=S_t$. All intermediate step rewards are zero; upon termination, the verifier returns the single binary bit $s(S_T)$ from~\eqref{eq:maxsat-verifier}. Crucially, the environment neither halts automatically when $S_t$ first becomes sufficient nor prunes redundant variables. The policy must learn purely from terminal verifier feedback when to stop adding variables.

In the main comparison (Table~\ref{tab:maxsat}), a separate policy is trained from scratch on each formula with no formula identifier in the observation, isolating how each RL objective shapes the terminal proposal distribution on a fixed combinatorial landscape.

\subsection{Training, baselines, and evaluation}
All policy-gradient methods share an identical masked categorical actor architecture: a two-layer MLP with $128$ hidden units and $\tanh$ activations mapping the $15$-dimensional state observation to logits over the $14$ variable-addition actions and $\mathrm{STOP}$, with already-selected variables masked out prior to softmax normalization. The GFlowNet baselines use the same network architecture (Table~\ref{tab:maxsat-config}); the success and up-set GFlowNet rows of Table~\ref{tab:maxsat} train on $150$ updates of $16$ trajectories with a single step size of $3\times10^{-3}$.

\begin{table}[tb]
\centering
\small
\setlength{\tabcolsep}{4pt}
\begin{tabular}{@{}p{0.34\columnwidth}p{0.57\columnwidth}@{}}
\toprule
Item & Setting \\
\midrule
Instances & $8$ formulas, $14$ variables, $10$ clauses of length $3$ \\
Episode & add or $\mathrm{STOP}$, horizon $H=12$ \\
Actor & $15\!\to\!128\!\to\!128\!\to\!15$, $\tanh$ activations \\
Rollout batch & $8$ groups per update, $K=48$ proposals per group \\
Training & $150$ updates, $57{,}600$ terminal verifier calls per formula \\
Optimization & Adam, learning rate $3\times10^{-3}$, entropy weight $0.02$, max grad norm $1.0$ \\
MWRL & coverage measure $\mathcal R_\mu$ with geometric $\mu_p$ ($p=0.7$), l2o baseline,
mean-absolute rescaling, $N=20{,}000$ \\
Evaluation & $128$ fresh proposals per formula; policy seeds $0,1,2$ \\
\bottomrule
\end{tabular}
\caption{Shared hyperparameter configuration for the MaxSAT benchmark comparison.}
\label{tab:maxsat-config}
\end{table}

Because the verifier is queried exactly once per episode at the terminal subset $S_T$, every training run consumes $150 \times 8 \times 48 = 57{,}600$ verifier calls per formula, followed by $128$ evaluation rollouts. Auxiliary subsets drawn from $\mu_p$ for Monte Carlo credit estimation do not query the verifier, and ground-truth antichains are used exclusively for post-training evaluation metrics.

The proposal-separable RL baselines differ only in how they compute per-rollout advantages from the batch of verifier bits:
\begin{itemize}
\item \textbf{MaxRL} centers binary outcomes by the group mean and normalizes by the group mean.
\item \textbf{GRPO} centers outcomes by the group mean and normalizes by the group standard deviation.
\item \textbf{RLOO} subtracts the leave-one-out mean outcome of the other $K-1$ rollouts in the group.
\item \textbf{PPO} applies the standard clipped surrogate objective (clip ratio $0.2$) using MaxRL group advantages, while the remaining policy-gradient methods use unclipped score-function updates.
\item \textbf{PPO with a size penalty} applies the clipped PPO update to the group-standardized penalized reward $s(S)-0.1|S|$. Recovery has converged even when the policy returns more than one witness: extending training by a factor of four changes only one of twenty-four runs. The second witness appears on formulas whose two smallest witnesses have the same size. Their penalized rewards are equal, so a distribution over the tied pair is itself optimal.
\item \textbf{GFlowNet (success / up-set)} uses the trajectory balance loss with a uniform backward policy and terminal reward $R(S)=s(S)$ or $R(S)=s(S)p^{|S|}$, respectively.
\end{itemize}

For each frozen policy, we sample $128$ evaluation proposals and compute the born-minimal rate (fraction of accepted proposals that belong to $\Msf$), antichain recall (fraction of $\Msf$ recovered as distinct minimal witnesses across the $128$ proposals), the absolute number of distinct minimal witnesses recovered, and the median size of successful proposals. Metrics are averaged across the eight formulas for each seed, and we report the mean and standard deviation across the three seeds.

\subsection{Executed planning references}
\label{app:vi-exec}
Both planning reference rows in Table~\ref{tab:maxsat} are executed directly on the eight benchmark formulas. \textbf{Scalar VI} performs exact backward induction over the MDP state space $(S,t)$ with terminal reward
\begin{equation}
r_{\lambda}(S)=s(S)-\lambda|S|,
\end{equation}
using $\lambda=0.1$ and breaking ties deterministically in favor of the lowest-indexed variable. Rolling out the resulting optimal deterministic policy repeatedly returns the same single minimum-cardinality witness (and at $\lambda=0$, it halts at non-minimal supersets of mean size $8.0$).

\textbf{Minimal-witness VI} performs exact dynamic programming over states $(\mathcal C,b)$, where $\mathcal C\subseteq\Msf$ is the subset of minimal witnesses committed so far (generating the certified region $F=\upset{\mathcal C}$) and $b$ is the remaining commitment budget. For every $\mathcal C \subseteq \Msf$, we precompute $\mu_p(\upset{\mathcal C})$ via a fast subset-sum transform over witness-incidence masks and evaluate the Bellman recursion
\begin{equation}
V_b(\mathcal C)=\max\!\left\{\mathcal R_\mu(\upset{\mathcal C}),
\max_{M\in\Msf\setminus\mathcal C}V_{b-1}(\mathcal C\cup\{M\})\right\}.
\label{eq:maxsat-archive-vi}
\end{equation}
At budget $b=|\Msf|$, Minimal-witness VI recovers $100\%$ of the antichain across all eight formulas. We checked~\eqref{eq:maxsat-archive-vi} against brute-force enumeration of the selected sets for budgets $b \le 4$, and the within-episode Bellman recursion against complete trajectory enumeration on a separate $n=8$ formula; the maximum discrepancy is below $10^{-14}$.

\subsection{Main learned-policy comparison}
The prime-implicant comparison of Table~\ref{tab:maxsat} applies the preceding protocol independently to every formula and policy seed. We illustrate the terminal distributions induced by the three objective classes in Figure~\ref{fig:failure}(a) to (c) and the sizes of their recovered families on the eight formulas in Figure~\ref{fig:failure}(d). The complete antichain and both planning references make the terminal behavior of each objective directly observable.

\subsection{Credit estimator}
To isolate the bias and variance of the credit estimator in Figure~\ref{fig:bias}, we construct an enumerable $n=12$ formula with $8$ clauses, horizon $H=10$, and $|\Msf|=11$ minimal witnesses. We train an MWRL policy for $40$ updates, freeze its weights, and compute the exact analytical gradient $\nabla_\theta J_K(\theta)$ at group size $K=6$ by propagating exact terminal probabilities across all $2^{12}=4{,}096$ subsets and differentiating $J_K(\theta)$ in closed form.

We then sample $8{,}192$ rollout groups of size $K=6$ (partitioned into $16$ independent blocks of $512$ groups) and evaluate three advantage estimators on the exact same trajectories: raw non-negative deletion credit (l1o), leave-two-out baseline-adjusted credit (l2o), and naive group-mean centering ($A_i - \frac{1}{K}\sum_{j=1}^K A_j$). As shown in Figure~\ref{fig:bias}, both l1o and l2o converge to the exact gradient $\nabla_\theta J_K(\theta)$, whereas group-mean centering approaches a nonzero finite-$K$ offset. We therefore use l2o in the main MaxSAT comparison.

\subsection{Valuation and verifier access}
Table~\ref{tab:objective-mc-ablation} compares the coverage measure $\mathcal R_\mu$ against the counting measure $\mathcal R_{\#}$ on the eight MaxSAT formulas. Whereas $\mathcal R_\mu$ uses only the single terminal verifier bit $s(S_T)$ and relies on the up-set measure $\mu_p$ to reward pruning, $\mathcal R_{\#}$ invokes a greedy canonicalizer $\Phi$ on every accepted proposal $S_T$ (iteratively testing each element of $S_T$ in a deterministic order and deleting it if the remaining subset still satisfies all clauses, consuming up to $|S_T|$ extra verifier queries per accepted rollout). Thus, $\mathcal R_\mu$ places the burden of learning minimality on the policy gradient without extra verifier queries, whereas $\mathcal R_{\#}$ offloads minimality to deterministic post-processing at the cost of extra verifier calls.

\subsection{Training group size and deployment budget}
In Figure~\ref{fig:kmatched}, we decouple the training group size $K\in\{2,8,32,128\}$ from the test-time deployment budget $B$. Across six benchmark formulas and three seeds, we hold the total training budget strictly constant at $36{,}864$ verifier calls ($8$ groups per update for $2304$, $576$, $144$, and $36$ updates, respectively), ensuring larger groups $K$ receive no extra data or gradient steps.

After freezing each trained policy, we compute its exact terminal probability $p_M = q_\theta(M)$ on every minimal witness $M\in\Msf$ via forward propagation over the subset lattice and evaluate the expected number of distinct minimal witnesses recovered in $B$ test-time draws:
\begin{equation}
\mathbb E[N_B]=\sum_{M\in\Msf}\left[1-(1-p_M)^B\right],
\end{equation}
comparing this closed-form curve against empirical prefix counts from $1{,}024$ sampled rollouts for $B \in [1, 1024]$. Policies trained with tiny groups ($K=2$) concentrate their probability mass on a few witnesses, performing well at very small $B$ but plateauing early; policies trained with larger groups ($K=32, 128$) distribute mass across the full antichain, achieving much higher asymptotic recovery as $B$ increases.

\subsection{Antichain-geometry stress test}
\label{app:geometry}
To test how antichain geometry affects recovery, we replace random CNF clauses with directly planted antichains $\mathcal P \subseteq 2^{\mathcal E}$ over $n=20$ variables and run the monotone verifier
\begin{equation}
s_{\mathcal P}(S)=\mathbf 1[\exists M\in\mathcal P:M\subseteq S].
\end{equation}
We cross four structural factors in a $2\times 2\times 2\times 2$ factorial design ($16$ conditions, with $5$ planted instances and $3$ policy seeds per condition):
(1) antichain size $L\in\{8,32\}$;
(2) witness cardinality profile: balanced ($|M|=4$ for all $M\in\mathcal P$) vs.\ imbalanced (half of size $2$ and half of size $6$);
(3) pairwise element overlap: low or high, where the overlap levels target the $0.1$ and $0.9$ quantiles between the minimum and maximum attainable mean overlap for each size profile; and
(4) ratio of antichain size to group size $L/K\in\{0.5,2\}$.
In each condition, we train MWRL, size-penalized PPO, and an up-set GFlowNet for $128$ updates of $256$ rollouts ($32{,}768$ verifier queries, horizon $H=10$, $N=5{,}000$ Monte Carlo samples from $\mu_p$, and $512$ evaluation proposals). Figure~\ref{fig:geometry} reports the paired recall advantage of MWRL over the strongest baseline in each condition with $95\%$ bootstrap confidence intervals; the difference pairs MWRL with the stronger learned baseline within each condition after averaging policy seeds, and the confidence intervals bootstrap the five predicate-level paired means. MWRL's advantage is largest on large, balanced antichains ($L=32$, $|M|\equiv 4$), where many equal-sized minimal witnesses compete for probability mass and local rewards have no mechanism to prevent mode collapse or superset dilution. Size imbalance creates intrinsically favored witnesses and reduces the difference between objectives.

\begin{figure*}[t]
\centering
\includegraphics[width=\textwidth]{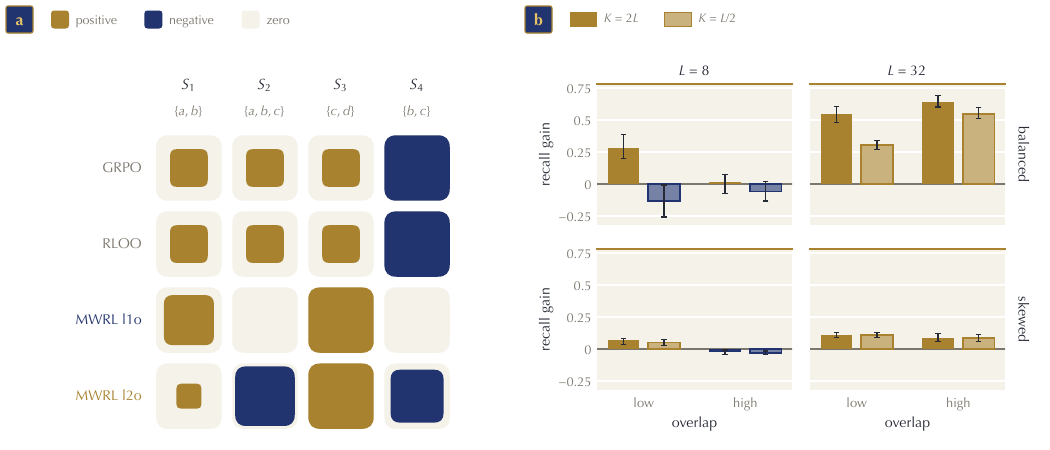}
\caption{Effect of planted antichain geometry on deletion credit and witness recovery. \textbf{(a)} Per-proposal advantages assigned within a four-rollout batch under the planted antichain $\mathcal P=\{\{a,b\},\{c,d\}\}$ ($p=0.7$). Because GRPO and RLOO rely solely on the binary verifier bit, they assign identical positive advantage to the minimal witnesses $S_1, S_3$ and to the redundant superset $S_2$, distinguishing only the failed rollout $S_4$. Raw deletion credit (l1o) assigns positive mass exclusively to the group-minimal witnesses ($0.147$ for $S_1$ and $0.250$ for $S_3$) and zero to $S_2$ and $S_4$. Subtracting the leave-two-out baseline (l2o) keeps $S_3$ and $S_1$ positive while pushing the redundant superset $S_2$ negative (even below the failed rollout $S_4$). \textbf{(b)} Paired antichain-recall gain of MWRL over the strongest baseline across $16$ planted geometry conditions after $32{,}768$ verifier calls ($5$ planted predicates $\times$ $3$ seeds per bar; error bars are $95\%$ bootstrap intervals; solid bars denote $K=2L$ and light bars $K=L/2$; gold marks a gain and lapis a loss). Columns are family sizes $L\in\{8,32\}$, rows balanced and skewed witness sizes, and within each panel the left and right pairs are the low-overlap and high-overlap quantiles.}
\label{fig:geometry}
\end{figure*}

\begin{table*}[tb]
\centering
\small
\setlength{\tabcolsep}{5pt}
\begin{tabular}{@{}lcccccc@{}}
\toprule
MaxEnt RL, $\beta$ & $\lambda=0$ & $0.1$ & $0.2$ & $0.3$ & $0.5$ & $1$ \\
\midrule
$0.02$ & 0.0 (0.00) & 3.2 (0.77) & 1.1 (0.87) & N/A & N/A & N/A \\
$0.05$ & 0.0 (0.00) & 0.4 (0.00) & 3.8 (0.49) & 0.3 (0.24) & 0.0 (0.00) & 0.0 (0.00) \\
$0.1$ & 0.0 (0.00) & 0.0 (0.00) & 0.4 (0.00) & \textbf{4.3 (0.12)} & 0.4 (0.20) & 0.0 (0.00) \\
$0.2$ & 0.0 (0.00) & 0.0 (0.00) & 0.0 (0.00) & 0.0 (0.00) & 2.0 (0.02) & 0.2 (0.13) \\
\bottomrule
\end{tabular}

\vspace{0.8em}
\begin{tabular}{@{}lcccccccc@{}}
\toprule
GFlowNet, $\lambda$ & $0.00$ & $0.36$ & $0.69$ & $0.90$ & $1.20$ & $1.50$ & $1.80$ & $2.30$ \\
\midrule
\# found & 0.2 & 0.8 & 2.0 & 3.0 & \textbf{4.3} & 3.6 & 3.1 & 3.0 \\
born-min.\ rate & 0.00 & 0.01 & 0.02 & 0.03 & \textbf{0.07} & 0.14 & 0.21 & 0.37 \\
\bottomrule
\end{tabular}
\caption{Hyperparameter sweeps over size-penalty strength $\lambda$ and entropy temperature $\beta$ for MaxEnt RL and GFlowNet on the eight MaxSAT formulas ($57{,}600$ verifier calls per formula, averaged over eight formulas and three seeds). MaxEnt RL cells report the number of distinct minimal witnesses recovered (\# found) followed by the born-minimal rate in parentheses. For GFlowNet, $\lambda=0$ and $\lambda=0.36=\log(1/0.7)$ correspond to the pure success and up-set rewards, respectively. Bold entries indicate the oracle best-performing settings reported in Table~\ref{tab:maxsat}; N/A denotes configurations not run.}
\label{tab:diversity-penalty}
\end{table*}

\subsection{Base-measure choice for coverage credit}
The base measure $\mu$ sets the relative witness values across cardinalities: $2^{-|S|}$ for the uniform measure, $1/(|S|+1)$ for the layer-uniform measure, and $p^{|S|}$ (with constant single-element refinement ratio $1/p$) for the geometric product measure $\mu_p$. We selected $p=0.7$ prior to running our benchmarks because $p=1/\sqrt{2}\approx 0.707$ sits at equal logarithmic distance between uniform lattice weighting ($p=1/2$, strong refinement pressure) and flat weighting ($p=1$, zero size preference), as illustrated in Figure~\ref{fig:pdefault}.

\begin{figure}[tb]
\centering
\includegraphics{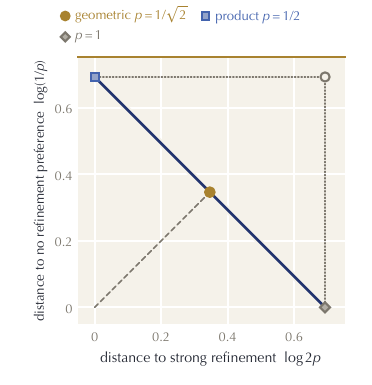}
\caption{Trade-off in log-ratio space between distance from strong refinement ($p=1/2$) and distance from size neutrality ($p=1$). The two logarithmic distances sum to $\log 2$ and are equal at $p=1/\sqrt2 \approx 0.707$, motivating our default choice $p=0.7$ (open circle marks the unattainable zero-trade-off origin).}
\label{fig:pdefault}
\end{figure}

Sweeping $p\in[0.5,0.9]$ on the MaxSAT benchmark (Figure~\ref{fig:psweep-main}) confirms that the born-minimal rate stays consistently high ($0.71$ to $0.79$) across the entire interval, because any full-support measure $\mu$ assigns positive deletion credit exclusively to group-minimal successes (Proposition~\ref{prop:support}). Meanwhile, increasing $p$ from $0.5$ to $0.9$ raises antichain recall from $0.55$ to $0.87$ (and shifts median proposal size from $3.8$ to $4.2$) by narrowing the witness-value gap between size-$4$ and size-$2$ minimal witnesses. (At the boundary $p=1$, all witness values collapse to $1$ and pruning incentive vanishes.)

\subsection{Credit-rescaling sensitivity}
Dividing a group's credits by a positive batch statistic rescales the step size per batch while preserving the sign and relative ratios of all advantages within the group. Table~\ref{tab:scale-ablation} compares dividing by the mean absolute raw credit against dividing by the standard deviation of the baseline-adjusted credit on the eight MaxSAT formulas (seed $0$, $K=48$, $57{,}600$ verifier calls). Under both group-mean centering and l2o, both rescaling rules achieve comparable antichain recovery; mean-absolute rescaling yields a higher born-minimal rate, while standard-deviation rescaling slightly increases recall under l2o.

\begin{table}[!htbp]
\centering
\footnotesize
\setlength{\tabcolsep}{2.5pt}
\begin{tabular}{@{}llccc@{}}
\toprule
Centering & Rescale & \shortstack{born-min.\\rate $\uparrow$} & \shortstack{antichain\\recall $\uparrow$} & \# found $\uparrow$ \\
\midrule
Group mean & Mean abs. & $0.879$ & $0.622$ & $11.25$ \\
Group mean & Std. dev. & $0.821$ & $0.615$ & $11.13$ \\
l2o & Mean abs. & $0.819$ & $0.675$ & $12.25$ \\
l2o & Std. dev. & $0.757$ & $0.687$ & $12.50$ \\
\bottomrule
\end{tabular}
\caption{Sensitivity of coverage MWRL to batch credit rescaling on the eight MaxSAT formulas (policy seed $0$).}
\label{tab:scale-ablation}
\end{table}

\subsection{Size-penalized maximum-entropy RL and GFlowNet}
\label{app:diversity-penalty}
To verify that local size penalties cannot rescue proportional samplers (Theorem~\ref{thm:collapse}), we perform an extensive hyperparameter sweep over size-penalty strength $\lambda$ for both Maximum-Entropy (MaxEnt) RL and GFlowNets on the eight MaxSAT formulas (Table~\ref{tab:diversity-penalty}). MaxEnt RL optimizes the entropy-regularized return~\citep{haarnoja2018}
\begin{equation}
G(\tau)=s(S)-\lambda|S|-\beta\sum_{t}\log\pi_\theta(a_t\mid x_t)
\end{equation}
for a trajectory $\tau$ terminating at $S$, using group standardization and clipped PPO updates with all other hyperparameters matched to Table~\ref{tab:maxsat-config} (at $\beta=0, \lambda=0.1$, this reduces to our size-penalized PPO baseline). GFlowNet optimizes trajectory balance~\citep{malkin2022} with a uniform backward policy and terminal reward $R(S)=s(S)\,e^{-\lambda|S|}$ (which reduces to the pure success and up-set GFlowNet baselines at $\lambda=0$ and $\lambda=\log(1/0.7)\approx 0.36$, respectively; failed proposals receive $R=\min(10^{-4},10^{-2}e^{-\lambda H})$, at least a hundredfold below any success within the horizon $H$). All runs use the full budget of $57{,}600$ verifier calls per formula ($3{,}600$ updates of $16$ trajectories for GFlowNet with learning rates $3\times 10^{-3}$ for the actor and $0.1$ for $\log Z$).

As shown in Table~\ref{tab:diversity-penalty}, both families show the same trade-off: small $\lambda$ leaves probability mass diffused over non-minimal supersets, whereas large $\lambda$ collapses the policy onto the smallest witnesses (or causes premature stopping before satisfying the clauses at $\lambda=1$). Recovery accordingly peaks at an intermediate strength in every row of Table~\ref{tab:diversity-penalty}. The peak moves to stronger penalties as $\beta$ grows, since the entropy term rewards longer trajectories. Without a penalty, the successful proposals of MaxEnt RL reach the horizon at every $\beta$, because the maximum-entropy optimum weights each set by the number of add orders that reach it, a count that the uniform backward policy of GFlowNet divides out. Even at the oracle peak of the grid ($\beta=0.1, \lambda=0.3$ for MaxEnt RL; $\lambda=1.20$ for GFlowNet), both methods recover an average of only $4.3$ minimal witnesses ($24\%$ recall) with born-minimal rates of $0.12$ and $0.07$, compared to $12.1$ witnesses ($67\%$ recall) and a $0.76$ born-minimal rate for MWRL.

\section{Scientific variable identification experiments}
\label{app:suzuki}

\subsection{Benchmark data and provenance}
Our Suzuki-Miyaura reaction benchmark combines real literature substrate pairs and condition vocabularies with complete combinatorial condition panels generated by a calibrated mechanistic reactivity simulator. During training and evaluation, the augmented reaction grid serves as a closed-world lookup table; the benchmark release also provides the raw empirical data partitions in their original schema.

The empirical foundation draws on two published sources:
\begin{enumerate}
\item The \textbf{Perera flow-chemistry screen}~\citep{perera2018}, which provides $5{,}760$ measured reactions across $15$ substrate pairs over a complete $12\times8\times4$ ligand, base, and solvent factorial grid.
\item The \textbf{ChemRANG literature catalog}~\citep{chemrang2026} (commit \texttt{dc0a14b}): out of $29{,}821$ raw records, $17{,}907$ pass structural and yield quality control, and deduplication of ambiguous or role-inconsistent entries yields $16{,}659$ canonical substrate pairs (each pairing a single-site aryl/heteroaryl halide or triflate with a single-site organoboron partner; rejected records and reason codes are archived in a quarantine table). Every task identity, substrate pair, and target product in our benchmark corresponds directly to a real published reaction.
\end{enumerate}

Because literature records report only isolated condition attempts rather than complete combinatorial panels, we complete the condition table for each substrate pair using a mechanistic response simulator modeling oxidative addition, transmetalation, reductive elimination, catalyst activation/deactivation, and major side reactions across $14$ discrete reaction-condition dimensions relative to a shared baseline protocol. For each task, the panel enumerates all condition assignments that deviate from the baseline in at most $3$ dimensions, averaging $4$ simulated replicates with noise of standard deviation $0.2$. Crucially, the mechanistic scoring rules reflect real catalytic trade-offs so that no single condition package dominates across all substrates: bulky electron-rich monophosphines favor hindered and aryl-chloride couplings but suffer chelation poisoning on unprotected heteroaryls, whereas small-phosphine and bidentate ferrocenyl systems favor heteroaryl routes.

Empirical data anchor the simulator in three ways: (i) ChemRANG condition frequencies define the discrete vocabulary and default baseline levels (without treating publication frequency as success probability); (ii) reported literature yields enter only as a shrinkage-regularized task-level activity offset proportional to the standardized median reported yield multiplied by $n/(n+2)$ for a pair with $n$ literature records; and (iii) the Perera factorial screen calibrates the yield distribution, its task heterogeneity, and the marginal and interaction structure of the factor effects; the response link is fitted on a fixed 32-task panel drawn before any outcome-conditioned selection, and the detection limit is set against simulated replicate means so that the zero-yield rate of the emitted table matches its calibration target. Every task retains explicit provenance links to its matching ChemRANG literature records; we emphasize that off-literature rows are calibrated mechanistic simulations rather than wet-lab measurements.

From the $16{,}659$ candidate tasks evaluated under generator seed $612$, $1{,}381$ pass our benchmark quality filters: an antichain size between $2$ and $80$ (with at most $10\%$ singletons), an overall sufficient-assignment fraction between $0.0005$ and $0.6$, exact agreement between the noiseless and replicate-averaged antichains, and a margin condition: for every minimal witness, its best assignment must exceed the yield threshold by at least one point, and every assignment that changes only a strict subset of its dimensions must stay at least one point below the threshold. From these $1{,}381$ valid tasks, a deterministic greedy selection followed by swap descent selects $1{,}024$ tasks that balance witness diversity and match pooled yield calibration bands while satisfying a strict popularity cap: a substrate-blind baseline that simply proposes the $16$ most frequent training witnesses must achieve no more than $0.45$ recall on held-out tasks (measuring $0.42$ on the final release; see Table~\ref{tab:suzuki-space}). The same variation across substrates occurs in the primary data: the thirteen substrate pairs of the Perera grid induce five distinct antichains, eight solved by a ligand swap alone, two with two alternative minimal witnesses, and three that require all three dimensions to move. Because the fingerprint and the generator use the same mechanistic factors, the conditioned-versus-blind gap measures how much the fingerprint improves recovery within this benchmark.

Table~\ref{tab:suzuki-space} summarizes the resulting benchmark environment: $1{,}024$ tasks, $14$ condition dimensions, and $2{,}879$ tabulated condition rows per task (baseline plus $28$ single, $346$ pair, and $2{,}504$ triple deviations, totaling $2{,}948{,}096$ reactions) evaluated at a global yield threshold of $75\%$. Ground-truth antichains range from $2$ to $35$ minimal witnesses per task (mean $8.6$), spanning $201$ unique minimal witnesses and $986$ distinct antichain combinations across the population. The Perera partition also defines a separate complete three-dimensional environment used for generator calibration and pilot runs, and no number in the paper comes from it.

\begin{table}[tb]
\centering
\small
\setlength{\tabcolsep}{4pt}
\begin{tabular}{@{}p{0.30\columnwidth}p{0.61\columnwidth}@{}}
\toprule
Item & Setting \\
\midrule
Tasks & $1{,}024$ substrate pairs ($819$ train, $205$ held out per seed) \\
Dimensions ($n=14$) & catalyst system (6), base (6), solvent (6), water fraction (3),
temperature (3), and binary flags for time, catalyst loading, base equivalents, boron
equivalents, concentration, ligand-to-Pd ratio, cosolvent, additive, and
atmosphere \\
Panel per task & $2{,}879$ rows: baseline, $28$ single, $346$ pair, and $2{,}504$ triple
deviations \\
Success criterion & reaction yield $\ge 75\%$ (global threshold) \\
Antichain statistics & $2$ to $35$ minimal witnesses per task (mean $8.6$); $201$ unique minimal
witnesses and $986$ unique antichains across tasks \\
Popularity audit & held-out recall of top-$16$ most frequent training witnesses is
$0.42$ (release cap $\le 0.45$); $0.15, 0.25, 0.63$ at budgets $4, 8, 32$ \\
\bottomrule
\end{tabular}
\caption{Summary of the augmented Suzuki-Miyaura reaction benchmark. Numbers in parentheses indicate the number of discrete levels per dimension (including baseline).}
\label{tab:suzuki-space}
\end{table}

\subsection{Verifier, decision process, and policy}
Opening a subset of condition dimensions $S \subseteq \mathcal E$ permits the reaction protocol to vary any dimensions inside $S$ while holding all dimensions in $\mathcal E \setminus S$ fixed at their baseline defaults. The verifier returns $s_c(S)=1$ if and only if the reaction table for substrate pair $c$ contains at least one assignment whose modified dimensions lie within $S$ that achieves $\ge 75\%$ yield (unlisted assignments count as failures). By construction, this closed-world subspace lookup is deterministic and monotone, serving as a fast, reproducible proxy for an inner Bayesian optimization loop restricted to the subspace $S$. No Bayesian-optimization loop is executed during the reported training or evaluation.

We formulate subset selection as a $14$-step sequential MDP starting from the full set $\mathcal E$, where the policy can add, remove, or stop on the $14$ condition dimensions. At each step, the policy observes the $14$-bit active dimension mask, the normalized step index, and a $33$-dimensional molecular fingerprint of the substrate pair computed from the halide and organoboron SMILES strings (capturing leaving-group class, organoboron class, heteroaryl and N-H content, ortho steric load around each coupling carbon, polar surface area, molecular weight, and lipophilicity proxies). A single conditional policy is trained across all $819$ training substrates using the MWRL coverage return ($\mathcal R_\mu$ with $p=0.7$ and l2o baseline; see Table~\ref{tab:suzuki-config}).

\begin{table}[tb]
\centering
\small
\setlength{\tabcolsep}{4pt}
\begin{tabular}{@{}p{0.30\columnwidth}p{0.61\columnwidth}@{}}
\toprule
Item & Setting \\
\midrule
Context fingerprint & $33$ mechanistic descriptors extracted from reactant SMILES \\
Actor architecture & hidden width $256$, $\tanh$; add, remove, or stop actions; full-set start; horizon $14$ \\
MWRL objective & coverage measure $\mathcal R_\mu$ with geometric $\mu_p$ ($p=0.7$), l2o baseline,
mean-absolute rescaling \\
Amortized training & $K=32$, $8$ task groups per update, $4{,}800$ updates \\
Per-substrate ceiling & $K=32$, $400$ updates trained separately per substrate \\
Optimization & Adam, learning rate $5\times10^{-4}$, entropy weight $0.02$, max grad norm
$1.0$ \\
Evaluation & $128$ frozen-policy proposals per task; seeds $0, 1, 2$ \\
\bottomrule
\end{tabular}
\caption{Hyperparameter configuration for the Suzuki-Miyaura reaction-condition discovery experiments.}
\label{tab:suzuki-config}
\end{table}

\subsection{Protocol and evaluation}
We run three random seeds, where each seed controls both the $819/205$ train/held-out substrate split and policy weight initialization (Table~\ref{tab:amortization}). All four compared methods share the exact same environment, verifier, and evaluation protocol:
\begin{itemize}
\item \textbf{Amortized MWRL (Conditioned)} receives the $33$-dimensional substrate fingerprint.
\item \textbf{Amortized MWRL (Substrate-blind)} optimizes the exact same MWRL objective on the training split, but has the substrate fingerprint zeroed out.
\item \textbf{Scalar RL} trains the fingerprint-conditioned actor using a standard best-of-group binary success credit in place of MWRL's up-set deletion credit.
\item \textbf{Per-substrate ceiling} trains a separate tabula-rasa MWRL policy for $400$ updates on each individual substrate pair (one entry of that row is a complete training run, so the ceiling is measured on a fixed sample of $256$ of the $1{,}024$ substrates at a single seed; the sample gives its mean a $95\%$ interval of $\pm 0.04$ antichain recall).
\end{itemize}
At evaluation time, we draw rollouts from each frozen policy in order, deduplicate them as sets, and evaluate a window of $32$ distinct condition sets per substrate pair; the window closes once $32$ distinct sets have been attempted, whether or not they verify. As in our MaxSAT coverage experiments, only raw terminal proposals that are themselves minimal count as recovered witnesses. Antichain recall is the recovered fraction of the task's ground-truth antichain within the window. The born-minimal rate is the fraction of successful attempts in the window whose raw terminal is already minimal, and the witness count is the number of distinct minimal witnesses recovered in the window. Recall is reported on training substrates and on held-out substrates; the born-minimal rate and the witness count are reported on the held-out split for the amortized policies and over the sampled substrates for the per-substrate ceiling. The evaluation stores $128$ rollouts per task in proposal order with their verification outcomes, so every statistic is reproducible at any other budget from the released artifacts; elapsed training time for each method is stored with them.

\subsection{Amortization on the fully synthetic library}
Prior to constructing the literature-anchored benchmark above, we also evaluated the same four methods on an end-to-end synthetic Suzuki library of $172$ substrate pairs ($138$ train / $34$ held-out per seed) over $18$ condition dimensions generated by our first-generation simulator. As shown in Table~\ref{tab:suzuki-synthetic}, the synthetic benchmark produces the exact same qualitative ranking: fingerprint-conditioned MWRL ($0.83$ held-out recall) nearly matches the per-substrate training ceiling ($0.88$), more than doubles the recall of the substrate-blind ablation ($0.38$), and cleanly outperforms scalar RL ($0.00$).

\begin{table}[H]
\centering
\footnotesize
\setlength{\tabcolsep}{3pt}
\begin{tabular}{@{}lcccc@{}}
\toprule
Method & born-min. & \multicolumn{2}{c}{recall {\scriptsize$\uparrow$}} & \#found {\scriptsize$\uparrow$} \\
\cmidrule(lr){3-4}
       & rate {\scriptsize$\uparrow$} & seen & held-out & \\
\midrule
Per-substrate & $0.58{\scriptstyle\pm}0.01$ & $0.88{\scriptstyle\pm}0.00$ & - & $6.4{\scriptstyle\pm}0.0$ \\
\midrule
\multicolumn{5}{@{}l}{\emph{Amortized MWRL}} \\
\quad Conditioned & $0.58{\scriptstyle\pm}0.03$ & $0.86{\scriptstyle\pm}0.01$ & $0.83{\scriptstyle\pm}0.04$ & $6.1{\scriptstyle\pm}0.3$ \\
\quad Substrate-blind & $0.11{\scriptstyle\pm}0.05$ & $0.41{\scriptstyle\pm}0.10$ & $0.38{\scriptstyle\pm}0.08$ & $2.7{\scriptstyle\pm}0.5$ \\
\midrule
Scalar RL & $0.00{\scriptstyle\pm}0.00$ & $0.00{\scriptstyle\pm}0.00$ & $0.00{\scriptstyle\pm}0.00$ & $0.0{\scriptstyle\pm}0.0$ \\
\bottomrule
\end{tabular}
\caption{Amortized recovery on the supporting $172$-substrate synthetic library (mean $\pm$ standard deviation over three seeds).}
\label{tab:suzuki-synthetic}
\end{table}

\FloatBarrier

\section{LLM sparse-circuit discovery}
\label{app:circuits}
\begin{figure*}[!t]
\centering
\includegraphics{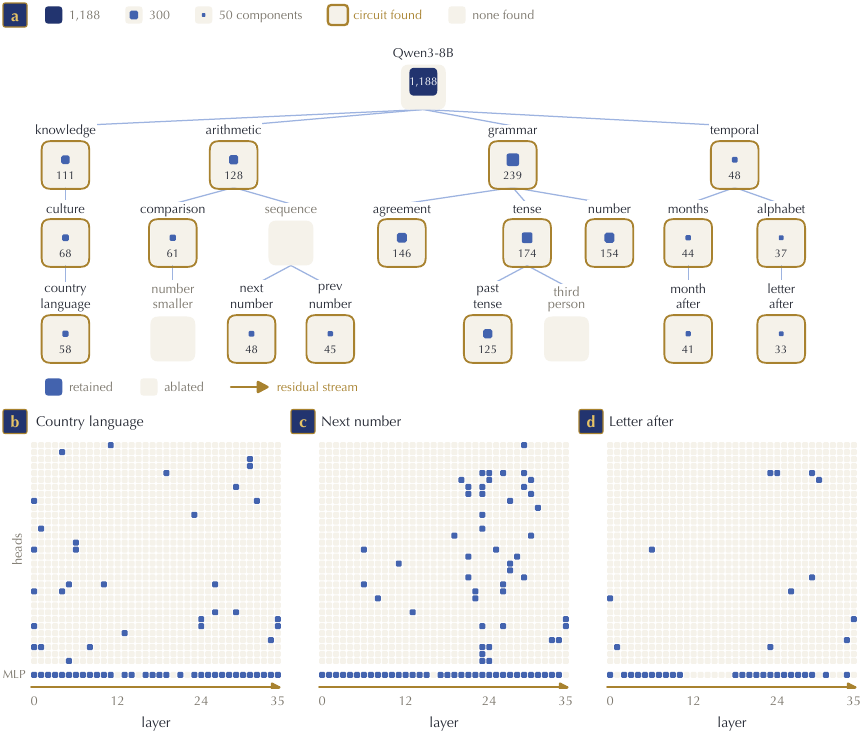}
\caption{Hierarchical sparse circuit discovery on Qwen3-8B. \textbf{(a)} Recovered portion of
the three-level behavior taxonomy. The number in each node is the median cardinality of its recovered 1-minimal circuits, and the tessera's area is proportional to it; gold rims mark nodes with a recovered circuit, and empty nodes tasks where no valid circuit was found within the evaluation budget. All recovered circuits reach a distributional faithfulness of about $0.80$. \textbf{(b to d)} Component supports for three leaf behaviors plotted on the Qwen3-8B module grid ($1{,}188$ components). Each leaf support is the
union of accepted canonical circuits across $24$ evaluation proposals, retaining
$63$, $81$, and $36$ modules, respectively (lapis tesserae mark retained attention heads and MLPs; empty cells mark ablated components).}
\label{fig:hiertree}
\end{figure*}

\begin{figure*}[tp]
\centering
\includegraphics[width=\textwidth]{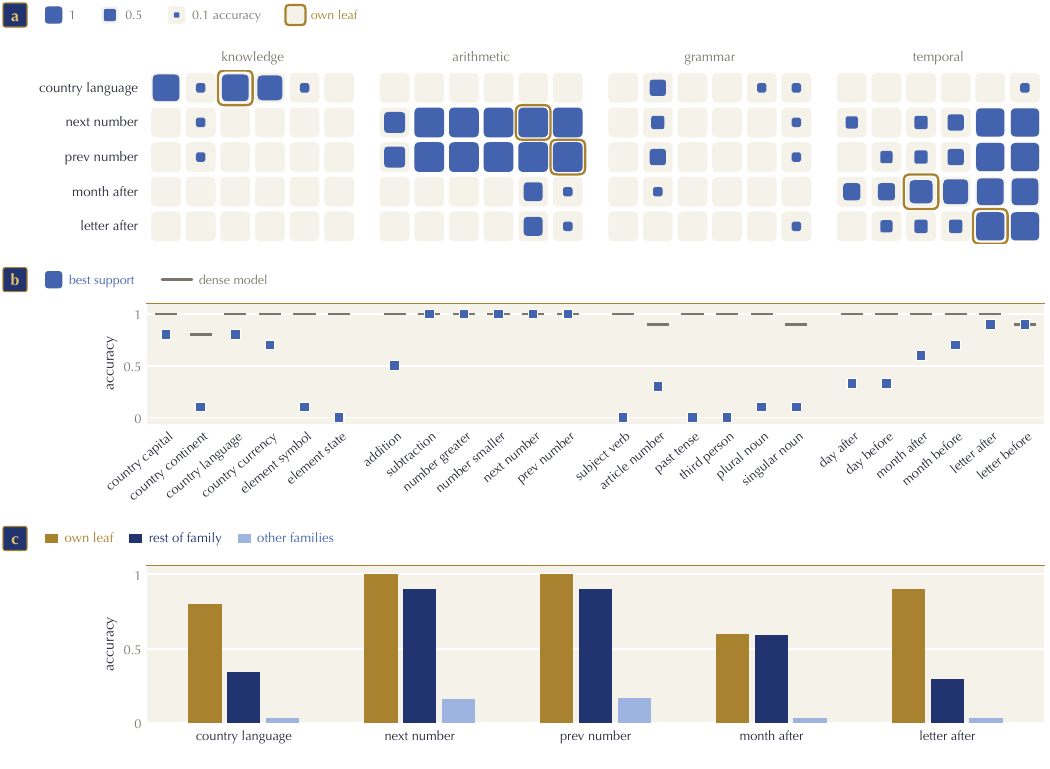}
\caption{Cross-task transfer and specialization of recovered Qwen3-8B leaf supports. \textbf{(a)} First-token
correct-versus-foil accuracy when activating each recovered leaf support in isolation (with all outside components patched with counterfactual activations) across every evaluation leaf task. Columns group leaves into the four top-level families, a tessera's area is the accuracy, empty cells indicate zero accuracy, and gold rims mark each support's own target leaf.
\textbf{(b)} Best accuracy achieved by any recovered support on each leaf vs.\ dense Qwen3-8B accuracy; the search recovered no grammar support.
\textbf{(c)} Mean accuracy of each support evaluated on its own target leaf ($0.86$), on sibling leaves within the same task family, and on leaves outside its family ($0.08$).}
\label{fig:hierconfusion}
\end{figure*}

In both language-model experiments (Qwen3-1.7B and Qwen3-8B), the transformer weights remain completely frozen and we search over subsets of attention heads (individual query-head slices feeding into the attention output projection) and MLP blocks (the full MLP output of a layer). For each clean probe prompt $x_i$, we pair it with a counterfactual prompt $\widetilde x_i$ drawn from another example of the same task. Ablating a component outside the retained subset $S$ replaces its clean activation on $x_i$ with its aligned activation recorded on $\widetilde x_i$.

Let $P_i$ denote the unmodified model's next-token distribution on prompt $x_i$, $P_{S,i}$ the distribution when only components in $S$ keep their clean activations, and $P_{\varnothing,i}$ the fully ablated distribution when every component receives its counterfactual activation. Across a probe set of $P$ prompts, define the baseline clean-to-corrupted KL divergence
\begin{equation}
\mathrm{KL}_{\varnothing}=\frac{1}{P}\sum_{i=1}^{P}
D_{\mathrm{KL}}\!\left(P_i\,\Vert\,P_{\varnothing,i}\right)
\label{eq:circuit-kl-empty}
\end{equation}
and the patched circuit divergence
\begin{equation}
\mathrm{KL}(S)=\frac{1}{P}\sum_{i=1}^{P}
D_{\mathrm{KL}}\!\left(P_i\,\Vert\,P_{S,i}\right).
\end{equation}
The raw circuit verifier accepts a proposed component subset $S$ when it recovers at least a fraction $\alpha=0.8$ of the clean model's distributional behavior:
\begin{equation}
s_0(S)=\mathbf 1\!\left[\mathrm{KL}(S)
\le (1-\alpha)\mathrm{KL}_{\varnothing}\right],\qquad \alpha=0.8.
\label{eq:circuit-verifier}
\end{equation}
We report normalized distributional faithfulness as $1-\mathrm{KL}(S)/\mathrm{KL}_{\varnothing}$. When a canonicalizer $\Phi$ is used (for the monotonicity audit in Appendix~\ref{app:nonmono} and support extraction in Qwen3-8B), $\Phi$ performs an EAP-ordered greedy pruning pass within $S$ where every deletion is verified by $s_0$, guaranteeing that $\Phi(S)$ is accepted by $s_0$ and 1-minimal.

\subsection{Qwen3-1.7B MMLU Bench10 and matched baselines}
\label{app:bench10}

\paragraph{Benchmark construction.}
To construct Bench10, we evaluate the dense Qwen3-1.7B model across all $57$ MMLU subjects (\texttt{cais/mmlu} test split) using zero-shot four-choice prompts ending with \texttt{Answer:}, and retain the ten subjects where the dense model prefers the correct answer token over the foil token on at least $92\%$ of discovery probes: marketing, astronomy, computer security, high-school biology, high-school chemistry, high-school computer science, international law, jurisprudence, management, and US foreign policy.

For each subject, the discovery split consists of $12$ questions (seed $0$, length $\le 700$ characters, stratified with $3$ questions per answer letter A, B, C, D), where the next question in cyclic order supplies the counterfactual activation prompt $\widetilde x_i$ and foil token. The held-out evaluation split selects $24$ disjoint questions per subject (seed $1$, $6$ per answer letter). Masks and baseline rankings are fixed from the discovery split before the held-out questions are loaded. A new verifier is then constructed on the held-out prompts, which recomputes $\mathrm{KL}_{\varnothing}$ on the held-out prompts to evaluate generalization.

\paragraph{Model and subset MDP.}
Qwen3-1.7B comprises $28$ layers with $16$ query heads and $1$ MLP per layer, yielding a ground set of $n = 28\times16+28=476$ components. We train a single conditional policy across all ten subjects. The policy's context vector is the subject's signed component-level Edge Attribution Patching (EAP) vector normalized by its maximum absolute entry, concatenated with the $476$-bit active component mask and the normalized step index ($953$ input dimensions in total). Starting from the full set $\mathcal E$ with horizon $H=476$, the actor (two $512$-unit $\tanh$ layers) selects single-component add, remove, or stop actions. Training uses our coverage return $\mathcal R_\mu$ ($p=0.7$, l2o baseline, mean-absolute rescaling) directly on the raw verifier $s_0$ without applying $\Phi$, running for $40$ updates of $K=12$ rollouts per subject ($4{,}800$ total verifier queries; see Table~\ref{tab:bench10-config}). At evaluation time, we sample $128$ proposals per subject from the frozen policy, deduplicate the accepted proposals, and verify 1-minimality via single-component deletion checks.

\begin{table}[tb]
\centering
\small
\setlength{\tabcolsep}{4pt}
\begin{tabular}{@{}p{0.34\columnwidth}p{0.57\columnwidth}@{}}
\toprule
Item & Setting \\
\midrule
Model & Qwen3-1.7B, bfloat16, eager attention, frozen weights \\
Components ($n=476$) & $448$ query heads and $28$ MLP blocks across $28$ layers \\
Verifier & raw faithfulness verifier $s_0$ at threshold $\alpha=0.8$ \\
Actor architecture & $953\!\to\!512\!\to\!512\!\to\!953$, $\tanh$ activations \\
Episode structure & add, remove, or stop; full-set start; horizon $476$ \\
Group return & coverage measure $\mathcal R_\mu$ ($p=0.7$), l2o baseline, mean-absolute scaling \\
Training budget & $K=12$, $10$ subject groups per update, $40$ updates \\
Optimization & Adam, learning rate $10^{-3}$, entropy weight $0.02$, max grad norm $1.0$ \\
Evaluation & $128$ frozen-policy proposals per subject; seed $0$ \\
Batching & $576$ prompt-mask sequences per forward pass on one NVIDIA L40 GPU \\
\bottomrule
\end{tabular}
\caption{Configuration for the Qwen3-1.7B MMLU Bench10 circuit discovery experiments.}
\label{tab:bench10-config}
\end{table}

\paragraph{Sparsity-matched baselines.}
For each subject, the smallest circuit recovered by MWRL among its $128$ evaluation proposals defines the target component budget $k$ for all ranking and mask baselines:
\begin{itemize}
\item \textbf{EAP}~\citep{syed2024} selects the top $k$ components by absolute attribution score.
\item \textbf{Magnitude} ranks attention heads by their output-projection weight blocks and MLPs by their down-projection weights, and selects the top $k$ components.
\item \textbf{Wanda}~\citep{sun2024} selects the top $k$ components by mean residual activation norm on the discovery prompts.
\item \textbf{HardConcrete}~\citep{louizos2018} optimizes a differentiable component mask for $300$ steps and retains the top $k$ components in the resulting mask.
\item \textbf{Random-$k$} averages across five uniformly sampled $k$-component subsets.
\item \textbf{ACDC (greedy)}~\citep{conmy2023} starts from the full component set, tests deletions in EAP order, and retains its own converged size.
\end{itemize}
The comparison matches sparsity; search cost differs by method and is not equalized.

\paragraph{Evaluation.}
Discovery-side evaluation reports the born-minimal rate, the number of distinct 1-minimal circuits, the successful-proposal rate, median circuit size, and mean distributional faithfulness. The held-out evaluation reports four-choice accuracy, correct-versus-foil accuracy, distributional faithfulness, and the verifier bit for every frozen mask. For MWRL it also reports the fraction of the recovered circuit family that remains sufficient and, for a representative circuit that remains sufficient, the fraction of its components whose deletion breaks sufficiency. The cross-subject matrix activates subject $j$'s representative circuit and measures four-choice accuracy on subject $i$'s $24$ held-out questions. Means and MWRL-minus-baseline differences in four-choice accuracy use $10{,}000$ paired bootstrap resamples over subjects.

While Table~\ref{tab:circuits} reports held-out distributional faithfulness ($\mathrm{KL}_{\text{held}}$) and discovery-split accuracy retention, we note that raw four-choice accuracy on held-out MMLU questions is low for all methods: at the matched budget a mask keeps roughly a tenth of the components, and answer accuracy measures performance of the full question-answering pipeline, most of which is ablated; the verifier tests distributional faithfulness, a different quantity. The $24$ held-out questions per subject are stratified by answer key, six per option, and chance is $0.25$. Accuracy ranges from $0.13$ for Random-$k$ and HardConcrete to $0.21$ for MWRL, compared with $0.64$ for the dense model. On this balanced answer key, the below-chance scores indicate a systematic preference for a plausible foil. This preference is as strong for random masks as for discovered circuits, pointing to heavily ablated decoding as its source. Four-choice accuracy does not reliably distinguish the methods at this budget: the MWRL-minus-EAP difference is $+0.02$, with a $95\%$ paired bootstrap interval covering zero. Table~\ref{tab:circuits} therefore reports distributional faithfulness, which does separate the method families on held-out questions, and discovery-prompt accuracy stays well above chance at a retention of $0.74$.

\begin{table}[tb]
\centering
\small
\setlength{\tabcolsep}{4pt}
\begin{tabular}{@{}p{0.34\columnwidth}p{0.57\columnwidth}@{}}
\toprule
Item & Setting \\
\midrule
Model & Qwen3-8B, bfloat16, eager attention, frozen weights \\
Components ($n=1{,}188$) & $1{,}152$ query heads and $36$ MLP blocks across $36$ layers \\
Verifier & raw faithfulness verifier $s_0$ at threshold $\alpha=0.8$ \\
Actor architecture & $2377\!\to\!512\!\to\!512\!\to\!2377$, $\tanh$ activations \\
Episode structure & add, remove, or stop; parent-support start; horizon $1{,}188$ \\
Group return & coverage measure $\mathcal R_\mu$ ($p=0.7$), l2o baseline, mean-absolute scaling \\
Training per pass & $K=10$, $3$ groups per update, $8$ updates \\
Optimization & Adam, learning rate $10^{-3}$, entropy weight $0.02$, max grad norm $1.0$ \\
Support extraction & union of canonicalized circuits across $24$ proposals per task \\
Batching & $512$ prompt-mask sequences per forward pass \\
Seed and hardware & seed $0$; one NVIDIA L40 GPU \\
\bottomrule
\end{tabular}
\caption{Configuration for the three-level hierarchical circuit search on Qwen3-8B.}
\label{tab:hier-config}
\end{table}

\subsection{Hierarchical circuit recovery on Qwen3-8B}
\label{app:hierarchical}
\begin{table}[tb]
\centering
\footnotesize
\setlength{\tabcolsep}{2pt}
\begin{tabular}{@{}lll@{}}
\toprule
Family & Subtask & Leaves \\
\midrule
Knowledge & Geography & country capital; country continent \\
          & Culture & country language; country currency \\
          & Chemistry & element symbol; element state \\
Arithmetic & Basic & addition; subtraction \\
           & Comparison & larger number; smaller number \\
           & Sequence & next number; previous number \\
Grammar & Agreement & subject-verb agreement; article number \\
        & Tense & past tense; third person \\
        & Number & plural noun; singular noun \\
Temporal & Days & day after; day before \\
         & Months & month after; month before \\
         & Alphabet & letter after; letter before \\
\bottomrule
\end{tabular}
\caption{Three-level behavior taxonomy ($4$ families, $12$ subtasks, and $24$ leaf behaviors) used for hierarchical circuit discovery on Qwen3-8B.}
\label{tab:hier-taxonomy-app}
\end{table}

To scale circuit discovery to Qwen3-8B ($36$ layers, $32$ query heads and $1$ MLP per layer, totaling $n=1{,}188$ components), we organize $24$ behaviors into a three-level hierarchy of $4$ top-level families, $12$ intermediate subtasks, and $24$ leaf tasks (Table~\ref{tab:hier-taxonomy-app}, where parent tasks pool the probe prompts of their children). Rather than restricting each child search to a single greedy parent circuit, which would prune away alternative routing pathways early, MWRL allows us to define each parent task's \emph{support} as the union of all 1-minimal circuits recovered across $24$ proposals for that parent task. Pass 1 searches over all $1{,}188$ components for the $4$ families; Pass 2 warm-starts from the family policy and restricts each subtask search to its parent family's support; and Pass 3 warm-starts from the subtask policy and restricts each leaf search to its parent subtask's support (Table~\ref{tab:hier-config}; since $K=10$, deletion credits are computed via exact inclusion--exclusion). We recover component supports for knowledge, arithmetic, and temporal tasks, but none for grammar (Figure~\ref{fig:hiertree}).

\paragraph{Taxonomy and probes.}
A subtask uses the pooled probes from its two leaves. A family uses the pooled probes from its three subtasks. Most leaves contain ten fixed probes. Element state contains eight, and each day relation contains six. The next prompt within each leaf supplies the cyclic counterfactual. Evaluation compares the first token of the correct answer with the first token of this counterfactual answer.

Figure~\ref{fig:hiertree} visualizes the recovered hierarchy and three representative leaf supports ($36$ to $81$ components out of $1{,}188$), while Figure~\ref{fig:hierconfusion} evaluates cross-task transfer by activating each nonempty leaf support in isolation on every leaf probe set for which the dense model exceeds $0.5$ correct-versus-foil accuracy. The supports retain much more accuracy on their own leaves than outside their task family: accuracy on a support's own leaf averages $0.86$ and accuracy outside its family averages $0.08$. Arithmetic supports transfer across most of their leaves; the recovered supports cover only part of the knowledge and temporal tasks. For a leaf without a recovered support, this finite search did not find a circuit, and a sparse circuit for that leaf may still exist.

\FloatBarrier

\section{Related work}
\label{app:related}

\subsection{Symbolic and oracle minimal-set discovery}
Enumerating minimal sufficient subsets is a classic problem across Boolean logic, model-based diagnosis, and combinatorics, appearing as prime implicant enumeration~\citep{quine1952,mccluskey1956,crama2011}, minimal diagnosis and Minimal Unsatisfiable Subset (MUS) extraction~\citep{reiter1987,marquessilva2013msmp,liffiton2016}, and monotone hypergraph transversal (dualization) algorithms~\citep{berge1989,eiter1995,eiter2003,fredman1996,murakami2014}. While white-box methods exploit explicit clause or hypergraph structure, interactive oracle algorithms extract minimal witnesses by querying a deterministic monotone predicate sequentially~\citep{bioch1995,junker2004,janota2016}, and recent RL approaches train graph neural policies to reduce solver calls during MUS enumeration~\citep{ijima2026hymuse}. These methods provide exact ground truth on enumerable benchmarks and inspire our problem formulation, whereas MWRL addresses how to train a parameterized policy from batch verifier bits and amortize minimal-witness discovery across contexts.

\subsection{Sufficient explanations and sparse circuits}
In explainable AI (XAI), sufficiency-based explanations seek minimal input feature subsets that preserve a model's prediction, including heuristic local search in Anchors~\citep{ribeiro2018} and Sufficient Input Subsets~\citep{carter2019}, as well as formal abductive explanations (prime implicants of classifier decisions)~\citep{shih2018,ignatiev2019}. Mechanistic interpretability poses the same question over internal network components, using activation patching~\citep{wang2023}, greedy faithfulness pruning in ACDC~\citep{conmy2023}, or fast gradient approximations in EAP~\citep{syed2024} to isolate functional subnetworks (distinct from model-compression pruning methods such as HardConcrete~\citep{louizos2018} and Wanda~\citep{sun2024}). As highlighted by~\citet{meloux2025}, neural circuit explanations are rarely unique: multiple incomparable subnetworks can implement the same behavior. MWRL turns this non-uniqueness from an ambiguity into the explicit learning target, recovering the family of minimal circuits rather than a single greedy solution.

\subsection{Diverse and relational reinforcement learning}
Standard policy-gradient methods such as REINFORCE~\citep{williams1992,sutton1999} and PPO~\citep{schulman2017} optimize expected return from sampled trajectories, and group-normalized RLVR estimators including GRPO~\citep{shao2024}, RLOO~\citep{ahmadian2024}, and MaxRL~\citep{tajwar2026maxrl} couple sampled rollouts through group normalization or baselines. Their credit does not depend on whether one terminal set contains another or repeats the same witness. Pass@$k$ policy optimization values a group of samples through its best member~\citep{walder2025pkpo}; under a binary reward it depends on the policy only through its success probability and falls under Theorem~\ref{thm:collapse}. A uniformity penalty over correct answers counters the indifference of group-relative objectives among them~\citep{lochab2026ucpo}; it targets a law proportional to a local reward, the verifier bit itself, and also falls under Theorem~\ref{thm:collapse}. When correctness is closed under supersets, a uniform law over correct answers gives each redundant superset the weight of a minimal answer. Similarly, intrinsic exploration bonuses~\citep{pathak2017}, Quality-Diversity (QD) algorithms~\citep{lehman2011,mouret2015}, MaxEnt RL~\citep{haarnoja2018}, and GFlowNets~\citep{bengio2021,malkin2022} promote diversity via action entropy, external behavior descriptors, or local reward matching. Exploration bonuses keep the objective proposal-separable asymptotically, so Theorem~\ref{thm:collapse} continues to apply to them; entropy and quality-diversity measure difference with a quantity supplied from outside the verifier, and a diversity measure that ignores containment counts redundant supersets as distinct discoveries. Entropy over the certified region is in fact maximized away from the antichain, since the up-set of a small witness is outweighed by the many larger sets above it (Figure~\ref{fig:failure}a). All our baselines use an entropy bonus (Table~\ref{tab:maxsat-config}), and the GFlowNet rows of Table~\ref{tab:maxsat} are the strongest member of this family that uses only the verifier bit. (Order-preserving GFlowNets~\citep{chen2024opgfn} incorporate pairwise order comparisons to concentrate on top-ranked candidates, whereas MWRL uses the subset order to recover the entire antichain of minimal elements.)

\subsection{Submodular RL}
Submodular RL~\citep{prajapat2024} and Global RL~\citep{desanti2024} generalize reinforcement learning to optimize non-additive, submodular utilities over state or trajectory visitation distributions supplied by the environment. MWRL shares this non-additive mathematical structure: our group coverage valuation $\mathcal R_\mu(U_K) = \mu(\bigcup_{i:y_i=1} \upset{S_i})$ is monotone and submodular in the set of accepted proposals, and our deletion credit $A_i$ is its exact discrete marginal gain. Crucially, however, MWRL requires no external submodular reward from the environment; it constructs this submodular structure internally from binary pass/fail verifier bits and the subset lattice order.

\subsection{Hypervolume and set-level objectives}
In multi-objective optimization, the hypervolume indicator measures the volume of objective space dominated by a Pareto approximation set, and each solution's exclusive hypervolume contribution measures the volume lost when that solution is removed~\citep{beume2007,vanmoffaert2013,guerreiro2021,zhang2023hv}. On the Boolean lattice $2^{\mathcal E}$, our geometric coverage measure $\mathcal R_\mu(U_K)$ admits an exact interpretation as a discrete hypervolume: if we map each accepted subset $S \subseteq \mathcal E$ to its binary indicator vector in $\{0,1\}^{\mathcal E}$ (minimizing each coordinate toward $0$) against the reference point $((1-p)^{-1}, \dots, (1-p)^{-1})$, then $S$ dominates all of its supersets, and the minimal-witness antichain $\Msf(c)$ is the exact Pareto front of $F(c)$. Splitting every coordinate at $1$ cuts the box $[0,(1-p)^{-1}]^{\mathcal E}$ into cells indexed by the subsets $C\subseteq\mathcal E$ of coordinates above $1$. The region dominated by $S$ covers the cell $C$ exactly when $S\subseteq C$, and that cell has volume $(p/(1-p))^{|C|}=(1-p)^{-|\mathcal E|}\mu(\{C\})$. The hypervolume of a group's successes is therefore $(1-p)^{-|\mathcal E|}\mathcal R_\mu(U_K)$, and each exclusive contribution is the deletion credit up to the same constant. Theorem~\ref{thm:rep} shows that any group value unchanged by deleting duplicate or redundant successes can be written as a function of the certified region; under the geometric measure its valuation coincides with a hypervolume. Whereas multi-objective hypervolume methods assume continuous objective vectors are observed directly, MWRL operates from a single binary verifier bit per rollout to uncover the discrete Pareto front $\Msf(c)$.

\section{Scope of application}
\label{app:scope}

Minimal-witness identification applies across a wide spectrum of scientific and computational problems where an outcome is tested by a binary verifier over subsets of candidate factors. Below we summarize the structural conditions for applying MWRL, catalog domain instantiations in Table~\ref{tab:scope}, and clarify which problems fall outside our formulation.

\begin{table*}[t]
\centering
\footnotesize
\setlength{\tabcolsep}{6pt}
\begin{tabular}{@{}p{0.185\textwidth}p{0.205\textwidth}p{0.31\textwidth}p{0.20\textwidth}@{}}
\toprule
Setting & Ground set $\mathcal{E}$ & Verifier bit $s(S)=1$ & Base measure $\mu$ \\
\midrule
\multicolumn{4}{@{}l}{\emph{Instantiated in this paper}}\\
MaxSAT witnesses & Boolean variables & setting variables in $S$ to true satisfies all clauses &
geometric ($p=0.7$) \\
Reaction conditions & condition dimensions varied from baseline & a tabulated
assignment within $S$ reaches the target yield & geometric ($p=0.7$) \\
Sparse circuits & attention heads and MLP blocks & patching all components outside $S$ retains $\ge \alpha$ faithfulness & geometric ($p=0.7$) \\
\midrule
\multicolumn{4}{@{}l}{\emph{Further settings admitted by the formulation}}\\
Coherent reliability & system components & simultaneous component failure causes system outage & component failure probabilities \\
Attack graphs & vulnerabilities and credentials & attacker reaches target privilege node &
exploit acquisition cost \\
Least privilege & access permissions & workload passes integration test suite & permission exposure weights \\
Why-provenance & database tuples & query over subset $S$ returns target answer tuple & tuple reliability weights \\
Candidate keys & relation attributes & functional dependency uniquely determines all tuples & attribute acquisition costs \\
Configuration faults & compiler flags or input lines & build or test execution reproduces crash & flag prevalence in corpus \\
Evidence minimization & retrieved passages or prompt segments & LLM answer remains unchanged & passage retrieval probabilities \\
Metabolic design & genes or media nutrients & cellular flux or growth exceeds viability threshold &
perturbation feasibility \\
Combination therapy & drug compounds and doses & phenotypic assay exceeds efficacy threshold &
safety or cost weights \\
Biomarker panels & diagnostic assays & classifier accuracy exceeds clinical threshold & per-assay clinical cost \\
Data attribution & training dataset subsets & target model behavior persists after retraining & corpus sampling distribution \\
\bottomrule
\end{tabular}
\caption{Representative instantiations of minimal-witness identification across computation, security, databases, and experimental science. Each row names the ground set over which proposals are formed, the event that the verifier bit indicates, and the information a domain supplies for choosing the base measure of Appendix~\ref{app:mc}.}
\label{tab:scope}
\end{table*}

\subsection{Conditions for applying MWRL}

A setting instantiates the problem of Section~\ref{sec:problem} when the following five conditions hold:
\begin{itemize}
\item \textbf{Subset structure:} Candidate proposals are subsets $S \subseteq \mathcal E$ of a finite ground set of controllable factors $\mathcal E$.
\item \textbf{Binary verifier interface:} The environment evaluates each proposed subset via a pass/fail bit $s_c(S) \in \{0,1\}$. Additional verifier calls permit the counting measure of Appendix~B, and the coverage measure needs the bit alone.
\item \textbf{Exact or approximate monotonicity:} Adding elements to a sufficient subset preserves sufficiency (or, when minor interference occurs, 1-minimal sets remain meaningful targets; see Appendix~\ref{app:nonmono}). Under a verifier that violates monotonicity, a canonicalizer returns 1-minimal sets, which need not be minimal.
\item \textbf{Plurality of mechanisms:} The ground-truth antichain $\Msf(c)$ contains multiple incomparable minimal witnesses that we wish to recover simultaneously. (When $|\Msf(c)|=1$, a simple cardinality penalty already suffices.)
\item \textbf{Costly verification over related contexts:} Calls to $s$ are expensive against the size of $2^{\mathcal{E}}$, and contexts arrive from a distribution over which a policy can amortize. For predicates exposed as formulas, the symbolic enumerators of Appendix~\ref{app:related} recover $\Msf$ without training.
\end{itemize}
The base measure $\mu$ encodes domain knowledge by assigning weights to sets in the lattice, and these weights determine which incomparable witnesses an optimal finite-budget planner selects (Appendix~\ref{app:vi}). Many domains naturally supply an informative prior for $\mu$: in reliability engineering, $\mu$ is the physical product distribution of component failures; in retrieval-augmented generation, $\mu$ is the retriever's inclusion law; and in security or clinical panels, $\mu$ can weight elements by their acquisition cost or safety risk.

\subsection{Settings where the antichain is already the classical target}
In several classical disciplines, the minimal-witness antichain $\Msf$ and the up-set coverage measure $\mathcal R_\mu(U)$ are the exact quantities practitioners already seek:
\begin{itemize}
\item \textbf{Reliability and attack graphs:} In coherent reliability systems~\citep{barlow1975}, minimal cut sets form an antichain whose up-set union under the component failure measure $\mu$ equals the exact system failure probability, and the contribution of one cut set, what that union loses when the cut set is removed, is our deletion credit $A_i$; these are the coverage measure of Section~\ref{sec:method} and its deletion credit, computed by the identities of Appendix~\ref{app:mc}. In cybersecurity attack graphs~\citep{sheyner2002}, minimal exploit chains form an antichain where patching one minimal chain leaves all incomparable chains vulnerable, making full antichain recovery mandatory for defense.
\item \textbf{Database provenance and epidemiology:} In database theory, \emph{why-provenance} defines the minimal witness basis of a query result as the antichain of minimal tuple subsets that produce it~\citep{buneman2001}, and candidate keys form the antichain of minimal attribute sets that functionally determine a relation. Similarly, Rothman's sufficient-component cause model in epidemiology defines each disease pathway as a minimal sufficient set of component causes~\citep{rothman1976}, paralleling minimal hitting sets in model-based diagnosis~\citep{reiter1987}.
\end{itemize}
These fields supply ground truth on enumerable instances and independent evidence that the coverage measure is the quantity their practitioners compute. Their predicates are symbolic and are already served by enumeration.

\subsection{Settings that require a learned policy}
When the verifier requires executing a complex simulation, compilation, neural forward pass, or wet-lab assay, symbolic dualization is inapplicable and MWRL enables amortized discovery:
\begin{itemize}
\item \textbf{Configuration debugging and least-privilege security:} Isolating irreducible combinations of compiler flags or input lines that trigger a software crash (generalizing single-path delta debugging~\citep{zeller2002} to multiple distinct fault modes), or identifying alternative minimal permission sets that allow a cloud workload to pass integration tests while minimizing attack surface.
\item \textbf{Evidence minimization in RAG and long-context LLMs:} Identifying minimal subsets of retrieved documents or prompt exemplars sufficient to sustain an LLM's answer (where minor context distraction places the verifier in the approximately monotone regime of~\citet{liu2024lost} and Appendix~\ref{app:nonmono}). Recovering multiple incomparable minimal evidence sets quantifies the evidential redundancy backing a model's claim.
\item \textbf{Perturbation biology and combination therapy:} Discovering minimal gene sets, nutrient media, or transcription factor cocktails sufficient for cellular viability, computable in silico by flux balance analysis~\citep{orth2010}, or for reprogramming (such as the classic 1-minimality screen that isolated the four Yamanaka factors from a pool of $24$ candidates~\citep{takahashi2006}), as well as identifying alternative minimal drug combinations that reach an efficacy threshold so clinicians can select the least toxic regimen.
\item \textbf{Training data attribution:} Identifying minimal subsets of training examples sufficient to induce a specific model behavior~\citep{koh2017}, where amortizing a conditional policy across target behaviors avoids prohibitive per-behavior retraining sweeps.
\end{itemize}

\subsection{Settings outside the formulation}
Finally, three classes of problems fall outside our formulation: (1) budgeted submodular maximization tasks (such as sensor placement or influence maximization~\citep{prajapat2024}) that seek a single cardinality-constrained subset maximizing a continuous utility; (2) minimum-cardinality problems (such as finding the single smallest unsatisfiable core) that deliberately discard larger minimal witnesses; and (3) continuous optimization settings where the verifier returns graded real-valued rewards that carry fine-grained trade-off information beyond a binary sufficiency threshold.

\section{Ethical statement}
\label{app:ethics}

Minimal-witness identification is a general algorithmic framework for isolating irreducible sufficient subsets from binary verifier feedback. In experimental sciences such as chemistry and biology (Appendix~F), replacing a benchmark lookup table with an automated laboratory assay allows an MWRL policy to propose minimal condition or perturbation sets directly. Because MWRL interacts with the domain solely through a binary pass/fail bit, all domain-specific safety constraints, reagent restrictions, and ethical oversight must be enforced within the design of the verifier and the definition of the ground set $\mathcal E$. MWRL can accelerate search within a sanctioned experimental space, but it cannot invent new actions outside $\mathcal E$ or observe unthresholded assay quantities.

Furthermore, any recovered witness family is guaranteed to be sufficient only with respect to the specific verifier and probe distribution used during discovery. For circuits, we test whether acceptance on the discovery probes extends to held-out questions from the same subject; the recovered circuits fail to pass the verifier again on those questions, so sufficiency depends on the verifier and the probe distribution (Appendix~G.1). Consequently, candidate mechanisms discovered in chemical or biological applications must be independently validated on target populations before being interpreted as causal mechanisms. Conversely, recovering the complete antichain of minimal witnesses provides an important safeguard against premature scientific closure~\citep{meloux2025}: rather than reporting a single smallest witness as though it were the unique explanation, MWRL exposes alternative pathways and backup mechanisms, allowing practitioners to choose safer or less costly interventions.

We used AI tools mainly for language polishing, grammar correction, and novelty checks
against prior work.